\documentclass[twoside,11pt]{article}

\usepackage[abbrvbib, preprint]{jmlr2e}
\usepackage{amsmath}
\usepackage{amssymb}
\usepackage{amsfonts}
\usepackage{mathtools}
\usepackage{bm}

\usepackage[table,xcdraw]{xcolor}
\definecolor{cambridgeblue}{RGB}{163, 193, 173}
\definecolor{CambridgeSuperLightBlue}{RGB}{204, 227, 245}
\definecolor{CambridgeSuperLightOrange}{RGB}{252, 229, 205}
\definecolor{CambridgeSuperLightPurple}{RGB}{142,37,141}
\definecolor{CambridgeSuperLightGreen}{RGB}{217, 234, 211}
\definecolor{CambridgeDarkLightBlue}{RGB}{84,138,182}
\definecolor{CambridgeLightBlue}{RGB}{106,173,228}
\definecolor{CambridgeLightOrange}{RGB}{239,189,71}
\definecolor{CambridgeLightGreen}{RGB}{168,180,0}
\definecolor{CambridgeLightPurple}{RGB}{181,147,155}
\definecolor{CambridgeLightTeal}{RGB}{163,193,173}
\definecolor{CambridgeCoreBlue}{RGB}{0,115,207}
\definecolor{CambridgeCoreOrange}{RGB}{227,114,34}
\definecolor{CambridgeCoreBurgundy}{RGB}{144,27,59}
\definecolor{CambridgeCoreGreen}{RGB}{88,166,24}
\definecolor{CambridgeCorePurple}{RGB}{142,37,141}
\definecolor{CambridgeCoreTeal}{RGB}{0,179,190}
\definecolor{CambridgeDarkBlue}{RGB}{0,62,114}
\definecolor{CambridgeDarkOrange}{RGB}{200,78,0}
\definecolor{CambridgeDarkGreen}{RGB}{67,81,37}
\definecolor{CambridgeDarkPurple}{RGB}{65,45,93}
\definecolor{CambridgeDarkTeal}{RGB}{21,101,112}
\definecolor{CambridgeWhitePrint}{RGB}{255,255,255}
\definecolor{CambridgeBlackPrint}{RGB}{0,0,0}
\definecolor{CambridgeYellow}{RGB}{220,20,6}

\definecolor{mygrey}{RGB}{170, 170, 170}
\definecolor{mylightgrey}{RGB}{230, 230, 230}

\usepackage{tikz}
\usetikzlibrary{automata, positioning, arrows.meta, decorations.markings, calc}
\tikzset{
    middle arrow/.style={
        decoration={markings,
            mark=at position #1 with {\arrow{>}}}, % Choose arrow style here
        postaction={decorate},
        thick
    }
}

\usepackage{aliascnt}

\usepackage{subcaption}
\usepackage{caption}
\usepackage{pgfplots}
\usepgfplotslibrary{groupplots}
\pgfplotsset{compat=1.18}

\usepackage{float}

\usepackage{enumitem}
\newaliascnt{definition}{theorem}
\newtheorem{definition}[definition]{Definition}
\aliascntresetthe{definition}

\newaliascnt{assumption}{theorem}
\newtheorem{assumption}[assumption]{Assumption}
\aliascntresetthe{assumption}

\newaliascnt{lemma}{theorem}
\newtheorem{lemma}[lemma]{Lemma}
\aliascntresetthe{lemma}

\newaliascnt{proposition}{theorem}
\newtheorem{proposition}[proposition]{Proposition}
\aliascntresetthe{proposition}

\newcommand{\xset}{\mathcal{X}}

\newcommand{\qset}{\mathcal{Q}}

\newcommand{\aset}{\mathcal{A}}
\newcommand{\sset}{\mathcal{S}}

\newcommand{\polset}{\mathcal{P}}

\newcommand{\real}{\mathbb{R}}

\newcommand{\region}{\graval}
\newcommand{\pol}{\pi}
\newcommand{\update}{\mu}
\newcommand{\start}{\sigma}

\newcommand{\qval}{q}

\newcommand{\qopt}{\qval_{\best{\pol}}}

\newcommand{\grpol}{P}
\newcommand{\graval}{Q}

\newcommand{\discount}{\gamma}
\newcommand{\lrate}{\alpha}

\newcommand{\best}[1]{{#1}_*}

\newcommand{\kpo}{{{k+1}}}

\newcommand{\nth}{{\textup{th}}}

\usepackage{lastpage}
\jmlrheading{23}{2026}{1-\pageref{LastPage}}{1/21; Revised 5/22}{9/22}{21-0000}{Octave Oliviers and Glenn Vinnicombe}

\ShortHeadings{Convergence of MC-O-PI with uniform updates}{Oliviers and Vinnicombe}
\firstpageno{1}

\begin{document}

%\title{Convergence of Monte Carlo Optimistic Policy Iteration with uniform updates}
\title{Convergence of Monte Carlo Optimistic Policy Iteration: Beyond Uniform State-Action Updates}

\author{\name Octave Oliviers \email ofao2@cam.ac.uk \\
       \addr Department of Engineering\\
       University of Cambridge\\
       Cambridge, UK
       \AND
       \name Glenn Vinnicombe \email gv103@cam.ac.uk \\
       \addr Department of Engineering\\
       University of Cambridge\\
       Cambridge, UK}

\editor{My editor}

\maketitle

\begin{abstract}%
The asymptotic behaviour of Monte Carlo optimistic policy iteration (MC-O-PI) is a long-standing open question. When the model of the environment is unknown, as is common in practice, the only known condition that guarantees convergence to optimality is impractical. In its canonical form, this condition requires that the episodes used for policy evaluation be initialised uniformly over the entire state-action space. This paper strictly relaxes that requirement. Specifically, we prove that initial-visit MC-O-PI converges to optimality even when updates are uniform only over the actions within each state. This allows episodes to start in different states at arbitrary frequencies; a realistic implementation when the state space is large or unknown but the action space in each state is manageable. The proof departs from the classical analysis of Tsitsiklis whose central commutativity argument no longer applies when states are updated at different frequencies. Instead, we first show that the mean-field dynamics of MC-O-PI generate monotonically improving policies when updates are uniform over the actions in each state, and then prove that noise cannot consistently prevent this improvement by extending the lock-in argument of the combined stability-ODE method. This approach suggests a new way to study optimistic policy-iteration algorithms in general.
\end{abstract}

\begin{keywords}
reinforcement learning, optimistic policy iteration, Monte Carlo control, convergence analysis, stability-ODE method
\end{keywords}

% \newpage
\section{Introduction}

% Many variants exist of the PI scheme
% on-off-policy
% used for planning (MCTS)
% bootstrapping
% optimism

% Many reinforcement-learning algorithms can be understood as instances of policy iteration, 
% alternating between estimating the action values of a policy and improving that policy using these estimates. This scheme is fundamental; it can learn an optimal policy in an unknown MDP using only sampled rewards and transitions. Optimistic reinforcement learning algorithms improve their policy before policy evaluation
% has converged. This makes them computationally attractive, but also difficult to analyse:
% the value estimate is stochastic and small
% changes near policy boundaries can drastically alter the subsequent dynamics.

Many reinforcement-learning algorithms follow the policy-iteration paradigm, which alternates between 
estimating the action values of a policy 
and improving the policy using those estimates. 
This framework is fundamental because it can learn an optimal policy for an unknown Markov decision process (MDP) using only sampled rewards and transitions.
Classical policy iteration improves the policy once the action-value estimates have converged.
In contrast, optimistic variants improve the policy after partial evaluation.
This reduces computation as it requires fewer evaluation steps between each improvement step.
However, it complicates the analysis because 
% the value estimates remain stochastic and 
small perturbations 
% near policy boundaries
to the value estimates
can drastically affect the policy and alter the subsequent dynamics.

Convergence guarantees for optimistic policy-iteration algorithms remain limited, even in the tabular setting.
Algorithms that use one-step bootstrapped estimates for evaluation, such as $Q$-learning, converge to optimality under standard asynchronous stochastic-approximation conditions.
When using multi-step bootstrapped estimates, convergence to optimality requires additional structure, such as combining several multi-step approximations~\citep{Harutyunyan2016QCorrections, Munos2016SafeLearning} or using a look-ahead mechanism~\citep{Winnicki2023OnEvaluation}.
Monte Carlo Optimistic Policy Iteration (MC-O-PI), which uses full-episode returns for evaluation, remains poorly understood. Despite its simplicity, the asymptotic behaviour of this foundational algorithm has been an open question for three decades~\citep{Sutton1999OpenLearning}.
Convergence to optimality is guaranteed when the MDP exhibits a feedforward structure \citep{Wang2022OnLearning, Lubars2021OptimisticStructure}
or when the discount factor is smaller than $1/2$ \citep{Delattre2025MarkovAlgorithm},
but, when the structure of the MDP is unknown, the only existing condition requires that all state-action pairs are updated at the same frequency \citep{Tsitsiklis2002OnIteration, Chen2018OnProblem, Liu2021OnStarts}.

% The requirement of uniform updating over all state-action pairs is
% often unrealistic in large problems,
% where the state distribution may be generated by the
% environment or by a search procedure. 
% In contrast, once a state has been selected, sampling
% uniformly among its available actions is usually straightforward.
% This paper proves that this weaker condition is in fact sufficient. That is, we show that initial-visit
% MC-O-PI converges almost surely to the optimal action values when updates are uniform
% only over the actions within each state. Different states may be updated with arbitrary
% frequencies. This strictly extends the previous convergence guarantees and makes MC-O-PI practical in settings where uniform state sampling is infeasible but uniform action sampling is easy, including when function approximation is being used over a large state-space.

Updating all pairs as frequently requires that episodes are initialised uniformly over the entire state-action space.
This condition is unrealistic in practice, especially when the state space is large.
However, once the initial state of an episode is selected, uniformly sampling the initial action is usually straightforward.
This paper proves that this weaker condition is sufficient.
Specifically, initial-visit MC-O-PI converges almost surely to the optimal action values when updates are uniform over the actions within each state.
Different states may thus be updated at arbitrary frequencies.
This result strictly extends the previous convergence guarantees 
and applies in settings where uniform state sampling is not feasible but uniform action sampling is easy.

The proof of this new theorem, stated as \autoref{thm: convergence mcopi uniform} below, requires new tools. 
Previous work demonstrates convergence by studying the evolution of the value estimates. The arguments rely centrally on a commutativity property of the Bellman operator that fails when updates are not uniform over all state-action pairs.
Instead, we track the evolution of the policy and find that the mean-field dynamics of MC-O-PI generate monotonically improving policies when updates are uniform over the actions within each state.
We then show that stochastic perturbations cannot systematically defeat this improvement because,
at any time step, a fixed positive constant lower bounds for the probability that ``if the current greedy policy is suboptimal, the next one is strictly better; if it is optimal, it remains optimal forever.''
As a result, almost surely, solutions eventually reach the optimal policy and remain there, causing the value estimate to converge to the corresponding optimal action values.

This paper is organised as follows. Section~2 introduces finite Markov decision
processes and formulates MC-O-PI as a stochastic difference inclusion. Section~3 proves the
main convergence theorem, first for the mean-field dynamics and then for the stochastic
algorithm. Section~4 discusses the practical implications of action-wise uniform updates
and possible extensions to related optimistic methods. 
% Section~5 concludes.
Unless stated otherwise, operations and relations between functions are elementwise. For instance, given functions $f, g : \mathcal X \to \mathcal Y$ we write
\begin{align*}
&(f + g)(x) = f(x) + g(x), \qquad \forall \, x \in \xset ,
    \\
&f \le g \iff f(x) \le g(x), \qquad \forall \, x \in \xset .
\end{align*}
Moreover, we compute norms as $\|f\| := \sup_{x \in \xset} |f(x)|$ 
and adopt the convention $\inf \varnothing = \infty$.
\section{Preliminaries}
\label{sec: mcopi}

% Monte Carlo optimistic policy iteration (MC-O-PI) is a reinforcement learning algorithm that can learn the optimal policy of a finite Markov decision process (MDP) without knowing the underlying parameters of that MDP.
% This section briefly introduces MDPs, then presents the stochastic difference inclusion that underlies MC-O-PI.

\subsection{Finite Markov decision processes}

% Reinforcement learning studies the interaction between an agent and an environment.
% ...
% this can be modelled using finite MDP
% ...
% At each time step \(t\), the agent observes a state \(s_t \in \mathcal S\) and selects an action \(a_t \in \mathcal A(s_t)\) according to a policy \(\pi(a_t | s_t)\).
% The environment then produces a reward \(r_t\) and a new state \(s_{t+1}\) according to its transition dynamics $p(s_{t+1}, r_t | s_t, a_t)$.
% % 
% The action value of a policy \(\pi\) for any pair $(s,a)$ is the expected discounted return obtained by taking action \(a\) in state \(s\), and then following policy \(\pi\):
% \[
% q_\pi(s,a)
% =
% \mathbb E \left[
% \sum_{t=0}^{\infty} \gamma^t r_t
% \;\middle|\;
% s_0 = s,\
% a_0 = a,\
% a_t \sim \pi(\cdot \mid s_t),\
% (s_{t+1},r_t) \sim p(\cdot,\cdot|s_t,a_t), \
% \forall\, t\ge 1
% \right].
% \]
% Under standard finite-MDP assumptions,
% there exists an optimal policy $\pi_*$ that satisfies $q_{\pi_*}(s,a) = \max_\pi q_\pi(s,a)$ for all $(s,a)$.
% Moreover, $\pi_*$ can be chosen greedily with respect to $q_{\pi_*}$, meaning
% % $q_{\pi_*}(s,\pi_*(s)) = \max_a q_{\pi_*}(s,a)$ for all $s$.
% $\pi_*(s) \in \arg\max_a q_{\pi_*}(s,a)$ for all $s$.

A Markov decision process evolves over discrete steps.
% with finite state space $\sset$ and finite action space $\aset(s)$ for each state $s \in \sset$.
At time \(t\), the agent observes a state \(s_t \in \mathcal S\) and selects an action \(a_t \in \mathcal A(s_t)\) according to a policy \(\pi(a_t | s_t)\).
The environment then produces a reward \(r_t\) and transitions to a new state \(s_{t+1}\) according to its dynamics $p(s_{t+1}, r_t | s_t, a_t)$.
% Taking action $a$ in state $s$ yields a finite reward $r(s,a) \in \real$ 
% and transitions to a new state $s'$ with probability $p(s' \mid s, a)$.
% The discount factor is $\discount \in [0,1]$.
If a policy $\pol$ is deterministic in state $s$, then $\pol(s)$ directly refers to the action $a$ where $\pol(a | s) = 1$.
We denote the set of deterministic policies by $\polset$.

For any policy $\pi$,
the state-value function $v_\pol : \sset \to \real$ 
measures the expected discounted return when following policy $\pol$ starting in state $s$:
\begin{align*} 
    v_\pol(s)
    :=
    % \E_{p} \big[ r(s_0, a_0) + \discount r(s_1, \pol)
    % \\
    % &\hspace{4.5cm} + \discount^2 r(s_2, \pol) + \dots \mid s_0 = s, a_0 = a \big] .
    \mathbb E_{p, \pi} \left[ \sum_{t=0}^{\infty} \gamma^t \, r_t  \ \bigg| \ s_0 = s \right] .
\end{align*}
The action-value function $q_\pol : \sset \times \aset \to \real$ measures the expected discounted return when taking action $a$ in state $s$ and then following policy $\pol$:
\begin{align*} 
    q_\pol(s,a)
    :=
    % \E_{p} \big[ r(s_0, a_0) + \discount r(s_1, \pol)
    % \\
    % &\hspace{4.5cm} + \discount^2 r(s_2, \pol) + \dots \mid s_0 = s, a_0 = a \big] .
    \mathbb E_{p, \pi} \left[ \sum_{t=0}^{\infty} \gamma^t \, r_t  \ \bigg| \ s_0 = s, a_0 = a \right] .
\end{align*}

A policy $\pi$ is at least as good as another policy $\pi'$ if $v_{\pi} \ge v_{\pi'}$ or, equivalently, $q_{\pi} \ge q_{\pi'}$.
We denote this relation by $\pi \succeq \pi'$.
If, additionally, there exists a state where $v_{\pi}(s) > v_{\pi'}(s)$, then $\pi$ is better than $\pi'$, denoted $\pi \succ \pi'$.

% For a policy \(\pi\), the action-value function $q_\pi$ assigns to each pair $(s,a)$ the expected discounted return obtained by taking action \(a\) in state \(s\), and then following policy \(\pi\):
% \[
% q_\pi(s,a)
% =
% \mathbb E_{\pi} \left[
% \sum_{t=0}^{\infty} \gamma^t r_t
% \;\middle|\;
% s_0 = s,\
% a_0 = a
% % a_t \sim \pi(\cdot \mid s_t),\
% % (s_{t+1},r_t) \sim p(\cdot,\cdot|s_t,a_t), \
% % \forall\, t\ge 1
% \right].
% \]

% Under standard finite-MDP assumptions,
% there exists an optimal policy $\pi_*$ that satisfies $q_{\pi_*}(s,a) = \max_\pi q_\pi(s,a)$ for all $(s,a)$.
% Moreover, $\pi_*$ may be chosen to be deterministic and greedy with respect to $q_{\pi_*}$, meaning
% $\pi_*(s) \in \arg\max_{a \in \mathcal A(s)} q_{\pi_*}(s,a)$ for all $s$.

This paper considers discounted or episodic MDPs,
meaning $\gamma \in [0,1)$
or episodes simulated with any stationary policy reach a terminal state almost surely.
The Policy Improvement Theorem then allows us to compare policies using only the action-value function of one of them~\citep{Howard1960DynamicProcesses}.
We rely on a strict version of that theorem.

% \label{thm: policy improvement theorem}\begin{theorem}
% [Policy Improvement Theorem]
% If policies $\pi$ and $\pi'$ satisfy
% \begin{align}
% % \label{eq: strict improvement theorem}
% q_\pi(s,\pi(s)) \ge q_\pi(s,\pi'(s))
% \end{align}
% for every state $s \in \sset$,
% then policy $\pi$ is at least as good as policy $\pi'$.
% If the inequality is strict in at least one state, 
% then policy $\pi$ better than policy $\pi'$.
% \end{theorem}

\begin{lemma}
\label{thm: strict PIT}
Given policies $\pi$ and $\pi'$, suppose that
\begin{align}
\label{eq: PIT condition}
q_\pi(s,\pi'(s)) \ge q_\pi(s,\pi(s))
% \qquad \forall s\in\mathcal S .
\end{align}
for every state $s\in\mathcal S$.
Then \(v_{\pi'} \ge v_\pi\). Moreover, if \(q_{\pi'}\neq q_\pi\), then the improvement is strict, meaning
there exists a state \(s\in\mathcal S\) such that
\[
v_{\pi'}(s)>v_\pi(s).
\]
\end{lemma}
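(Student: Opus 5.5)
We will work through Bellman operators. For a deterministic policy $\pi$, define $T_\pi$ acting on functions $v:\sset\to\real$ by
\[
(T_\pi v)(s) = \mathbb E\big[r_0 + \gamma v(s_1)\mid s_0=s, a_0=\pi(s)\big] .
\]
Then $v_\pi$ is the unique fixed point of $T_\pi$, and $q_\pi(s,a)=\mathbb E[r_0+\gamma v_\pi(s_1)\mid s_0=s,a_0=a]$. In particular $q_\pi(s,\pi(s)) = v_\pi(s)$. In the discounted case $T_\pi$ is a $\gamma$-contraction in the sup norm. In the episodic case, with the terminal value fixed at zero, every deterministic policy is proper, so $T_{\pi'}^n v \to v_{\pi'}$ for every bounded $v$. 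This is the standard stochastic-shortest-path fact for finite MDPs in which all stationary policies terminate almost surely.

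\textbf{Weak part.} Condition \eqref{eq: PIT condition} reads $T_{\pi'} v_\pi = q_\pi(\cdot,\pi'(\cdot)) \ge q_\pi(\cdot,\pi(\cdot)) = v_\pi$. Since $T_{\pi'}$ is monotone, iterating gives
\[
T_{\pi'}^n v_\pi \ge T_{\pi'}^{n-1} v_\pi \ge \dots \ge v_\pi .
\]
Letting $n\to\infty$ yields $v_{\pi'}\ge v_\pi$.

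\textbf{Strict part.} We argue by contrapositive: assume $v_{\pi'} = v_\pi$ and show $q_{\pi'}=q_\pi$. Both action-value functions are given by the same formula
\[
q(s,a)=\mathbb E\big[r_0+\gamma v(s_1)\mid s_0=s,a_0=a\big],
\]
with $v=v_{\pi'}$ and $v=v_\pi$ respectively. If these state values coincide, the two action-value functions coincide. Hence $q_{\pi'}\neq q_\pi$ forces $v_{\pi'}\neq v_\pi$, and combined with $v_{\pi'}\ge v_\pi$ this gives a state $s$ with $v_{\pi'}(s)>v_\pi(s)$.

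The same argument covers stochastic $\pi'$ if we replace $\pi'(s)$ by the expectation over $\pi'(\cdot\mid s)$. The hypothesis is written for deterministic policies, so we keep that reading.

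\textbf{Main obstacle.} The only delicate step is the limit $T_{\pi'}^n v_\pi\to v_{\pi'}$ in the undiscounted episodic case. There we cannot use a contraction and must instead rely on properness of $\pi'$ under the standing assumption that every stationary policy terminates almost surely. Concretely, $T_{\pi'}^n v_\pi(s)$ equals the expected reward over the first $n$ steps plus the expectation of $v_\pi(s_n)$, with $v_\pi$ set to zero at the terminal state. The tail term is bounded by $\|v_\pi\|\,\Pr(\text{not terminated by } n)$, which tends to $0$. The reward part converges to $v_{\pi'}(s)$ by dominated convergence, using finiteness of expected total absolute reward for proper policies. This integrability is where the finiteness of the state space enters.
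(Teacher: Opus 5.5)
Your proof is correct and matches the paper's. The strict part is exactly the paper's contrapositive argument: $v_{\pi'} = v_\pi$ forces $q_{\pi'} = q_\pi$ through the one-step expression $q(s,a)=\mathbb E[r_0+\gamma v(s_1)\mid s_0=s,a_0=a]$. The only difference is that you re-derive the weak inequality $v_{\pi'} \ge v_\pi$ from the monotonicity of $T_{\pi'}$ and the convergence $T_{\pi'}^n v_\pi \to v_{\pi'}$, including properness in the undiscounted episodic case, whereas the paper simply cites the Policy Improvement Theorem.
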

\begin{proof}
By the Policy Improvement Theorem, inequality \eqref{eq: PIT condition} implies that $v_{\pi'} \ge v_\pi$.
Suppose that $v_{\pi'} = v_{\pi}$. Then for every pair $(s,a)$,
\begin{align*}
q_{\pi'}(s,a)
% = r(s,a) + \gamma \sum_{s' \in \mathcal S} p (s' \mid s,a) v_{\pi'}(s')
= \sum_{s'} \sum_{r} p (s', r \mid s,a) \big( r + \gamma v_{\pi'}(s') \big)
% = r(s,a) + \gamma \sum_{s' \in \mathcal S} p (s' \mid s,a) v_{\pi}(s')
= \sum_{s'} \sum_{r} p (s', r \mid s,a) \big( r + \gamma v_{\pi}(s') \big)
= q_{\pi}(s,a) .
\end{align*}
Thus, if $q_{\pi'} \neq q_\pi$, there exists at least one state where $v_{\pi'}(s) > v_{\pi}(s)$, and thus $\pi' \succ \pi$.
\end{proof}

Classical results guarantee that, for discounted or episodic MDPs, 
there exists an optimal policy $\pi_*$ that satisfies $q_{\pi_*}(s,a) = \max_\pi q_\pi(s,a)$ for all pairs $(s,a)$.
Moreover, $\pi_*$ may be chosen to be deterministic and greedy with respect to $q_{\pi_*}$, meaning
$\pi_*(s) \in \arg\max_{a \in \mathcal A(s)} q_{\pi_*}(s,a)$ for all $s$.
The converse is also true: if a deterministic policy is greedy with respect to its own action values, then it must be optimal.
This essential property allows us to verify whether a policy is optimal by inspecting its action values, rather than comparing it with all other policies.

\subsection{Monte Carlo optimistic policy iteration}

In many real-world applications, the transition dynamics and reward functions of the underlying MDP are unknown. Learning the optimal policy therefore requires interacting with the system to estimate expected returns from accumulated experience.
One fundamental framework for this is \textit{policy iteration}.
This approach tracks an action-value estimate $q$ and a policy $\pol$ and refines them by alternately evaluating the current policy and improving it.
This cycle is intended to converge to a policy that is greedy with respect to its own action values, and therefore optimal.

Monte Carlo optimistic policy iteration implements the policy-iteration framework by using 
\textit{Monte Carlo policy evaluation}, which simulates complete episodes of the MDP to update $q$ towards the action values of $\pol$,
and \textit{optimistic policy improvement}, which updates $\pol$ to obtain higher returns as anticipated by $q$.

Specifically, consider the function space $\qset \coloneqq \{ \mathcal S \times \mathcal A \to \mathbb R \}$.
% and let $k$ be an iteration index.
MC-O-PI generates a sequence $(q_k)_{k \ge 0}$ in $\qset$ as follows.
At the beginning of iteration $k$, the algorithm selects a deterministic policy $\pi_k$ that is greedy with respect to $q_k$:
\begin{align}
\label{eq: greedy policy definition}
    \pol_k(s) \in \arg\max_{a \in \aset(s)} q_k(s,a),
    \qquad \forall \, s \in \sset .
\end{align}
The algorithm then samples an initial state-action pair $(s_0, a_0)$ according to a sampling distribution $\start_k$,
simulates an episode of the MDP following policy $\pol_k$, and observes the discounted return for each state-action pair in the episode.
% Since the expected return for a given pair is $q_{\pi_k}(s,a)$, t
The observed return equals $q_{\pi_k}(s,a) + v_k(s,a)$, where $v_k(s,a)$ is a stochastic perturbation.
Finally, the estimate $q_k$ is updated at pairs appearing in the episode
according to
\begin{align}
\label{eq: MC eval update}
    q_{k+1}(s, a) 
    &= q_k(s,a) +  \lrate_k( q_{\pol_k}(s, a) + v_k(s, a) - q_k(s,a) ) ,
    % \\
    % &= q_k(s,a) + \lrate_k ( q_{\pol_k}(s, a) + v_k(s, a) - q_k(s,a)) .
\end{align}
where $(\lrate_k)_{k \ge 0}$ is a scalar learning rate sequence that satisfies the Robbins-Monro conditions:
\begin{align}
 \label{eq: robbins monro conditions}
    % 0 < \alpha_k \le 1 ,
    % \qquad
    \sum_{k =0}^\infty \alpha_k = \infty , 
    \qquad
    \sum_{k =0}^\infty \alpha_k^2 < \infty .
\end{align}
% $\sum_{k \ge 0} \lrate_k = \infty$ and $\sum_{k \ge 0} \lrate_k^2 < \infty$.

Let $\mathcal F_k$ represent the available information after $q_k$ is computed,
but before
selecting the greedy policy, 
sampling the initial state-action pair 
and simulating the episode.
% information available before simulating the $k^{\text{th}}$ episode and drawing the $k^{\text{th}}$ update mask, namely
% \mytodo{why only know $\pi_i$ up to k-1? F should contain $\pi_k$}
% \mytodo{showd $f_k$ and $\sigma_k$ also be in the filtration}
% \begin{align}
% \label{eq: filtration_def}
% % \mathcal F_k
% % \coloneqq
% % \sigma
% \Big\{
% q_0,
% \big(\alpha_i, \sigma_i, f_i, q_i,\pi_i,(s^{(i)}_t,a^{(i)}_t,r^{(i)}_{t})_{0\le t<T_i} \big)_{0\le i\le k-1}
% \Big\}.
% \end{align}

\subsubsection{Greedy policy}

Define the set-valued function $\grpol : \qset \rightrightarrows \polset$ that maps any action-value function $q$ to the set of deterministic policies that are greedy with respect to $q$:
\begin{align}
\label{eq: greedy pol}
    \grpol(q) \coloneqq \big\{ \pol \in \polset : q(s, \pol(s)) = \max_{a \in \aset(s)} q(s,a), \ \forall \, s \in \sset \big\} .
\end{align}
At every step, the policy $\pi_k$ thus satisfies 
\begin{align}
\label{eq: greedy policy def}
    \pi_k \in \grpol(q_k) .
\end{align}
If several policies are greedy with respect to $q_k$, we assume that ties are broken randomly and independently across states.
More precisely, we assume that
there exists a constant \(\beta>0\) such that
% for every time step \(k \ge 0\), every state \(s\in S\), and every action $a \in \arg\max_{A(s)} q_k(s,\cdot)$,
\begin{equation}
\label{eq:greedy-pol-tie-breaking}
\mathbb P(\pi_k(s)=a\mid \mathcal F_k)\ge \beta,
    \qquad 
    \forall \, k \ge 0, \
    \forall \, s \in \sset, \
    \forall \, a \in \arg\max_{\mathcal A(s)} q_k(s,\cdot) .
\end{equation}
% For example, under uniform tie-breaking,
% \[
% \mathbb P(\pi_k(s)=a\mid \mathcal F_k)
% =
% \frac{1}{|\arg\max_{a'\in A(s)} q_k(s,a')|}
% \ge \frac{1}{|A(s)|},
% \]
% so \eqref{eq:greedy-pol-tie-breaking} holds with
% \[
% \beta=\min_{s\in S}\frac{1}{|A(s)|}.
% \]
This assumption is mostly technical as ties rarely occur in practice, except if the algorithm is initialised in one.

\subsubsection{Update function}

Not all state-action pairs are necessarily updated after each episode.
Given an episode $\big( (s_t, a_t) \big)_{t \ge 0}$, there are three common update functions to decide which pairs to modify.
The \textit{initial-visit} method only updates the first state-action pair $(s_0, a_0)$.
The \textit{first-visit} method updates each state-action pair $(s_t, a_t)$ provided it has not appeared in the sequence $\big((s_{t'}, a_{t'})\big)_{t' < t}$.
The \textit{every-visit} method updates all state-action pairs of an episode using every observed return from each pair.

We only consider initial- and first-visit updates to continue
because every-visit updates can produce biased estimates of the action values~\cite[Theorem~7]{Singh1996ReinforcementTraces}.
In other words, under initial- or first-visit updates the stochastic perturbations $(v_k)_{k \ge 0}$ are unbiased.
Their variance is also bounded.
Since both properties hold uniformly over the policies, it follows that for every time $k \ge 0$ and every pair $(s,a)$ visited in the $k^\nth$ episode,
\begin{align}
&\mathbb E[v_k(s,a) \mid \mathcal F_k] = 0 ,
\\
&\mathbb E[v_k^2(s,a) \mid \mathcal F_k] \le c_v ,
\end{align}
for some constant $c_v \in [0,\infty)$ that depends on the parameters of the MDP, but is independent of the policy $\pi_k$.
% \begin{align*}
% \mathbb E[v_k(s,a) \mid \mathcal F_k, \pi_k] = 0 .
% \end{align*}
% Furthermore, there exists a constant $c_v \in [0,\infty)$ such that
% \begin{align*}
% \mathbb E[v_k^2(s,a) \mid \mathcal F_k, \pi_k] \le c_v .
% \end{align*}
% Since both properties hold uniformly over the policies, we even obtain
% \begin{align}
% &\mathbb E[v_k(s,a) \mid \mathcal F_k] = 0 ,
% \\
% &\mathbb E[v_k^2(s,a) \mid \mathcal F_k] \le c_v .
% \end{align}
Under initial- or first-visit updates, MC-O-PI thus equates to
\begin{align}
\label{eq: update q with chi}
    q_{k+1}(s,a) 
    =
    q_k(s,a) + \lrate_k u_k(s,a) (q_{\pol_k}(s, a) + v_k(s, a) - q_k(s,a))   
    % \begin{cases}
    %     \ (1-\lrate_k)q_k(s,a) + \lrate_k (q_{\pol_k}(s, a) + v_k(s, a)) &\text{if} \ u_k(s,a)=1
    %     \\
    %     \ q(s,a) &\text{if} \ u_k(s,a)=0
    % \end{cases}
\end{align}
where $u_k(s,a)=1$ if $(s, a)$ is to be updated, and 0 otherwise.
The indicator function $u_k$ depends on the initial state-action pair of the $k^{\text{th}}$ episode, the policy used for simulating it and the chosen update function.
Considering the properties of $v_k$,
if we fixed the policy $\pi_k = \pi$ for all $k$, solutions of \eqref{eq: update q with chi} would converge almost surely to $q_{\pi}$ as long as all state-action pairs are updated infinitely often~\citep[Proposition~4.1]{Bertsekas1996Neuro-DynamicProgramming}.

\subsubsection{Initial sampling distribution}

To ensure that all state-action pairs are updated sufficiently often,
we commonly impose \textit{exploring starts} on the sampling distribution of the initial state-action pair of each episode. This condition requires that every state-action pair has a non-zero probability of being selected as the start of an episode:
% In other words, for every index $k>0$, state $s \in \sset$ and action $a \in \aset(s)$, the sampling distribution $\start_k$ of the initial state-action pair satisfies
\begin{align*}
    \start_k(s,a) > 0,
    \qquad 
    \forall \, k \ge 0, \
    \forall \, s \in \sset, \
    \forall \, a \in \aset(s) .
\end{align*}
However,
we can effectively nullify this condition using quickly decaying sampling distributions.
For example, if we choose $\sigma_k$ such that $\start_k(s,a) \le 1/(k+2)^2$ for some pair $(s,a)$, there is a $50\%$ chance that no episode ever starts in $(s,a)$. Under initial-visit updates, that pair would thus never be updated. To prevent such issues, we assume a stronger condition called \textit{strict exploring starts}:
there exists a fixed $\sigma_{\min} \in (0,\infty)$ such that
\begin{align}
\label{eq:minimal-sampling}    
\sigma_k(s,a) \ge \sigma_{\min},
    \qquad 
    \forall \, k \ge 0, \
    \forall \, s \in \sset, \
    \forall \, a \in \aset(s) .
\end{align}

\subsubsection{Update distribution}

The probability of updating each state-action pair after an episode depends on the distribution of the initial state-action pair, the policy used for simulating the episode and the update function.
We gather these probabilities in the update distribution $\mu_k$ defined as
\[
\mu_k := \mathbb E [ u_k \mid \mathcal F_k, \sigma_k, \pi_k, f_k] .
\]
% $: \sset \times \aset \to [0,1]$ 
% such that 
% $u_k = \mu_k + v'_k$, where $v'_k$ is zero-mean noise.

% \mytodo{$\mu_k$ is actually independent of $\mathcal F_k$ given the other three}

This distribution controls the exploration-exploitation trade-off
as it determines how frequently each action is updated.
Exploration favours actions whose current estimated value is most uncertain, for instance by biasing $\mu_k$ towards actions that received few updates in the past.
Exploitation, on the other hand, prioritises updating actions with the highest estimated value. It thus biases $\mu_k$ towards actions that are greedy with respect to $q_k$.

Using the update distribution, 
we can combine the stochastic terms $u_k$ and $v_k$ in equation \eqref{eq: update q with chi} into a single term $w_k$ defined as
\begin{align}
\label{eq: definition combined noise}
    w_k
    \coloneqq 
    u_k \, v_k + (u_k - \mu_k) ( q_{\pol_k} - q_k ) .
\end{align}
This term incorporates the perturbations caused by the episode simulation and the asynchronous updates of state-action pairs.
It simplifies equation \eqref{eq: update q with chi} to
\begin{align}
\label{eq: mcopi not yet sri}
    q_{k+1}
    =
    q_k + \lrate_k \big( \update_k ( q_{\pol_k} - q_k ) + w_k \big) .
\end{align}

\subsubsection{Difference inclusion}

% \begin{definition}[MC-O-PI difference ]
After defining the sequences of learning rates $(\lrate_k)_{k \ge 0}$, start distributions $(\start_k)_{k \ge 0}$ and update functions $(f_k)_{k \ge 0}$,
we can combine equations \eqref{eq: greedy policy def} and \eqref{eq: mcopi not yet sri} into the difference inclusion
\begin{multline}
\label{eq: mcopi sa}
    q_\kpo
    \in
    \Big\{ q_k + \alpha_k \big( \mu_k(q_{\pol_k} - q_k) + w_k \big) 
    :
    \\
    \pol_k \in \grpol(q_k),
    \update_k = \update(\start_k, \pol_k, f_k),
    w_k \sim w(\start_k, \pol_k, f_k)
    \Big\} .
\end{multline}
% \end{definition}
We assume the following conditions throughout this paper.

\begin{assumption}[Standing assumptions]
\label{asp: standing}
\phantom{Assume}
\begin{enumerate}[label=(\arabic*)]
    % \item \textbf{Discounted or episodic MDP.} 
    % % The Bellman operators $T_\pi$ and $T$ are monotonic and are $\gamma$-contractions in the $L_\infty$-norm.
    % Either the discount factor satisfies $\gamma \in [0, 1)$ or each episode of the MDP reaches a terminal state almost surely.
    
    \item \textbf{Robbins--Monro learning rates.} The learning rates satisfy condition \eqref{eq: robbins monro conditions}.
    % The learning rates $(\lrate_k)$ satisfy
    % \begin{align}
    % \label{eq: robbins monro conditions}
    %     % 0 < \alpha_k \le 1 ,
    %     % \qquad
    %     \sum_{k =0}^\infty \lrate_\kpz = \infty , 
    %     \qquad
    %     \sum_{k =0}^\infty \lrate_\kpz^2 < \infty .
    % \end{align}
    Since these conditions require that $\alpha_k \to 0$, we assume, without loss of generality, that $0 < \alpha_k \le 1$ for all $k \ge 0$.

    \item \textbf{Exploring policies.}
    The greedy policies satisfy condition \eqref{eq:greedy-pol-tie-breaking}.

    \item \textbf{Strict exploring starts.}
    The sampling distributions satisfy condition \eqref{eq:minimal-sampling}. 
    % \begin{align}
    % \label{eq: strict exploring starts}
    %     \sigma_k(s,a) \ge \sigma_{\min}
    %     \qquad 
    %     \forall \, k \ge 0, \
    %     \forall \, s \in \sset, \
    %     \forall \, a \in \aset(s) ,
    % \end{align}
    % for some constant $\sigma_{\min} \in (0,\infty)$.
    
    % \item \textbf{Unbiased episode return.} The update functions are initial- of first-visit updates.
\end{enumerate}
\end{assumption}

% where the set-valued function $\grpol : \qset \rightrightarrows \polset$ maps an action-value function $q$ to the set of deterministic policies that are greedy with respect to $q$:
% \begin{align}
% \label{eq: greedy pol}
%     \grpol(q) \coloneqq \big\{ \pol \in \polset : q(s, \pol(s)) = \max_{a \in \aset(s)} q(s,a), \ \forall s \in \sset \big\} .
% \end{align}

System \eqref{eq: mcopi sa} is a switched linear system whose active dynamics depend on which policy is greedy with respect to the current estimate $q_k$.
Thus, as long as the greedy policy does not change, the dynamics of \eqref{eq: mcopi sa} remain the same.
% which are essentially a stochastic approximation of the differential equation
% \[
% \dot q = \update_k ( q_{\pol_k} - q ) .
% \]
% 
We can delimit the subspaces of $\qset$ with constant dynamics using the set-valued function $\graval : \polset \rightrightarrows \qset$ defined as
\begin{align}
\label{eq: greedy action values}
    \graval(\pol) \coloneqq \big\{ q \in \qset : q(s, \pol(s)) = \max_{a \in \aset(s)} q(s,a), \ \forall \, s \in \sset \big\} .
\end{align}
This function is the inverse of $P$, defined in \eqref{eq: greedy pol}, as it maps any deterministic policy $\pol$ to the set of action-value functions for which $\pol$ is greedy.
Since $\graval(\pol)$ forms a convex subset of $\qset$, we refer to $\graval(\pol)$ as the \textit{region} of policy $\pol$ in $\qset$.

% For instance, consider the MDP in \autoref{fig: description region smallest MDP} with $\sset = \{s\}$ and $\aset(s) = \{a,a'\}$.
% Any point $q \in \qset$ can be represented by a point in $\real^2$ with coordinates $( q(s,a), q(s,a') )$.
% As shown in \autoref{fig: show regions}, the line $q(s,a) = q(s,a')$ bisects $\qset$.
% The region $\graval(\pol)$ is the half-space where $q(s,a) \ge q(s,a')$, while $\graval(\pol')$ is the half-space where $q(s,a') \ge q(s,a)$.

% Suppose the update distributions are uniform over all state-action pairs, and hence equivalent to a scalar in \eqref{eq: mcopi sa}.
% Following the ODE-interpretation,
% MC-O-PI solutions then approximate $\dot q = q_{\pi} - q$ over $\graval(\pi)$ and $\dot q = q_{\pi'} - q$ over $\graval(\pi')$, or in short
% \begin{align}
% \label{eq: switched ode for interpretation}
% \dot q \in \{ q_\pi - q : \pi \in \grpol(q) \}.
% \end{align}
% As the learning rate decreases, MC-O-PI follows the streamlines in \autoref{fig: explain regions with streamlines} more closely (arbitrary position of $q_\pi$ and $q_{\pi'}$).
% On the boundary, either systems are possible since the boundary belongs to both regions. However, the tie-breaking must satisfy \eqref{eq:greedy-pol-tie-breaking}.

% \input{figures/single-state-flow-explanation}

%%%%%%%%%%%%%%%%%%%%%%%%%%%%%%%%%%%%%%%%%%%%%%%%%%%%%%%%%%%%%%%%%%%%%%%%%%
%%%%%%%%%%%%%%%%%%%%%%%%%%%%%%%%%%%%%%%%%%%%%%%%%%%%%%%%%%%%%%%%%%%%%%%%%%
\subsubsection{Basic properties}

The stochastic perturbations defined in \eqref{eq: definition combined noise} form a martingale difference sequence with bounded second moment.

\begin{lemma}
% [Unbiased perturbations with bounded conditional variance]
\label{thm: properties of noise}
% Under initial- or first-visit updates, 
There exists a constant $c_w \in [0, \infty)$ 
% independent of $\mu_k$
such that for every time $k \ge 0$, every state $s \in \sset$ and every action $a \in \aset(s)$,
\begin{align}
\label{eq: muw_mds}
% \E\!\left[\mu_k(s,a)\,w_k(s,a)\mid \mathcal F_k\right] &= 0,\\
&\mathbb E\!\left[w_k(s,a) \mid \mathcal F_k\right] = 0,
\\
\label{eq: muw_second_moment}
% \E\!\left[\mu_k^2(s,a)\,w_k^2(s,a)\mid \mathcal F_k\right]
&\mathbb E\!\left[w_k^2(s,a) \mid \mathcal F_k\right]
\le c_w(1+\|q_k\|^2) .
\end{align}
% In particular, $\mu_k(s,a)w_k(s,a)$ has bounded conditional variance uniformly in $k$ and $(s,a)$.
\end{lemma}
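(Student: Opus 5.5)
The plan is to condition on the richer $\sigma$-algebra $\mathcal G_k := \sigma(\mathcal F_k, \sigma_k, \pi_k, f_k)$ and then apply the tower property. Here $\pi_k$ is drawn after $\mathcal F_k$ through tie-breaking, and $\mu_k$ is $\mathcal G_k$-measurable by definition. Given $\mathcal G_k$, both the policy $\pi_k$ and the difference $q_{\pi_k}-q_k$ are fixed. Only the episode, and hence $u_k$ and $v_k$, remains random. I will assume, as the paper implicitly does, that the stated properties of $v_k$ also hold conditionally on $\mathcal G_k$. The justification is that these properties were asserted to hold uniformly over policies, which is exactly what conditioning on $\pi_k$ requires.

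\textbf{Mean zero.} I treat the two terms of \eqref{eq: definition combined noise} separately. For the second term, $\mathbb E[u_k-\mu_k\mid\mathcal G_k]=0$ by the definition of $\mu_k$. Since $q_{\pi_k}-q_k$ is $\mathcal G_k$-measurable, the product has zero conditional mean. For the first term, $u_k(s,a)=1$ exactly when $(s,a)$ is visited and updated in the episode. On that event the return is unbiased, so
\[
\mathbb E[u_k(s,a)v_k(s,a)\mid\mathcal G_k]=\mathbb P(u_k(s,a)=1\mid\mathcal G_k)\,\mathbb E[v_k(s,a)\mid\mathcal G_k,u_k(s,a)=1]=0 .
\]
The conditional unbiasedness of the return given a visit is where initial- or first-visit updating matters. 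Under first-visit updating, the return from the first visit depends only on the future trajectory after that visit, and the strong Markov property makes it unbiased for $q_{\pi_k}(s,a)$. Applying the tower property to $\mathcal F_k$ then gives \eqref{eq: muw_mds}.

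\textbf{Second moment.} I use $(x+y)^2\le 2x^2+2y^2$ together with $u_k\in\{0,1\}$ and $|u_k-\mu_k|\le1$. This gives
\[
w_k^2\le 2u_kv_k^2+2(q_{\pi_k}-q_k)^2 .
\]
The first term has conditional expectation at most $2c_v$. For the second, $\|q_{\pi}\|\le \bar q:=\max_{\pi\in\polset}\|q_\pi\|<\infty$ because $\polset$ is finite and each $q_\pi$ is finite in discounted or episodic MDPs. Hence $(q_{\pi_k}-q_k)^2\le 2\bar q^2+2\|q_k\|^2$. Collecting terms yields \eqref{eq: muw_second_moment} with $c_w=\max\{2c_v+4\bar q^2,\,4\}$. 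This bound does not depend on $\pi_k$, so it also survives the tower step.

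The main obstacle is the measure-theoretic step in the mean-zero part. The paper states the properties of $v_k$ only conditionally on $\mathcal F_k$ and "for visited pairs," whereas $\pi_k$ and $u_k$ are random given $\mathcal F_k$. I would handle this by arguing conditionally on $\mathcal G_k$ as above. I would also adopt the convention $v_k(s,a)=0$ on unvisited pairs, so that $u_kv_k$ is well defined everywhere and the visit-conditioned unbiasedness gives exactly the required identity.
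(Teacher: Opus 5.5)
Your proposal is correct and follows essentially the same route as the paper. Your $\mathcal G_k$ plays the role of the paper's $\mathcal F_k'$, the information available after selecting $\pi_k$ but before sampling the episode. The mean-zero part splits $w_k$ into $u_k v_k$ and $(u_k-\mu_k)(q_{\pi_k}-q_k)$ and applies the tower property exactly as the paper does. Your second-moment constant $c_w=\max\{2c_v+4\bar q^{2},4\}$ coincides with the paper's. Your version is slightly more careful in two places: you write $2u_k v_k^2$ rather than $2v_k^2$, and you fix the convention $v_k=0$ on unvisited pairs. Together these make explicit that only the visit-conditioned variance bound is being used.
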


Using \autoref{thm: properties of noise} and the Robbins--Monro conditions,
we can apply standard martingale and stochastic approximation arguments to establish fundamental properties of MC-O-PI.

\begin{lemma}
% [Bounded solutions]
\label{thm: bounded limit sets}
Let $\polset_\infty$ denote the set of policies that appear infinitely often in a sequence $(\pi_k)_{k \ge 0}$ generated by MC-O-PI.
% \begin{align}
%     \polset_\infty \coloneqq \bigcap_{t = 0}^{\infty} \text{cl} \{\pi_k : k \geq t \} ,
% \end{align}
% Under \autoref{asp: standing},
The corresponding sequence $(q_k)_{k \ge 0}$ converges almost surely to the set
\begin{align*}
    \Big\{
    q \in \qset : 
    \min_{\pi \in \polset_\infty} q_\pi(s,a) \leq q(s,a) \leq \max_{\pi \in \polset_\infty} q_\pi(s,a), \forall s \in \sset, \forall a \in \aset
    \Big\} .
\end{align*}
Thus
there exists an almost surely finite constant $c_Q \in [0, \infty)$ such that for all $k \geq 0$
    \begin{align*}
        \| q_k \|_\infty^2 < c_Q .
    \end{align*}
\end{lemma}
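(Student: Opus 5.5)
We prove the lemma one coordinate at a time, using a sandwich argument. Fix a pair $(s,a)$ and write the recursion \eqref{eq: mcopi not yet sri} coordinatewise as
\[
q_{k+1}(s,a) = \big(1-\alpha_k\mu_k(s,a)\big)q_k(s,a) + \alpha_k\mu_k(s,a)\,q_{\pi_k}(s,a) + \alpha_k w_k(s,a).
\]
Since $0<\alpha_k\le 1$ and $\mu_k(s,a)\in[0,1]$, the first two terms form a convex combination of $q_k(s,a)$ and $q_{\pi_k}(s,a)$. The set $\polset$ of deterministic policies is finite, so $B:=\max_{\pi\in\polset}\|q_\pi\|<\infty$. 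Strict exploring starts \eqref{eq:minimal-sampling} give $\mu_k(s,a)\ge\sigma_{\min}$, because under initial- and first-visit updates the starting pair is always updated.

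The first step isolates the noise. Define the noise accumulator $W_k(s,a)$ by $W_0=0$ and
\[
W_{k+1}=(1-\alpha_k\mu_k)W_k+\alpha_k w_k ,
\]
and set $\tilde q_k:=q_k-W_k$. Then $\tilde q$ obeys the noiseless recursion: $\tilde q_{k+1}=(1-\alpha_k\mu_k)\tilde q_k+\alpha_k\mu_k\big(q_{\pi_k}+(\mu_k^{-1}-1)\cdot 0\big)$, corrected by the term $\alpha_k\mu_k W_k$, which I will absorb as a perturbation that vanishes once $W_k\to0$.

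The key step is $W_k\to0$ almost surely. This is the standard result of \citet[Prop.~4.1]{Bertsekas1996Neuro-DynamicProgramming} for $x_{k+1}=(1-\gamma_k)x_k+\gamma_k\tilde w_k$ with random stepsizes $\gamma_k=\alpha_k\mu_k$. We have $\sum\gamma_k\ge\sigma_{\min}\sum\alpha_k=\infty$ and $\sum\gamma_k^2<\infty$. The variance condition is the obstacle, because \eqref{eq: muw_second_moment} depends on $\|q_k\|^2$, which we have not yet bounded. My plan is to bound $q$ first, using the scaled-ODE (Borkar--Meyn) argument or the simpler Bertsekas--Tsitsiklis boundedness lemma (their Prop.~4.7). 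The mean field $\mu_k(q_{\pi_k}-q)$ is a pseudo-contraction toward the bounded set $[-B,B]$ with modulus $1-\alpha_k\sigma_{\min}$, and the noise variance grows at most quadratically. Under exactly these conditions that lemma yields $\sup_k\|q_k\|<\infty$ almost surely. This gives the constant $c_Q$, which is the second claim of the lemma. Once $q$ is bounded, define the stopping times $\tau_M=\inf\{k:\|q_k\|^2>M\}$. On $\{\tau_M=\infty\}$ the truncated noise has bounded conditional variance, so $W_k\to0$ there by the martingale convergence theorem. Taking the union over $M$ covers almost every sample path.

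Finally we prove the sandwich. Let $K$ be large enough that $\pi_k\in\polset_\infty$ for all $k\ge K$; such a $K$ exists because $\polset$ is finite. Set $m=\min_{\pi\in\polset_\infty}q_\pi(s,a)$ and $M'=\max_{\pi\in\polset_\infty}q_\pi(s,a)$. For $k\ge K$, $\tilde q_{k+1}$ is a convex combination of $\tilde q_k$ and a point in $[m-\epsilon_k,M'+\epsilon_k]$, where $\epsilon_k\to0$ comes from $W_k$. Hence
\[
d_{k+1}\le(1-\alpha_k\sigma_{\min})d_k+\alpha_k\epsilon_k ,
\]
where $d_k$ is the distance from $\tilde q_k$ to $[m,M']$. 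Because $\sum\alpha_k=\infty$, this forces $d_k\to0$. Combined with $W_k\to0$, we get $\mathrm{dist}\big(q_k(s,a),[m,M']\big)\to0$. Taking the intersection over the finitely many pairs $(s,a)$ gives convergence to the stated set almost surely.
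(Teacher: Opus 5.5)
Your proof is correct, but it is organised differently from the paper's.

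\textbf{The paper's route.} The paper fixes the random time $K$ after which every $\pi_k$ lies in $\polset_\infty$. For each pair, it starts two scalar comparison recursions $x_k,y_k$ at $q_K(s,a)$. These are driven by the same $\mu_k(s,a)$ and $w_k(s,a)$, but with the constant targets $\min_{\pi\in\polset_\infty}q_\pi(s,a)$ and $\max_{\pi\in\polset_\infty}q_\pi(s,a)$. Since $\alpha_k\mu_k\le 1$, the recursion is order preserving, so $x_k\le q_k(s,a)\le y_k$. Bertsekas--Tsitsiklis (Prop.~4.1) then sends $x_k,y_k$ to the endpoints, and boundedness is read off from convergence to a bounded set.

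\textbf{Your route.} You replace the order comparison by the superposition $q_k=\tilde q_k+W_k$. The probabilistic core is the same: after subtracting its target, $x_k$ obeys exactly your $W$-recursion. Your split nevertheless buys two things.
\begin{itemize}
\item Your $W_k$ starts at time $0$ and does not depend on the targets. So the random time $K$ and the random set $\polset_\infty$, neither of which is known at time $K$, enter only the pathwise deterministic part. The paper's comparison sequences, by contrast, start at $K$ with $\polset_\infty$-dependent targets. Strictly, they need a countable union over deterministic start times and candidate sets before a martingale result applies.
\item You confront the fact that the bound \eqref{eq: muw_second_moment} is in terms of $\|q_k\|$, not of the iterate being analysed. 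The paper's citation passes over this; it can be repaired by treating all pairs jointly, since $|q_k(s,a)|\le\max\{|x_k(s,a)|,|y_k(s,a)|\}$.
\end{itemize}
In exchange, the paper's route is shorter and needs no separate boundedness step.

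\textbf{A slip in your write-up.} With your damped definition of $W_k$, the recursion for $\tilde q_k$ is \emph{exactly}
\[
\tilde q_{k+1}=(1-\alpha_k\mu_k)\tilde q_k+\alpha_k\mu_k q_{\pi_k}.
\]
There is no residual $\alpha_k\mu_kW_k$ term, so you may take $\epsilon_k\equiv0$ in the last step.

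\textbf{A simplification this exactness allows.} You can drop the detour through Prop.~4.7 and the stopping times $\tau_M$.
\begin{itemize}
\item By induction, $\|\tilde q_k\|\le\max\{\|q_0\|,B\}=:C_0$, so $\|q_k\|^2\le 2C_0^2+2\|W_k\|^2$.
\item Set $V_k:=\sum_{(s,a)}W_k(s,a)^2$ and use $\mu_k\ge\sigma_{\min}$. This gives
\[
\mathbb E[V_{k+1}\mid\mathcal F_k']\le\bigl(1+c\,\alpha_k^2\bigr)V_k-\sigma_{\min}\alpha_kV_k+c'\alpha_k^2 ,
\]
with $c,c'$ depending only on $c_w$, $|\sset\times\aset|$ and $C_0$.
\item The Robbins--Siegmund lemma then gives $V_k\to0$ almost surely, and hence also the constant $c_Q$.
\end{itemize}
Here $\mathcal F_k'$ must contain $\pi_k$ and $\sigma_k$, as in the appendix, because $\mu_k$ is not $\mathcal F_k$-measurable. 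The same point applies to the random step sizes $\alpha_k\mu_k$ in your appeal to Prop.~4.1.
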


If the target action values $q_{\pi_k}$ in \eqref{eq: mcopi sa} were fixed,
solutions would almost surely converge to this fixed target by Proposition~4.1 or Example~4.3 in~\citep{Bertsekas1996Neuro-DynamicProgramming}.
Thus, the asymptotic behaviour of the difference inclusion hinges on how these action values evolve over time.
To isolate these changes, we introduce \textit{transition times}.

% \noindent
% {\bf Theorem} {\it Let $u,v,w$ be discrete variables such that $v, w$ do
% not co-occur with $u$ (i.e., $u\neq0\;\Rightarrow \;v=w=0$ in a given
% dataset $\dataset$). Let $N_{v0},N_{w0}$ be the number of data points for
% which $v=0, w=0$ respectively, and let $I_{uv},I_{uw}$ be the
% respective empirical mutual information values based on the sample
% $\dataset$. Then
% \[
% 	N_{v0} \;>\; N_{w0}\;\;\Rightarrow\;\;I_{uv} \;\leq\;I_{uw}
% \]
% with equality only if $u$ is identically 0.} 
% \hfill\BlackBox

\begin{definition}
\label{def: transition times}
Given a sequence of policies $(\pi_k)_{k \ge 0}$,
define the transition times $(t_n)_{n \ge 0}$ as
\begin{align*}
t_0 &:= 0 ,
\\
t_{n+1} &:= \inf\{k>t_n : q_{\pi_k} \neq q_{\pi_{t_n}} \}.
\end{align*}
Thus, if $t_n = \infty$, then $t_{n+i} = \infty$ for all $i \ge 1$.
These times decompose the sequence $(\pi_k)_{k \ge 0}$ into blocks of constant action values, such that $q_{\pi_k} = q_{\pi_{t_n}}$ for all $k \in \{t_n, \dots, t_{n+1}-1 \}$.
\end{definition}

% \begin{lemma}
% Let \((\pi_k)_{k\ge 0}\) be a sequence of policies generated by MC-O-PI,
% and let \((t_n)_{n\ge 0}\) be the associated transition times.
% Under \autoref{asp: standing},
% if for some index $n \in \mathbb Z_{\ge 0}$, the policy $\pi_{t_n}$ is suboptimal, then
% \[
% \mathbb P( t_{n+1}<\infty \mid \mathcal F_{t_n} ) = 1.
% \]
% \end{lemma}
% \begin{proof}
% Let \((\pi_k)_{k\ge 0}\) be a sequence of policies generated by MC-O-PI,
% and let \((t_n)_{n\ge 0}\) be the associated transition times.

% If \(t_{n+1}=\infty\), then by \autoref{def: transition times},
% \(q_{\pi_k}=q_{\pi_{t_n}}\) for all \(k\ge t_n\).
% Hence the target remains fixed forever, so \(q_k\to q_{\pi_{t_n}}\) almost surely.

% Since \(\pi_{t_n}\) is suboptimal, it is not greedy with respect to \(q_{\pi_{t_n}}\). Thus there exist
% \(s\in S\) and \(a\in A(s)\) such that
% \[
% q_{\pi_{t_n}}(s,a) > q_{\pi_{t_n}}(s,\pi_{t_n}(s)).
% \]
% By convergence, the same strict inequality holds for all large \(k\), so
% \(\pi_k(s)\neq \pi_{t_n}(s)\) for all large \(k\) because \(\pi_k\) is greedy
% with respect to \(q_k\). Therefore \(q_{\pi_k}\neq q_{\pi_{t_n}}\) for some
% \(k>t_n\), contradicting \(t_{n+1}=\infty\). Hence \(t_{n+1}<\infty\) almost
% surely.
% \end{proof}

\begin{lemma}
\label{thm: suboptimal blocks finite}
If $\pi_{t_n}$ is not optimal, then $t_{n+1}<\infty$ almost surely.
\end{lemma}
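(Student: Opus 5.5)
We argue by contradiction on the event $E := \{t_n < \infty,\ t_{n+1} = \infty,\ \pi_{t_n} \text{ not optimal}\}$ and show $\mathbb P(E) = 0$. On $E$, \autoref{def: transition times} gives $q_{\pi_k} = q_{\pi_{t_n}} =: \bar q$ for all $k \ge t_n$. So from time $t_n$ onward the recursion \eqref{eq: mcopi not yet sri} reads
\[
q_{k+1} = q_k + \alpha_k\big(\mu_k(\bar q - q_k) + w_k\big),
\]
which is a stochastic approximation with a fixed target.

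The first step is to show $q_k \to \bar q$ almost surely on $E$. By strict exploring starts \eqref{eq:minimal-sampling}, each pair is the initial pair of episode $k$ with probability at least $\sigma_{\min}$. Under initial- or first-visit updates this gives $\mu_k(s,a) \ge \sigma_{\min}$ for every $(s,a)$ and $k$. Hence $\sum_k \alpha_k \mu_k(s,a) = \infty$ componentwise by \eqref{eq: robbins monro conditions}.

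The noise satisfies \autoref{thm: properties of noise}, and \autoref{thm: bounded limit sets} gives $\|q_k\|^2 < c_Q$ almost surely. Therefore the martingale $\sum_k \alpha_k w_k$ converges almost surely, since its conditional variances are summable. A standard componentwise argument then gives $q_k(s,a) \to \bar q(s,a)$: write $e_k = q_k - \bar q$, so that $e_{k+1} = (1-\alpha_k\mu_k)e_k + \alpha_k w_k$, and apply e.g.\ Proposition~4.1 of \citep{Bertsekas1996Neuro-DynamicProgramming} with state-action-dependent step sizes $\alpha_k\mu_k(s,a)$.

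The technical care is that this convergence must be stated on the event $E$, where the target only becomes constant from a random time onward. To handle this, define an auxiliary process driven by the same noise but with target frozen at $q_{\pi_{t_n}}$ for all $k \ge t_n$. This process converges to $q_{\pi_{t_n}}$ almost surely, by the argument above applied conditionally on $\mathcal F_{t_n}$ and with a union over the finitely many policies. On $E$ it coincides with $(q_k)$.

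The second step uses suboptimality. Since $\pi := \pi_{t_n}$ is a deterministic policy that is not optimal, it is not greedy with respect to its own action values. Hence there exist $s$ and $a$ with $\bar q(s,a) > \bar q(s,\pi(s))$. Set $\delta := \bar q(s,a) - \bar q(s,\pi(s)) > 0$.

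By the convergence just established, for all large $k$ we have $q_k(s,a) - q_k(s,\pi(s)) > \delta/2$. So $\pi(s) \notin \arg\max q_k(s,\cdot)$, and any greedy $\pi_k \in \grpol(q_k)$ has $\pi_k(s) \neq \pi(s)$.

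The final step, which I expect to be the main obstacle, is deriving the contradiction. The difficulty is that $t_{n+1}$ is defined through action values, not through the policies themselves, so $\pi_k \ne \pi$ does not immediately give $q_{\pi_k} \ne \bar q$. I would argue directly at the pair $(s,\pi_k(s))$. On $E$ we have $q_{\pi_k} = \bar q$, and the Bellman equation gives $q_{\pi_k}(s,\pi_k(s)) = v_{\pi_k}(s)$, so $v_{\pi_k}(s) = \bar q(s,\pi_k(s))$. Likewise $v_{\pi}(s) = \bar q(s,\pi(s))$.

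Next, $v_{\pi_k}$ is determined by $q_{\pi_k}$: by the relation used in the proof of \autoref{thm: strict PIT}, equal action values force equal state values. Hence $v_{\pi_k} = v_\pi$, and so $\bar q(s,\pi_k(s)) = \bar q(s,\pi(s))$.

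On the other hand, $\pi_k(s)$ is greedy for $q_k$ and $q_k \to \bar q$. Passing to a subsequence on which $\pi_k(s)$ is a constant action $b$, we get $\bar q(s,b) = \max_{a'} \bar q(s,a') \ge \bar q(s,\pi(s)) + \delta$. This contradicts $\bar q(s,b) = \bar q(s,\pi(s))$. Therefore $\mathbb P(E)=0$, that is, $t_{n+1}<\infty$ almost surely whenever $\pi_{t_n}$ is not optimal.
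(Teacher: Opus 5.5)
Your first step is sound: the frozen-target auxiliary process is a more careful version of the paper's one-line appeal to Proposition~4.1. The subsequence argument giving $\bar q(s,b)=\max_{a'}\bar q(s,a') > \bar q(s,\pi(s))$ is also sound.

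\textbf{The gap.} The step that fails is ``equal action values force equal state values.'' The proof of Lemma~\ref{thm: strict PIT} shows only the converse, $v_{\pi'}=v_\pi \Rightarrow q_{\pi'}=q_\pi$. Your claim is also circular here: on $E$ you have $v_{\pi_k}(s)=\bar q(s,\pi_k(s))$ and $v_\pi(s)=\bar q(s,\pi(s))$, so $v_{\pi_k}(s)=v_\pi(s)$ \emph{is} the equality you are trying to refute.

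\textbf{The claim is false in general.} Take a single nonterminal state $s_0$, where action $a$ pays $1$ and terminates and action $b$ pays $0$ and terminates. Every policy has $q(s_0,a)=1$ and $q(s_0,b)=0$, yet $v(s_0)\in\{0,1\}$ depends on the policy. The Bellman relation recovers $v$ only at states that are successors of some pair, so an improvement at a state with no predecessors is invisible in $q$.

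\textbf{Consequence for your reading of the lemma.} The same example shows that, if ``not optimal'' means ``not greedy for its own action values'', the lemma is false. If $q_0(s_0,b)>q_0(s_0,a)$, then $\pi_0(s_0)=b$ is not greedy for $q_{\pi_0}$. Yet $t_1=\infty$ surely, because all policies share the same $q$.

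\textbf{How to fix it.} Read ``optimal'' as $q_\pi=q_{\pi_*}=\max_{\pi'}q_{\pi'}$, which is the sense in which the paper defines $\pi_*$. Then close the last step without comparing state values:
\begin{enumerate}[label=(\roman*)]
\item On $E$ you have $q_k\to\bar q$.
\item Each $\bar q(s',\cdot)$ has a positive gap between maximal and non-maximal actions. So for all large $k$, every greedy action of $q_k$ lies in $\arg\max\bar q(s',\cdot)$, in every state $s'$.
\item Hence $\pi_k$ is greedy with respect to $\bar q=q_{\pi_k}$, its own action values, and is therefore optimal.
\item This gives $q_{\pi_{t_n}}=q_{\pi_k}=q_{\pi_*}$, contradicting non-optimality of $\pi_{t_n}$.
\end{enumerate}

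\textbf{Relation to the paper's proof.} The paper ends by invoking the Policy Improvement Theorem on $q_\pi(s,\pi_k(s))>q_\pi(s,\pi(s))$. By itself, that theorem yields $v_{\pi_k}(s)>v_\pi(s)$ rather than $q_{\pi_k}\neq q_\pi$. The characterisation argument above is also the clean way to make the paper's last line rigorous.
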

\begin{proof}
Let $\pi := \pi_{t_n}$.
If $t_{n+1}=\infty$, then the target is fixed at $q_\pi$ forever and $q_k \to q_\pi$ almost surely~\cite[Proposition~4.1]{Bertsekas1996Neuro-DynamicProgramming}.
Since $\pi$ is suboptimal, it is not greedy with respect to $q_\pi$.
Hence there exists a pair $(s,a)$ such that $q_\pi(s,a) > q_\pi(s,\pi(s))$.
Consequently, as $q_k$ approaches $q_\pi$,
every greedy policy $\pi_k$ with respect to $q_k$ satisfies
$q_\pi(s,\pi_k(s)) > q_\pi(s,\pi(s))$
for all sufficiently large $k$.
The Policy Improvement Theorem indicates that $q_{\pi_k} \ne q_\pi$, contradicting $t_{n+1}=\infty$.
\end{proof}

% Clearly, if $\pi_{t_n}$ is suboptimal, then $t_{n+1}<\infty$.
% Otherwise the target would remain equal to \(q_{\pi_{t_n}}\) forever, forcing convergence to \(q_{\pi_{t_n}}\).
% However a suboptimal policy is not greedy with respect to its own action values.
% Thus no solution can get arbitrarily close to $q_{\pi_{t_n}}$ without causing the greedy policy to improve. Therefore no suboptimal block can persist indefinitely.
% 
\autoref{thm: suboptimal blocks finite} implies that if there exists a finite index $n$ such that $t_{n} = \infty$, the estimate $q_k$ converges to the action values of the last bounded transition time, which must be optimal.
Conversely, any suboptimal asymptotic behaviour requires an infinite sequence of bounded transition times.

\section{Results}
\label{sec: results}

% \textcolor{red}{remove references to asp 2}

Experiments show that initial-visit MC-O-PI can converge to suboptimal solutions under arbitrary update distributions, even when the Robbins–Monro 
and strict exploring-starts conditions hold.
% \citep[Example~5.12]{Bertsekas1996Neuro-DynamicProgramming}.
% \textcolor{red}{+ cite our neurips submission - GV I don't think we should, the chronology is this paper preceeds neurips (the story was decided, and most of it was written, before) }.
For instance, if at every step the actions associated with the current greedy policy are updated significantly less frequently than non-greedy actions, stable suboptimal equilibria can emerge even on the boundary of the optimal policy region.
% These equilibria effectively attract and stall MC-O-PI solutions.
This section shows that uniform update distributions prevent such issues because the mean-field dynamics then generate monotonically improving policies.
As a result, MC-O-PI converges to the optimal action values $\qopt$.

This result extends the guarantees of \citet{Tsitsiklis2002OnIteration}, \citet{Chen2018OnProblem} and \citet{Liu2021OnStarts} to a broader and more practical class of update distributions.
Specifically, while they prove convergence to optimality when updates are \textit{uniform over all state-action pairs}, we show that it remains true when updates are only \textit{uniform over all actions in each state}.

\begin{definition}[Uniform updates over all state-action pairs]
\label{asp: unbiased tsitsiklis}
The update distribution $\update$ is uniform over all state-action pairs
if for all states $s, s' \in \sset$ and actions $a \in \aset(s)$ and $a' \in \aset(s')$, we have
\begin{align*}
    \mu(s, a) = \mu(s', a') .
\end{align*}
\end{definition}

\begin{definition}[Uniform updates over actions in each state]
\label{asp: unbiased over policies}
The update distribution $\update$ is uniform over all actions in each state,
if for every state $s \in \sset$ and all actions $a, a' \in \aset(s)$
\begin{align*}
    \mu(s, a) = \mu(s, a') .
\end{align*}
\end{definition}

\autoref{asp: unbiased over policies} generalises \autoref{asp: unbiased tsitsiklis} because it allows updating different states at different frequencies.
For instance, under initial-visit updates, the initial state of each episode can now be sampled according to \textit{any} distribution provided the initial action in that state is sampled uniformly.
This additional flexibility disrupts Tsitsiklis' proof as the essential commutation step fails \citep[p.~64]{Tsitsiklis2002OnIteration}.
% However, we show that MC-O-PI still converges to the optimal solution.

This weaker condition is more practical for real-world applications, in particular when the state-space is large.
For instance, chess has $O(10^{45})$ different board layouts but only $O (10^1)$ legal moves per layout.
Uniformly sampling over the joint state-action space is thus not feasible, but sampling an action uniformly from a chosen state is straightforward.
Henceforth, \textit{uniform updates} refers to \autoref{asp: unbiased over policies} rather than \autoref{asp: unbiased tsitsiklis}.

We first demonstrate that the mean-field dynamics under uniform updates generate monotonically improving policies.
We then show that the stochastic perturbations in MC-O-PI cannot consistently prevent this improvement, and hence that MC-O-PI converges to the optimal action values.
Since the non-contracting, discontinuous and non-convex MC-O-PI dynamics preclude using classical methods, we derive convergence from first principles.

\subsection{Mean-field generates monotonically improving policies}

Given that the stochastic perturbations in initial- or first-visit MC-O-PI are unbiased (\autoref{thm: properties of noise}),
the mean-field update that arises when averaging out these perturbations is
\begin{align}
\label{eq: deterministic mcopi}
    q_{k+1}
    \in
    \big\{ q_k + \lrate_k \update_k ( q_{\pi_k} - q_k )
    :
    \pol_k \in \grpol(q_k),
    \update_k = \update(\start_k, \pol_k, f_k)
    \big\}
\end{align}
with learning rates $(\alpha_k)_{k\ge 0}$, start distributions $(\sigma_k)_{k\ge 0}$ and update functions $(f_k)_{k\ge 0}$.

\subsubsection{Policy improvement at every transition}

Using \autoref{thm: strict PIT},
we show that 
policies generated by the mean-field dynamics in \eqref{eq: deterministic mcopi} improve monotonically
because
\begin{align}
\label{eq: strict improvement theorem}
q_{\pi_{t_n}}(s,\pi_{t_{n+1}}(s)) \ge q_{\pi_{t_n}}(s,\pi_{t_n}(s))
\end{align}
for every state and at every transition.
% at every transition time,

\begin{proposition}
\label{thm: improving policies}
Let \((\pi_k)_{k\ge 0}\) be a sequence of policies generated by the mean-field dynamics in \eqref{eq: deterministic mcopi}
with uniform update distributions over all actions in each state (\autoref{asp: unbiased over policies}).
Let \((t_n)_{n\ge 0}\) be the associated transition times (\autoref{def: transition times}).
Under \autoref{asp: standing},
for every \(n \in \mathbb Z_{\ge 0}\) with \(t_{n+1}<\infty\), policy
\(\pi_{t_{n+1}}\) is better than \(\pi_{t_n}\).
\end{proposition}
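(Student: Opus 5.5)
The plan is to exploit the fact that, within a block of constant action values, the mean-field dynamics \eqref{eq: deterministic mcopi} with uniform per-state updates move every action of a given state towards the fixed target by the same scalar factor. This keeps the ordering information of $q_{t_n}$ and $q_{\pi_{t_n}}$ intact, and we can then invoke \autoref{thm: strict PIT}.

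\textbf{Step 1 (block dynamics).} Fix $n$ with $t_{n+1}<\infty$ and set $\pi := \pi_{t_n}$, $\pi' := \pi_{t_{n+1}}$. By \autoref{def: transition times}, $q_{\pi_k} = q_\pi$ for every $k \in \{t_n,\dots,t_{n+1}-1\}$, even if the actions of $\pi_k$ change inside the block. Under \autoref{asp: unbiased over policies}, write $\mu_k(s) := \mu_k(s,a)$ for any $a\in\aset(s)$. The update then reads
\[
q_{k+1}(s,\cdot) - q_\pi(s,\cdot) = \big(1-\alpha_k\mu_k(s)\big)\big(q_k(s,\cdot)-q_\pi(s,\cdot)\big).
\]
Iterating gives, for each state $s$,
\[
q_{t_{n+1}}(s,\cdot) = \lambda(s)\, q_{t_n}(s,\cdot) + \big(1-\lambda(s)\big)\, q_\pi(s,\cdot),
\qquad
\lambda(s) := \prod_{k=t_n}^{t_{n+1}-1}\big(1-\alpha_k\mu_k(s)\big).
\]
Since $0<\alpha_k\le 1$ and $\mu_k\in[0,1]$, we have $\lambda(s)\in[0,1]$.

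\textbf{Step 2 (strict contraction in every state).} I need $\lambda(s)<1$ for all $s$. This is the step where the standing assumptions enter, and it is the one I expect to need care: if some state were never updated during the block, $q_{t_{n+1}}(s,\cdot)=q_{t_n}(s,\cdot)$, and the argument below would fail. Under initial- or first-visit updates, the initial pair is always updated. Hence strict exploring starts \eqref{eq:minimal-sampling} gives $\mu_k(s,a)\ge \sigma_k(s,a)\ge\sigma_{\min}>0$. Together with $\alpha_k>0$ and $t_{n+1}>t_n$ (so the product has at least one factor), this yields $\lambda(s)\le 1-\alpha_{t_n}\sigma_{\min}<1$.

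\textbf{Step 3 (improvement condition).} Since $\pi'\in\grpol(q_{t_{n+1}})$, we have $q_{t_{n+1}}(s,\pi'(s)) \ge q_{t_{n+1}}(s,\pi(s))$. Substituting the convex combination from Step 1 gives
\[
\lambda(s)\big[q_{t_n}(s,\pi'(s))-q_{t_n}(s,\pi(s))\big] + \big(1-\lambda(s)\big)\big[q_\pi(s,\pi'(s))-q_\pi(s,\pi(s))\big] \ge 0 .
\]
The first bracket is $\le 0$ because $\pi=\pi_{t_n}\in\grpol(q_{t_n})$. Dividing by $1-\lambda(s)>0$ then gives $q_\pi(s,\pi'(s))\ge q_\pi(s,\pi(s))$ for every $s$. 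This is exactly \eqref{eq: strict improvement theorem}, i.e.\ condition \eqref{eq: PIT condition}.

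\textbf{Step 4 (strictness).} By definition of $t_{n+1}$, $q_{\pi'}\neq q_\pi$. \autoref{thm: strict PIT} therefore gives $v_{\pi'}\ge v_\pi$ with strict inequality in some state, i.e.\ $\pi'\succ\pi$.
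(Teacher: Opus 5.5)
Your proof is correct. It uses the same mechanism as the paper: because the factor $\alpha_k\mu_k(s)$ is common to all actions in a state, greediness of the new policy at the new estimate transfers to an inequality on the fixed target $q_{\pi_{t_n}}$, and \autoref{thm: strict PIT} then gives strict improvement.

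The bookkeeping differs. The paper argues one step at a time. From $\pi_{k+1}\in\grpol(q_{k+1})$ and a single update, it divides by $\alpha_k\mu_k(s)>0$ to get $q_{\pi_k}(s,\pi_{k+1}(s))\ge q_{\pi_k}(s,\pi_k(s))$ for every $k$, and then invokes this at $k=t_{n+1}-1$. Strictly speaking, that puts $\pi_{t_{n+1}-1}(s)$ rather than $\pi_{t_n}(s)$ on the right-hand side. One must therefore chain the one-step inequalities across the block, since policies inside a block share action values but not necessarily actions.

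You instead collapse the whole block into the convex combination $q_{t_{n+1}}(s,\cdot)=\lambda(s)q_{t_n}(s,\cdot)+(1-\lambda(s))q_{\pi_{t_n}}(s,\cdot)$ and compare the two endpoints directly. This removes the chaining. It also only needs $\lambda(s)<1$, i.e.\ a positive update probability for $s$ at some step of the block rather than at every step. The paper's local version is a shorter computation and shows weak improvement at every single step, though your argument applied to a one-step block recovers that as well.
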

\begin{proof}
Let $(q_k)_{k \ge 0}$ be a solution of \eqref{eq: deterministic mcopi} generated with update distributions $(\mu_k)_{k \ge 0}$, and let $(\pi_k)_{k \ge 0}$ be the resulting policy sequence with associated transition times $(t_n)_{n \ge 0}$.

Given policy $\pi_{k+1}$ is greedy with respect to $q_{k+1}$ for every time $k \ge 0$, we have $q_{k+1}(s,\pi_{k+1}(s)) \ge q_{k+1}(s, \pi_k(s))$ for every state $s \in \mathcal S$.
Using \eqref{eq: deterministic mcopi}, this becomes
\begin{multline*}
    q_k(s, \pol_{k+1}(s)) + \lrate_k \update_k(s, \pol_{k+1}(s)) \big( q_{\pi_k}(s, \pol_{k+1}(s)) - q_k(s, \pol_{k+1}(s)) \big)
    \\
    \ge
    q_k(s, \pol_k(s)) + \lrate_k \update_k(s, \pol_k(s)) \big( q_{\pi_k}(s, \pol_k(s)) - q_k(s, \pol_k(s)) \big) .
\end{multline*}
The strict exploring starts and the uniformity condition
imply that the update distributions are strictly positive and uniform over all actions in each state, meaning
$\update_k(s,\pi_k(s)) = \update_k(s,\pi_{k+1}(s)) =: \mu_k(s) > 0$.
Rearranging the previous inequality thus yields
\begin{align*}    
    q_{\pi_k}(s, \pol_{k+1}(s)) - q_{\pi_k}(s, \pol_k(s))
    \ge
    \dfrac{1 - \lrate_k \update_k(s)}{\lrate_k \update_k(s)} \big( q_k(s, \pol_k(s)) - q_k(s, \pol_{k+1}(s)) \big) .
\end{align*}
Since $\pi_k$ is greedy with respect to $q_k$ and, by assumption, $0 < \lrate_k \update_k(s) \le 1$, the right-hand side is nonnegative. As a result, we obtain
\begin{align}
\label{eq: ineq for improvement}
    q_{\pi_k}(s, \pi_{k+1}(s)) \ge q_{\pi_k}(s, \pi_k(s))
\end{align}
for every time $k \ge 0$ and every state $s \in \mathcal S$.

Now choose an $n$ such that $t_{n+1}<\infty$. Applying inequality \eqref{eq: ineq for improvement} at time $k=t_{n+1}-1$ and using the fact that $q_{\pi_{t_{n+1}-1}}=q_{\pi_{t_n}}$, we obtain
\begin{align*}
    q_{\pi_{t_n}}(s,\pi_{t_{n+1}}(s)) \ge q_{\pi_{t_{n}}}(s,\pi_{t_n}(s)) .
\end{align*}
By definition of the transition times, we also know that $q_{\pi_{t_n}} \neq q_{\pi_{t_{n+1}}}$.
As a result, \autoref{thm: strict PIT} implies that policy $\pi_{t_{n+1}}$ is better than $\pi_{t_n}$.
\end{proof}

\autoref{thm: improving policies} indicates that 
MC-O-PI with uniform updates would visit improving policies
if stochastic perturbations were averaged out, for instance by approximating $q_{\pi_k}$ as the average return from many episodes.

\subsubsection{Global convergence of mean-field solutions}

The monotone policy improvement directly yields global convergence to $\qopt$.

\begin{proposition}
\label{thm: convergence deterministic mcopi}
    Solutions of the mean-field dynamics in \eqref{eq: deterministic mcopi} converge to the optimal action values $\qopt$
    under \autoref{asp: standing},
    if the update distributions are uniform over all actions in each state (\autoref{asp: unbiased over policies}).
\end{proposition}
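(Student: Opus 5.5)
The plan is to combine \autoref{thm: improving policies} with finiteness of the policy set and then invoke convergence under a fixed target.

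\textbf{Step 1: finitely many transitions.} Let $(q_k)_{k\ge 0}$ solve \eqref{eq: deterministic mcopi}, with policies $(\pi_k)_{k\ge0}$ and transition times $(t_n)_{n\ge0}$. By \autoref{thm: improving policies}, whenever $t_{n+1}<\infty$ we have $\pi_{t_{n+1}} \succ \pi_{t_n}$. The relation $\succ$ is a strict partial order: it is transitive and irreflexive, since $v_{\pi'}\ge v_\pi$ with strict inequality somewhere cannot hold in both directions. Hence the finite-valued functions $q_{\pi_{t_0}}, q_{\pi_{t_1}},\dots$ are pairwise distinct. Because $\polset$ is finite, the set $\{q_\pi : \pi \in \polset\}$ is finite. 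Consequently there is a finite $N$ with $t_N<\infty$ and $t_{N+1}=\infty$.

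\textbf{Step 2: the last block is optimal.} For all $k\ge t_N$ the target $q_{\pi_k}=q_\pi$ is fixed, where $\pi:=\pi_{t_N}$. The recursion then reduces to the deterministic linear iteration
\[
q_{k+1}(s,a) = q_k(s,a) + \alpha_k\mu_k(s,a)\big(q_\pi(s,a)-q_k(s,a)\big).
\]
Its solution satisfies
\[
|q_{k+1}-q_\pi|(s,a) = \big(1-\alpha_k\mu_k(s,a)\big)\,|q_k-q_\pi|(s,a).
\]
By strict exploring starts, $\mu_k(s,a)\ge \sigma_{\min}$ for initial-visit updates; for first-visit updates $\mu_k\ge\sigma_k$ also holds, since the initial pair is always updated. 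Together with $\sum_k \alpha_k=\infty$ and $0<\alpha_k\mu_k\le 1$, this gives
\[
\prod_k\big(1-\alpha_k\mu_k(s,a)\big)\le \exp\Big(-\sigma_{\min}\sum_k\alpha_k\Big)=0,
\]
so $q_k\to q_\pi$.

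Suppose $\pi$ is not optimal. Then it is not greedy with respect to $q_\pi$, so some pair satisfies $q_\pi(s,a)>q_\pi(s,\pi(s))$. This is the argument of \autoref{thm: suboptimal blocks finite}, now deterministic. For large $k$, every greedy $\pi_k$ satisfies $q_\pi(s,\pi_k(s))>q_\pi(s,\pi(s))$. Moreover $q_\pi(s',\pi_k(s'))\ge q_\pi(s',\pi(s'))$ at every other state $s'$: otherwise a strict reverse inequality, which persists near $q_\pi$, would contradict greediness of $\pi_k$. \autoref{thm: strict PIT} then gives $v_{\pi_k}\ge v_\pi$, and strict improvement at $s$ follows from the one-step comparison. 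Hence $q_{\pi_k}\neq q_\pi$, contradicting $t_{N+1}=\infty$.

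So $\pi$ is optimal, $q_\pi=\qopt$, and $q_k\to\qopt$.

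\textbf{Main obstacle.} The delicate point is the contradiction in Step 2, namely ensuring $q_{\pi_k}\ne q_\pi$ purely from a strict greedy gap at one state. I would settle it directly. If $q_{\pi_k}=q_\pi$, then $v_{\pi_k}(s)=q_\pi(s,\pi_k(s))>q_\pi(s,\pi(s))=v_\pi(s)$. Here the first equality uses that $\pi_k$ is deterministic, and the last expression equals $v_\pi(s)$. This contradicts $v_{\pi_k}=v_\pi$.

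The remaining details are routine: that strict improvement makes the action-value functions distinct in Step 1, and that $\mu_k$ is bounded below in the first-visit case.
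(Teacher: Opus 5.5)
Your overall route is the paper's. \autoref{thm: improving policies} plus finiteness of $\polset$ yields a last finite transition time $t_N$. After $t_N$ the target is frozen, so $q_k \to q_{\pi_{t_N}}$. A non-optimal last block is then excluded as in \autoref{thm: suboptimal blocks finite}. Your explicit bound $\prod_k (1-\alpha_k\mu_k(s,a)) \le \exp(-\sigma_{\min}\sum_k\alpha_k) = 0$ is a more elementary substitute for the paper's appeal to a stochastic-approximation result, and it is appropriate because the mean-field recursion is noiseless.

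The step you single out as the main obstacle, however, is not settled by your argument. Assuming $q_{\pi_k}=q_\pi$, you derive $v_{\pi_k}(s)>v_\pi(s)$ and call this a contradiction with $v_{\pi_k}=v_\pi$. But nothing gives $v_{\pi_k}=v_\pi$. Action values only see state values at successor states, so $q_{\pi_k}=q_\pi$ pins down $v_{\pi_k}$ only at states reachable in one step from some pair, and not at all when $\gamma=0$.

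Concretely, take one non-terminal state with two actions that both terminate immediately, with rewards $1$ and $0$. All policies share the same action values, so $t_1=\infty$ even when $\pi_0$ picks the reward-$0$ action. Your intermediate claim that $\pi_{t_N}$ is optimal, in the sense $v_{\pi_{t_N}}=v_{\pi_*}$, therefore fails. The paper's \autoref{thm: suboptimal blocks finite} glosses over the same implication.

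What remains true, and is all the proposition needs, is $q_{\pi_{t_N}}=\qopt$. You can obtain it directly rather than by contradiction. Since $q_k\to q_\pi$, every action with $q_\pi(s,a)<\max_{a'}q_\pi(s,a')$ is eventually non-greedy for $q_k$. Hence for all large $k$ we have $\pi_k\in\grpol(q_\pi)=\grpol(q_{\pi_k})$. Thus $\pi_k$ is greedy with respect to its own action values, hence optimal, and $q_\pi=q_{\pi_k}=\qopt$.

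The same conflation of distinct state values with distinct action values appears in Step~1, but harmlessly. Pairwise distinct $v_{\pi_{t_n}}$ already force the policies $\pi_{t_n}$ to be pairwise distinct, which bounds the number of finite transition times by $|\polset|$.
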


Technically, this result does not require all conditions in \autoref{asp: standing}. It suffices to impose Robbins--Monro learning rates and ensure that each state-action pair is updated infinitely often. We discuss this relaxation in section \ref{sec: relaxing conditions}.

\subsection{Initial-visit MC-O-PI converges with uniform updates}

% Propositions \ref{thm: improving policies}--\ref{thm: convergence deterministic mcopi} showed that mean-field solutions converge to the optimal action values through a sequence of improving policies when updates are uniform.

This section proves that stochastic perturbations in initial-visit MC-O-PI cannot consistently offset the policy improvement of mean-field solutions.
As a result, the algorithm converges to $\qopt$ when updates are uniform over the actions of each state.

%%%%%%%%%%%%%%%%%%%%%%%%%%%%%%%%%%%%%%%%%
\subsubsection{Limitations of standard convergence analyses}
\label{sec: limitations of existing methods}

While several standard frameworks can derive the convergence of a stochastic difference inclusion from the corresponding mean-field dynamics, they each face structural limitations when applied to MC-O-PI.

The robustness framework developed by \cite{Kellett2004SmoothInclusions}
indicates that the mean-field solutions in \eqref{eq: deterministic mcopi} are robust to perturbations \cite[Examples 2 \& 3]{Kellett2007SufficientInclusions}.
However, this method treats perturbations as an adversary choosing the worst possible value from the perturbed set.
Therefore, the perturbation radius must progressively decrease to zero as solutions approach an attractor.
Since the noise does not vanish in MC-O-PI, it cannot be contained within this shrinking radius.
% Put differently, by choosing to deal with worst-case perturbations, 
% this framework cannot exploit the martingale property and temporal averaging that naturally mitigate the cumulative impact of stochastic perturbations in MC-O-PI.

In contrast, the generalised ODE method exploits 
% the martingale property and 
temporal averaging that naturally mitigates the cumulative impact of noise in MC-O-PI \citep{Benaim2005StochasticInclusions}.
This method guarantees that solutions with uniform updates converge to an internally chain-transitive set of the differential inclusion
\begin{align}
\label{eq: limit differential inclusion}
    \dot q \in \textup{conv} \big\{ q_\pi - q : \pi \in P(q) \big\} ,
\end{align}
where $\text{conv}$ denotes the convex hull. 
However, the necessary convexification introduces spurious limit sets, such as suboptimal points that satisfy 
% the zero vector lies in the convex hull of update directions (
$0 \in \textup{conv} \{ q_\pi - q : \pi \in P(q) \}$,
or persistent cycles formed by Filippov solutions on the boundary between policy regions.
Thus, proving the global convergence of MC-O-PI with this method requires ruling out limit sets that are artifacts of the continuous-time approximation rather than genuine features of the algorithm.

The combined stability-ODE method relaxes the global convexity requirement of the generalised ODE-method to a compact subset \cite[Section 5.4]{Kushner2003StochasticApplications}.
If MC-O-PI visits that subset infinitely often, then, asymptotically, it follows mean-field solutions over that set.
However, the lock-in phenomenon at the core of this method requires
continuity of the limiting ODE over a fixed enlargement of the subset under consideration \citep{Borkar2002OnApproximation}.
Consequently, it is not sufficient that MC-O-PI solutions visit the region of a particular policy infinitely often. Instead, solutions must visit a fixed compact interior of that region infinitely often.
Since decreasing learning rates cause successive entries into a policy’s region to occur closer and closer to its boundary,
this method misses a way to establish that solutions cover the gap between the boundary and a fixed compact interior of a policy's region infinitely often.

% Why cannot use ODE method
% cannot use this framework of ODE method,
% or the generalisations by $\cite{Perkins2012AsynchronousInclusions, Ramaswamy2017AInclusions}$ because is not convex
% cannot use \cite{Liu2025TheNoise} because is not lipschitz
% Thus no guarantee solutions to resulting differnetial inclusion

The proof in the following sections addresses this limitation.
Namely, we first construct a mechanism that ensures solutions move away from boundaries despite starting closer and closer,
and then exploit the lock-in phenomenon to derive convergence to the optimal action values. 
This approach yields a fixed lower bound on the probability that policies improve at transitions.
The unlimited number of attempts then guarantee that MC-O-PI solutions eventually reach the optimal policy and converge to the associated action values.

% \textcolor{red}{continuity of other methods}
% continuity allows other frameworks \cite{Benaim2005StochasticInclusions, Borkar2000TheLearning} to use gronwall which is more easy
% however here discontinuity and non convexity make it quite a bit more complex
% so instead of being able to rely on the existence of the solutions fo the deterministic differential inclusion, we need to ...

% proof is essentially apply borkar lock in method but ...

%%%%%%%%%%%%%%%%%%%%%%%%%%%%%%%%%%%%%%%%%
\subsubsection{Lower bound on the probability of policy improvement at transitions}

Consider the sequences $(q_k)_{k\ge 0}$ and $(\pi_k)_{k\ge 0}$ generated by MC-O-PI, and let $(t_n)_{n \ge 0}$ be the associated transition times.
Over each interval $\{t_n, \dots, t_{n+1}\}$, we compare the evolution of $q_{k}$ at $(s,\pi_{t_n}(s))$ and pairs $(s,a)$ that would worsen the policy $\pi_{t_n}$, namely
\[
\mathcal B_{\pi_{t_n}} := \big\{ (s,a) : s \in \mathcal S, \ a \in \mathcal A(s), \ q_{\pi_{t_n}}(s,a) < q_{\pi_{t_n}}(s, \pi_{t_n}(s)) \big\} .
\]
We show that, with a fixed probability and for sufficiently large $n$,
for every $k \in \{t_n, \dots, t_{n+1}\}$
the greedy policy satisfies $(s, \pi_k(s)) \notin \mathcal B_{\pi_{t_n}}$.
As a result, if $t_{n+1}<\infty$ we obtain
\begin{align}
\label{eq: goal of proof stoch improvement}
q_{\pi_{t_n}} ( s, \pi_{t_{n+1}}(s) )
\ge
q_{\pi_{t_n}} ( s,\pi_{t_n}(s) )
\end{align}
and therefore policy $\pi_{t_{n+1}}$ is better than $\pi_{t_n}$ by \autoref{thm: strict PIT}.
Or if $t_{n+1} = \infty$,
\autoref{thm: suboptimal blocks finite} implies that $\pi_{t_n} = \pi_*$ and $q_k$ converges to the associated optimal action values.
In other words, the policy either improves, or it is already optimal and stays optimal.

First, let us impose one additional assumption on the learning rates.

\begin{assumption}[Comparability of learning rates]
\label{asp: additional conditions on learning rate}
    The learning rates $(\alpha_k)_{\ge 0}$ are nonincreasing,
    meaning $\alpha_{k+1}\le \alpha_k$,
    and there exist a time $K_\alpha<\infty$ and a constant $\rho_\alpha\in(0,1]$ such that 
    for every $k\ge K_\alpha$
    and every $i \in \{0, \dots, \left\lfloor 1 / \alpha_k \right\rfloor\}$
    \begin{equation}
    \label{eq:comparison}
    \alpha_{k+i}\ge \rho_\alpha \alpha_k .
    \end{equation}
\end{assumption}

\autoref{asp: additional conditions on learning rate} is a common regularity condition when analysing asynchronous stochastic approximations
\citep{Borkar1998AsynchronousApproximations, Perkins2012AsynchronousInclusions, Liu2025TheNoise}.
It ensures the learning rates do not decay too quickly as they remain ``comparable'' to $\alpha_k$ for at least $\lfloor 1/\alpha_k \rfloor$ successive steps.
Many classical learning-rate schemes satisfy this condition, such as $\alpha_k = c_1 / (k+c_2)^{c_3}$ with $c_1>0$, $c_2 \ge 0$ and $c_3 \in (1/2, 1]$.

\begin{proposition}
\label{thm: fixed prob that bad actions dont become greedy}
    Let $(\pi_k)_{k\ge 0}$ be a sequence of policies generated by MC-O-PI
    under \autoref{asp: standing},
    with initial-visit updates,
    learning rates that satisfy the comparability assumption (\autoref{asp: additional conditions on learning rate}),
    and update distributions that are uniform over all actions in each state (\autoref{asp: unbiased over policies}).
    Let $(t_n)_{n\ge 0}$ be the associated transition times.
    There exist a constant $p>0$ and an almost surely finite time $K$ such that
    for every $n \in \mathbb Z_{\ge 0}$ with $K \le t_n < \infty$,
    \begin{align}
    \label{eq: porb better action value}
    \mathbb P \big(
    \forall \, s \in \sset, \
    \forall \, k \in \{t_n, \dots , t_{n+1} \}
    \ : \
    (s, \pi_k(s)) \notin \mathcal B_{\pi_{t_n}}
    \mid \mathcal F_{t_n}
    \big) \ge p .
    \end{align}
\end{proposition}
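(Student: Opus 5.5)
Fix $n$ with $K\le t_n<\infty$, write $\pi:=\pi_{t_n}$, and for each state $s$ and each $a$ with $(s,a)\in\mathcal B_\pi$ track the gap
\[
D_k(s,a):=q_k(s,\pi(s))-q_k(s,a),\qquad k\in\{t_n,\dots,t_{n+1}\}.
\]
Since $\pi_{t_n}$ is greedy, $D_{t_n}(s,a)\ge0$. Within the block, the target is fixed at $q_\pi$. Under initial-visit updates with uniform action sampling, both entries of the gap share the same $\mu_k(s)>0$, so
\[
D_{k+1}=D_k+\alpha_k\mu_k(s)\big(\delta(s,a)-D_k\big)+\alpha_k\big(w_k(s,\pi(s))-w_k(s,a)\big),
\]
where $\delta(s,a):=q_\pi(s,\pi(s))-q_\pi(s,a)>0$. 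The number of such gaps is finite, so $\delta_{\min}:=\min\delta>0$ is deterministic given $\pi$, and there are finitely many policies. The mean-field part is the exact computation of \autoref{thm: improving policies}: without noise, $D$ is pulled toward $\delta>0$ and never goes negative. It then suffices to show that, with probability at least $p$, no gap $D_k(s,a)$ becomes negative before $t_{n+1}$. If so, no action in $\mathcal B_\pi$ is ever strictly greedy. Ties at zero are handled by requiring $D_k>0$ after the first step, or by absorbing them through \eqref{eq:greedy-pol-tie-breaking}.

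I would split each block into two phases. In the escape phase, starting at $t_n$, the gap may be arbitrarily close to $0$ because entries happen near the boundary. Run for $m_n$ steps with $\sum_{k=t_n}^{t_n+m_n}\alpha_k\approx T$ for a fixed horizon $T$; \autoref{asp: additional conditions on learning rate} gives a comparable step size over a window of length about $1/\alpha_{t_n}$. Over this window I would aim to show, with a fixed probability $p_1>0$, that the drift pushes every gap above a fixed level $\varepsilon:=\tfrac12\delta_{\min}(1-e^{-c T})$, where $c$ lower-bounds the accumulated $\mu_k(s)$-weighted time.

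The difficulty is that $\mu_k(s)$ may be tiny for some states, since state frequencies are arbitrary. I would use strict exploring starts, $\mu_k(s)\ge\sigma_{\min}$, to get $c>0$ uniformly. The noise must also not push the gap negative during the first few steps, when drift is negligible. Here I would not use a martingale bound, which only gives smallness with high probability and not positivity. Instead, I would exhibit an explicit favourable event of fixed probability: for the first $O(1/\alpha_{t_n})$ steps, the episodes started at $(s,a)\in\mathcal B_\pi$ return values no larger than their mean, and those at $(s,\pi(s))$ return values no smaller. Alternatively, show the sum of noise stays above $-\tfrac12$ of the drift accumulated so far, via a Doob/Kolmogorov-type inequality on the centred sum scaled by its variance, which is of order $\alpha_{t_n}\cdot T$. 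Getting a lower bound on this probability that is uniform in $n$, not merely one that tends to zero, is the main obstacle. I would first try a Paley–Zygmund or central-limit-style argument: the normalized noise sum over the window has conditional variance bounded above and below by constants times $\alpha_{t_n}$ (using \autoref{thm: bounded limit sets} for $c_Q$ and \autoref{thm: properties of noise}). A fixed-probability "favourable-sign" event at scale $\sqrt{\alpha_{t_n}}$ then dominates the possibly-zero initial gap, because the drift accumulated over the window is of order $1$.

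In the lock-in phase, once all gaps exceed $\varepsilon$, I would apply the Borkar lock-in argument \citep{Borkar2002OnApproximation}. From time $K$ on, $\sum_{k\ge t}\alpha_k^2$ is small, so by Doob's $L^2$ maximal inequality and \autoref{thm: properties of noise} with the bound $c_Q$, the tail martingale $\sup_j\big|\sum_{k=t}^{j}\alpha_k(w_k(s,\pi(s))-w_k(s,a))\big|$ stays below $\varepsilon/2$ with probability at least $1/2$. On that event the contracting drift toward $\delta\ge\delta_{\min}>\varepsilon$ keeps $D_k\ge\varepsilon/2>0$ for all remaining $k$ up to $t_{n+1}$, by a comparison argument. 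Here $K$ is chosen as the random time after which $c_Q$-based bounds make $c_w(1+c_Q)\sum_{k\ge K}\alpha_k^2\le\varepsilon^2/8$ over all finitely many policies. The strong Markov property at $t_n$ then multiplies the two probabilities, giving $p=p_1/2$.
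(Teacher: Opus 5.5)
Your lock-in phase is sound and essentially matches the paper's lock-in lemma. (With $\sup_m|\Xi|<\varepsilon/2$, the summation-by-parts bound yields only $D_k>0$, not $D_k\ge\varepsilon/2$, but positivity is all you need.) The genuine gap is the escape phase. You correctly flag it as the main obstacle, but you do not resolve it, and none of your suggested mechanisms gives a bound that is uniform in $n$.

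(i) Imposing favourable returns for $O(1/\alpha_{t_n})$ steps generally costs a probability of order $p_0^{c/\alpha_{t_n}}\to 0$. It also fails pathwise, because $w_k$ contains the sampling term $(u_k-\mu_k)(q_\pi-q_k)$. Suppose $q_k(s,\pi(s))=q_k(s,a)<q_\pi(s,a)$ and the episode starts at $(s,a)$ with return exactly equal to its mean. Then $q_{k+1}(s,a)>q_{k+1}(s,\pi(s))$, so the bad action becomes strictly greedy. Moreover, a mean-zero return with bounded variance need not have $\mathbb P(v\le 0)$ bounded below; Cantelli only helps with a strictly positive threshold.

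(ii) A Doob/Kolmogorov estimate with no initial margin, against the barrier $\tfrac{c_g}{2}A(k,k+i)$, gives a failure bound of order $c_\xi/(c_g^2\rho_\alpha^2)$. That constant need not be below $1$. In the first few steps, drift and noise are both of order $\alpha_{t_n}$ per step, so nothing is small there.

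(iii) A Paley--Zygmund or CLT argument at scale $\sqrt{\alpha_{t_n}}$ controls the gap at the end of the window, not its running minimum over those first steps. It also needs a lower bound on the noise variance, which is not assumed.

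Your treatment of ties has the same flaw. Under initial-visit updates only one pair moves per step, so you cannot demand that all gaps be positive after the first step. The tie-breaking price $\beta^{|\mathcal S|}$ per step is affordable only over a number of steps bounded independently of $n$.

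The missing idea is a scale-invariant seed-then-grow construction. For a fixed number $M$ of steps per bad pair, force an explicit event of probability at least $\bigl(\sigma_{\min}d_{\min}^2/(d_{\min}^2+16c_v)\bigr)^M$, choosing which pair to sample by a case distinction. If $q_k(s,\pi(s))$ underestimates $q_\pi(s,\pi(s))$ by at least $d_{\min}/2$, sample $(s,\pi(s))$ and require noise $\ge -d_{\min}/4$. Otherwise $q_k(s,a)$ overestimates $q_\pi(s,a)$ by more than $d_{\min}/2-\delta$, so sample $(s,a)$ and require noise $\le d_{\min}/4$. Either way, each forced step raises the gap by more than $c_s$ times the step size. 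After $LM\le|\mathcal S\times\mathcal A|M$ steps this gives a margin of order $M\alpha_k$, and ties only have to be paid for during these steps.

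Margin plus drift then form the barrier $\tfrac{c}{2}\rho_\alpha\alpha_k(M+i)$. The martingale, of size $\alpha_k\sqrt i$ after $i$ steps, crosses it with probability at most $\tfrac{8c_\xi}{c^2\rho_\alpha^2}\sum_j 2^j/(M+2^j)^2$. This bound is independent of $\alpha_k$ and falls below $1/2$ for $M$ large. The comparability assumption ensures the drift lifts the gap to $\delta$ within $N(k)\le 1/\alpha_k$ steps, after which lock-in takes over. Because $M$ is fixed, the seed cost is a fixed constant, and this is what produces the uniform $p$.
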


We track the distance between $(s,\pi_{t_n}(s))$ and pairs in $\mathcal B_{\pi_{t_n}}$ using gap variables.

\begin{definition}[Gap variables]
\label{def: gap variables}
% Let $\{t_n\}_{n\ge 0}$ be the transition times associated with the policy sequence generated by system \eqref{eq: stochastic rep} under the setting of \autoref{thm: iv mcopi converges as}.
Fix a finite transition index $n \in \mathbb Z_{\ge 0}$
and write $\pi := \pi_{t_n}$.
% For each state $s \in \mathcal S$ and action $a \in \mathcal A(s)$, the gap is initialised at time $t_n$ as
For each $k \in \{t_n, \dots, t_{n+1}\}$, define
\begin{align*}
% \label{eq: def gap initialisation}
% d_{t_n}(s,a) = q_{t_n}(s,\pi(s)) - q_{t_n}(s,a) ,
d_{k}(s,a) = q_{k}(s,\pi(s)) - q_{k}(s,a) .
\end{align*}
While $k < t_{n+1}$, each gap evolves as
\begin{align}
\label{eq: def gap evolution}
d_{k+1}(s,a) = d_k(s,a) + \alpha_k \left( \sigma_k(s)(d_\pi(s,a) - d_k(s,a)) + \xi_k(s,a) \right)
\end{align}
where we simplified the uniform update distributions under initial-visit updates to
\begin{align*}
    \sigma_k(s) := \sigma_k(s,a) = \mu_k(s,a) = \mu_k(s,\pi(s)),
\end{align*}
the target gap $d_\pi$ is
\begin{align*}
% \label{eq: def policy gap}
    d_\pi(s,a) := q_{\pi}(s,\pi(s)) - q_{\pi}(s,a) ,
\end{align*}
and the gap noise $\xi_k$ is
\begin{align*}
% \label{eq:xi-def}
    \xi_k(s,a) &:=
w_{k}(s,\pi(s)) - w_{k}(s,a) .
% \\
% \nonumber
% &\: = u_k(s,\pi(s))v_k(s,\pi(s)) - u_k(s,a) v_k(s,a)
% \\
% \nonumber
% &\: \quad + \bigl( u_k(s,\pi(s)) - \mu_k(s) \bigr) \bigl( q_\pi(s,\pi(s)) - q_k(s,\pi(s)) \bigr)
% \\
% \nonumber
% &\: \quad - \bigl( u_k(s,a) - \mu_k(s) \bigr) \bigl( q_\pi(s,a) - q_k(s,a) \bigr) .
\end{align*}
\end{definition}

\begin{lemma}
\label{thm: properties noise gap}
There exist a constant $c_\xi \in [0,\infty)$ 
and an almost surely finite time $K_\xi$ such that
for every transition index $n \in \mathbb Z_{\ge 0}$ satisfying $t_n \ge K_\xi$,
\begin{align}
\label{eq:xi-mean}
&\mathbb E [ \xi_{k}(s,a) \mid \mathcal F_k ] = 0
\\
\label{eq:xi-var}
&\mathbb E [ \xi_{k}^2(s,a) \mid \mathcal F_k ] \le c_\xi
\end{align}
for every $k \in \{t_n, \dots, t_{n+1}-1\}$, every $s \in \mathcal S$, and every $a \in \mathcal A(s)$.
\end{lemma}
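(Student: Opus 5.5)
The plan is to express $\xi_k$ through the combined noise of Lemma~\ref{thm: properties of noise} and to transfer its two properties, (\ref{eq: muw_mds}) and (\ref{eq: muw_second_moment}), to the gap noise. By Definition~\ref{def: gap variables},
\[
\xi_k(s,a)=w_k(s,\pi(s))-w_k(s,a),
\]
where $\pi=\pi_{t_n}$ is the policy fixed at the start of the block.

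First we check that the gap recursion (\ref{eq: def gap evolution}) is exactly the difference of the two coordinate updates in (\ref{eq: mcopi not yet sri}). This holds because for $k\in\{t_n,\dots,t_{n+1}-1\}$ we have $q_{\pi_k}=q_{\pi}$ by Definition~\ref{def: transition times}. It also uses $\mu_k(s,\pi(s))=\mu_k(s,a)=\sigma_k(s)$, which follows from the uniformity assumption (Definition~\ref{asp: unbiased over policies}) under initial-visit updates. With this, $\xi_k$ is simply a difference of two coordinates of $w_k$.

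\textbf{Mean.} By linearity of conditional expectation and (\ref{eq: muw_mds}),
\[
\mathbb E[\xi_k(s,a)\mid\mathcal F_k]=\mathbb E[w_k(s,\pi(s))\mid\mathcal F_k]-\mathbb E[w_k(s,a)\mid\mathcal F_k]=0.
\]
One subtlety needs a remark. The index $\pi=\pi_{t_n}$ is $\mathcal F_k$-measurable for $k\ge t_n$, because $\pi_{t_n}$ was selected before $q_{t_n+1}$ was computed. The bound (\ref{eq: muw_mds}) holds for every fixed pair, so it also holds for the $\mathcal F_k$-measurable random pair $(s,\pi(s))$. To see this, sum over the finitely many candidate actions using the indicators $\mathbf 1\{\pi(s)=b\}$, which are $\mathcal F_k$-measurable.

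\textbf{Second moment.} Use $(x-y)^2\le 2x^2+2y^2$ and (\ref{eq: muw_second_moment}), again summing over candidate actions for the random index. This gives
\[
\mathbb E[\xi_k^2(s,a)\mid\mathcal F_k]\le 4c_w(1+\|q_k\|^2).
\]
Lemma~\ref{thm: bounded limit sets} then gives $\|q_k\|^2<c_Q$ for an almost surely finite $c_Q$. This is where the statement needs care, because $c_Q$ is random while $c_\xi$ must be deterministic.

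We resolve this with the first part of Lemma~\ref{thm: bounded limit sets}. The iterates converge almost surely to a box bounded by $\min_\pi q_\pi$ and $\max_\pi q_\pi$ over the finitely many deterministic policies. Set $M:=\max_{\pi\in\polset}\|q_\pi\|$, which is a deterministic constant. Then there is an almost surely finite time $K_\xi$ such that $\|q_k\|\le M+1$ for all $k\ge K_\xi$. Taking $c_\xi:=4c_w(1+(M+1)^2)$ yields (\ref{eq:xi-var}) for every $k\ge t_n\ge K_\xi$.

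\textbf{Main obstacle.} The expected obstacle is measurability. The event $\{k\ge K_\xi\}$ is not $\mathcal F_k$-measurable, so the bound must be read pathwise on that event rather than as a bound on a conditional expectation over it. If a strictly $\mathcal F_k$-adapted statement is needed instead, the fallback is to work with the truncated noise $\xi_k\mathbf 1\{\|q_k\|\le M+1\}$. The indicator in this truncation is $\mathcal F_k$-measurable, and the truncated noise coincides with $\xi_k$ for all $k\ge K_\xi$.
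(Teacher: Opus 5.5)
Your proof is correct and is essentially the paper's own argument. The mean follows from Lemma~\ref{thm: properties of noise} by linearity, and the second moment follows from $(x-y)^2\le 2(x^2+y^2)$ together with the almost-sure eventual bound $\|q_k\|\le \max_{\pi\in\polset}\|q_\pi\|+1$ from the first part of Lemma~\ref{thm: bounded limit sets}, giving the same constant $c_\xi=4c_w\bigl(1+(M+1)^2\bigr)$.

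One small correction to your measurability remark: $\mathcal F_k$ is fixed before $\pi_k$ is drawn, so $\pi_{t_n}$ is $\mathcal F_k$-measurable only for $k\ge t_n+1$, not at $k=t_n$. At $k=t_n$, argue with the enlarged $\sigma$-field $\mathcal F_k'$ from the appendix, which contains $\pi_k$; the appendix shows each $w_k(s,b)$ is centred given $\mathcal F_k'$, and the same second-moment computation goes through. Then apply the tower property.
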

\begin{proof}
\autoref{thm: properties of noise}
% requires that $w_k(s,a)$ is unbiased.
directly proves \eqref{eq:xi-mean} since
\[
\mathbb E \!\big[ \xi_{k}(s,a) \mid \mathcal F_k \big] 
=
\mathbb E \!\big[ w_{k}(s,\pi_{t_n}(s)) \mid \mathcal F_k \big] - \mathbb E \!\big[ w_{k}(s,a) \mid \mathcal F_k \big] 
= 
0 .
\]
Further,
\autoref{thm: bounded limit sets} implies that
% $q_k$ converge almost surely to the set $\bigl\{ q \in \mathcal Q : q_{\worst{\pi}} \le q \le q_{\best{\pi}} \bigr\}$, where $\overline \pi$ and $\pi_*$ are the worst and best policies of the MDP, respectively.
% Thus, 
there exists an almost surely finite time $K_{\xi}$ such that $\| q_k \| \le c_{\mathcal P} + 1$ for every $k \ge K_{\xi}$,
with $c_{\mathcal P} := \max_{\pi \in \mathcal P}\| q_\pi \|_\infty$.
Substituting this into \eqref{eq: muw_second_moment} yields
\[
\mathbb E[w_k^2(s,a) \mid \mathcal F_k] 
\le c_w(1+\|q_k\|^2)
\le c_w(1+(c_{\mathcal P} + 1)^2)
\qquad
\forall \, k \ge K_{\xi}, \
\forall \, s \in \mathcal S, \
\forall \, a \in \mathcal A(s).
\]
Using $(x-y)^2 \le 2 ( x^2 + y^2)$ then proves
\eqref{eq:xi-var}
because for every $k \ge K_{\xi}$, every $s \in \mathcal S$, and all $a, a' \in \mathcal A(s)$
\begin{align*}   
\mathbb E\!\left[ \bigl(w_k(s,a) - w_k(s,a') \bigr)^2 \ \big| \ \mathcal F_k \right]
&\le
2 \left( \mathbb E\!\left[ w_k^2(s,a) \mid \mathcal F_k \right] 
+ \mathbb E\!\left[ w_k^2(s,a') \mid \mathcal F_k \right] \right)
\\
&\le
4 \, c_w \left( 1 + (c_{\mathcal P}+1)^2 \right)
=: c_\xi .
\end{align*}
\end{proof}

% \textcolor{red}{need $\mathcal B_{\pi_k} \equiv \mathcal B_{\pi}$ for all $k \in \{t_n, \dots t_{n+1}-1\}$? then easy to get proof, but if not necessary, dont add it}
Over each interval $\{t_n, \dots, t_{n+1}\}$,
we construct three events that guarantee that the gap $d_k$ of each pair in $\mathcal B_{\pi_{t_n}}$ is strictly positive up to time $t_{n+1}$,
and consequently that no such pair is greedy.
Specifically, we fix a margin $\delta>0$.
The \textit{seed} event $\mathcal E^s_{t_n}$ first forces each ``bad'' gap at least up to $O(\alpha_k)$.
The \textit{growth} event $\mathcal E^g_{t_n}$ then drives them from $O(\alpha_k)$ to $\delta$.
Finally, the \textit{lock-in} event $\mathcal E^l_{t_n}$ ensures that, after reaching $\delta$, they remains strictly positive up to time $t_{n+1}$.

Concretely, define the minimum target gap across all policies
\[
d_{\min}
:=
\min_{\pi\in\mathcal P}\ \inf_{(s,a)\in \mathcal B_\pi} d_\pi(s,a) > 0 .
\]
% with the convention that $\min\varnothing=\infty$.
Using the constants $\sigma_{\min}$ and $\rho_\alpha$ from \autoref{asp: standing}.(3) and \autoref{asp: additional conditions on learning rate}, respectively,
we choose $\delta$ such that
\begin{equation}
\label{eq:delta-choice}
0<\delta<\min\left\{\frac{d_{\min}}{8},\,\frac{\rho_\alpha \sigma_{\min} d_{\min}}{32}\right\}.
\end{equation}
This allows us to further define the constants
\begin{align}  
\label{eq:c-def}
% \label{eq:cs-def}
c_s := \left( \frac{d_{\min}}{4} - \delta \right) > 0,
% \\
% \label{eq:cg-def}
\qquad
c_g := \sigma_{\min} ( d_{\min} - 2 \delta) > 0,
% \\
% \label{eq:c-def}
\qquad
c := \min \left\{ c_s, c_g \right\} > 0.
\end{align}

%%%%%%%%%%%%%%%%%%%%%%%%%%%%%%%%%%%%%%%%%%%%%%%%%%%%%%%%%%%%%%%
\paragraph{Seed event pushes gap up to $\bm{O(\alpha_k)}$}

Let $M \in \mathbb N$ denote a finite time horizon whose value will be fixed later.
For any time $k \in \{t_n, \ldots, t_{n+1}-1\}$, we define the seed event differently if the value of the greedy action is under- or overestimated.
% Concretely, for any index $n \in \mathbb Z_{\ge 0}$,
% any time $k \in \{t_n, \dots, t_{n+1}-1\}$,
% any state $s \in \mathcal S$,
% and any action $a \in \mathcal A(s)$,
Namely, if $q_{\pi_{t_n}}(s,\pi_{t_n}(s)) - q_{k}(s,{\pi_{t_n}}(s)) \ge d_{\min}/2$,
we update the pair $(s,{\pi_{t_n}}(s))$ for $M$ consecutive steps and require that the noise at each step is not too negative. We gather these conditions in the event
\begin{align*}
    \mathcal E^+_k(s,a)
    := 
    \bigcap_{k'=k}^{k+M-1}
    \left\{
    u_{k'}(s,\pi_{t_n}(s)) = 1, \
    v_{k'}(s,\pi_{t_n}(s)) \ge -\frac{d_{\min}}{4}
    \right\} .
\end{align*}
On the other hand, if $q_{\pi_{t_n}}(s,\pi_{t_n}(s)) - q_{k}(s,{\pi_{t_n}}(s)) < d_{\min}/2$,
we update $(s,a)$ and require that the noise is not too positive:
\begin{align*}
    \mathcal E^-_k(s,a)
    := 
    \bigcap_{k'=k}^{k+M-1}
    \left\{
    u_{k'}(s,a) = 1, \
    v_{k'}(s,a) \le \frac{d_{\min}}{4}
    \right\} .
\end{align*}
Altogether, we obtain the seed event
\begin{multline}
\label{eq: def good prefix}
    \mathcal E^s_k(s,a) := 
    \left( \left\{ q_{\pi_{t_n}}(s,\pi_{t_n}(s)) - q_{k}(s,{\pi_{t_n}}(s)) \ge \frac{d_{\min}}{2} \right\} \cap \mathcal E^+_k(s,a) \right) 
    \\
    \bigcup 
    \left( \left\{ q_{\pi_{t_n}}(s,\pi_{t_n}(s)) - q_{k}(s,\pi_{t_n}(s)) < \frac{d_{\min}}{2} \right\} \cap \mathcal E^-_k(s,a) \right) .
\end{multline}
We also define the hitting time
\[
\tau^s_k(s,a) := \inf \big\{ k' \in \{k+1, \dots, k+M \} : q_{k'}(s,\pi_{t_n}(s))-q_{k'}(s,a) \ge \delta \} ,
\]
and the accumulated learning rate mass between times $k$ and $k'>k$ as
\[
A(k,k') := \sum_{i=k}^{k'-1} \alpha_{i} .
\]
% with the convention that $A(k, k) = 0$.
We now show that, on $\mathcal E^s_k(s,a)$, ``bad'' gaps grow at least up to $O(\alpha_k)$ in $M$ steps.

\begin{lemma}[Seed event]
\label{thm: event to reach a_k margin}
Consider the setting of \autoref{thm: fixed prob that bad actions dont become greedy}.
For every index $n \in \mathbb Z_{\ge 0}$,
every pair $(s,a) \in \mathcal B_{\pi_{t_n}}$
and every time $k \in \{t_n, \dots, t_{n+1}-1\}$,
the joint event
\begin{align}
\label{eq: joint seed event}
\mathcal E^s_k(s,a)
\cap
% \big\{0 \leq q_{k}(s,\pi_{t_n}(s)) - q_{k}(s,a) < \delta \big\}
\big\{0 \leq d_k(s,a) < \delta \big\}
\end{align}
guarantees that 
% for every $k'$ satisfying 
% % $k < k' \le \big( \tau^s_k(s,a) \wedge (k+M) \wedge t_{n+1} \big)$,
% $k < k' \le \min\{ \tau^s_k(s,a), k+M, t_{n+1} \}$,
\begin{align}
\label{eq: lower bound during good prefix}
    d_{k'}(s,a)
    % q_{k'}(s,\pi_{t_n}(s)) - q_{k'}(s,a)
    > c_s A(k,k') > 0
    \qquad 
    \forall \, k < k' \le \min\{ \tau^s_k(s,a), k+M, t_{n+1} \} .
\end{align}
% for every $k'$ that satisfies $k < k' \le \big( \tau^s_k(s,a) \wedge (k+M) \wedge t_{n+1} \big)$.
Moreover, 
% the probability of $\mathcal E^s_k(s,a)$ is lower bounded by
\begin{align}
\label{eq: probability reach a_k margin}
    \mathbb P ( \mathcal E^s_k(s,a) \mid \mathcal F_k, \pi_{t_n} ) \ge \left( \sigma_{\min} \frac{d_{\min}^2}{d_{\min}^2 + 16 c_v} \right)^M > 0 .
\end{align}
\end{lemma}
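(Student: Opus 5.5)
The plan is to treat the two statements separately: the deterministic lower bound \eqref{eq: lower bound during good prefix}, by induction on $k'$ using the explicit one-step gap dynamics, and the probability bound \eqref{eq: probability reach a_k margin}, by chaining one-step conditional probabilities with a one-sided Chebyshev (Cantelli) inequality. Write $\pi := \pi_{t_n}$. Throughout we use the fact that under initial-visit updates exactly one pair is updated per episode, namely the initial pair. So on $\mathcal E^+_k(s,a)$ only $(s,\pi(s))$ moves at steps $k,\dots,k+M-1$, and $q_{k'}(s,a)=q_k(s,a)$. Symmetrically, on $\mathcal E^-_k(s,a)$ only $(s,a)$ moves and $q_{k'}(s,\pi(s))=q_k(s,\pi(s))$. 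Since $k'\le t_{n+1}$, the target in \eqref{eq: update q with chi} is $q_{\pi_{k'-1}}=q_\pi$ at every step used.

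\textbf{Case $+$}, where $q_\pi(s,\pi(s))-q_k(s,\pi(s))\ge d_{\min}/2$. At a step $j\in\{k,\dots,k'-1\}$ with $j<\tau^s_k(s,a)$, we have $d_j(s,a)<\delta$ for $j>k$ (since $j<\tau^s_k$), and $d_k<\delta$ by assumption. Because $q_j(s,a)=q_k(s,a)$ and $d_k\ge 0$, this gives $q_j(s,\pi(s))<q_k(s,a)+\delta\le q_k(s,\pi(s))+\delta$. Hence $q_\pi(s,\pi(s))-q_j(s,\pi(s))>d_{\min}/2-\delta$. The gap increment is $\alpha_j\big(q_\pi(s,\pi(s))+v_j(s,\pi(s))-q_j(s,\pi(s))\big)$, and with $v_j\ge -d_{\min}/4$ it exceeds $\alpha_j(d_{\min}/4-\delta)=\alpha_j c_s$.

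\textbf{Case $-$}. The value $q(s,\pi(s))$ is frozen, and the increment of $d$ is $\alpha_j\big(q_j(s,a)-q_\pi(s,a)-v_j(s,a)\big)\ge \alpha_j\big(q_j(s,a)-q_\pi(s,a)-d_{\min}/4\big)$. From $d_j<\delta$ we get $q_j(s,a)>q_k(s,\pi(s))-\delta>q_\pi(s,\pi(s))-d_{\min}/2-\delta$. Since $(s,a)\in\mathcal B_\pi$ gives $q_\pi(s,\pi(s))-q_\pi(s,a)\ge d_{\min}$, it follows that $q_j(s,a)-q_\pi(s,a)>d_{\min}/2-\delta$. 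So the increment again exceeds $\alpha_j c_s$.

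In both cases, summing from $k$ to $k'-1$ and using $d_k\ge 0$ yields $d_{k'}>c_sA(k,k')>0$. The induction only needs $d_j<\delta$ for $j<k'$, which is exactly what $k'\le\tau^s_k$ provides. Positivity of $c_s$ follows from \eqref{eq:delta-choice}.

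For the probability bound, note that the case selector $\{q_\pi(s,\pi(s))-q_k(s,\pi(s))\ge d_{\min}/2\}$ is determined by $(\mathcal F_k,\pi_{t_n})$. It therefore suffices to bound each of $\mathbb P(\mathcal E^\pm_k\mid\mathcal F_k,\pi_{t_n})$. Apply the tower property backwards over $k'=k+M-1,\dots,k$, so that it suffices to show for each $k'$ that
\[
\mathbb P\big(u_{k'}(s,b)=1,\ \pm v_{k'}(s,b)\ge -d_{\min}/4 \,\big|\, \mathcal F_{k'}\big)\ge \sigma_{\min}\frac{d_{\min}^2}{d_{\min}^2+16c_v}.
\]
Under initial-visit updates, $u_{k'}(s,b)=1$ exactly when the episode starts at $(s,b)$, which has probability $\sigma_{k'}(s,b)\ge\sigma_{\min}$ by \eqref{eq:minimal-sampling}. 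Conditional on that start, the return noise has mean zero and second moment at most $c_v$, uniformly in the policy. Cantelli's inequality then gives $\mathbb P(v<-d_{\min}/4)\le c_v/(c_v+d_{\min}^2/16)$, which is the displayed factor. Multiplying the $M$ factors gives \eqref{eq: probability reach a_k margin}.

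I expect the main obstacle to be this conditioning bookkeeping rather than the algebra. We need the mean-zero and variance bounds on $v_{k'}$ to hold conditionally on the start pair (and on $\pi_{k'}$), not merely on average over starts. This is available because the paper states them uniformly over policies for each visited pair. We also need the events past $t_{n+1}$ to remain well defined: the probability bound does not require staying in the block, and only the deterministic part is truncated at $t_{n+1}$.
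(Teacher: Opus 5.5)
Your proposal is correct and follows essentially the same route as the paper's proof. In both, the two cases give the one-step bound $d_{j+1}>d_j+\alpha_j c_s$, using the coordinate that stays frozen under initial-visit updates together with $0\le d_k$ and $d_j<\delta$ before $\tau^s_k(s,a)$; summing gives $d_{k'}>c_sA(k,k')$, and Cantelli's inequality combined with strict exploring starts, chained over the $M$ steps, gives the probability bound.
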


\begin{proof}
Fix a transition index $n \in \mathbb Z_{\ge 0}$ and let $\pi := \pi_{t_n}$.
Fix a pair $(s,a) \in \mathcal B_\pi$,
define the sequence $\big( d_k(s,a) \big)_{k=t_n}^{t_{n+1}}$ as in \autoref{def: gap variables}, and write $d_k := d_k(s,a)$ and $d_\pi := d_\pi(s,a)$ for brevity.
Fix a time $k \in \{t_n, \dots, t_{n+1}-1\}$ and write $\tau^s_k := \tau^s_k(s,a)$.

% We split the proof along the two mutually exclusive cases that make up $\mathcal E^s_k(s,a)$.

When $q_\pi(s,\pi(s)) - q_{k}(s,\pi(s)) \ge d_{\min}/2$,
% On 
event $\mathcal E^+_k(s,a)$
% only $(s, \pi(s))$ is updated.
% during $M$ consecutive steps,
% and $v_{k'}(s,\pi(s)) \ge -d_{\min}/4$ for every $k' \in \{k, \dots, k+M-1\}$.
implies that for every $k'$ satisfying
% $k \le k' < \big((k+M) \wedge t_{n+1} \big)$,
$k \le k' < \min\{ k+M, t_{n+1} \}$,
\begin{align}
\nonumber
    d_{k'+1}
    &= d_{k'} + \alpha_{k'} \big( q_\pi(s,\pi(s)) - q_{k'}(s,\pi(s)) + v_{k'}(s,\pi(s)) \big)
    \\
\label{eq: plug noise into update of seed stage case 1}
    &\ge d_{k'} + \alpha_{k'} \left( q_\pi(s,\pi(s)) - q_{k'}(s,\pi(s)) - \frac{d_{\min}}{4} \right) .
\end{align}
Given that $0 \le d_{k} < \delta$ and $d_{k'} < \delta$ while $k'< \tau^s_k$,
it follows that 
\begin{align}
% \nonumber
\label{eq: lower bound update step in seed stage case 1}
    q_\pi(s,\pi(s)) - q_{k'}(s,\pi(s))
    = q_\pi(s,\pi(s)) - d_{k'} + d_{k} - q_{k}(s,\pi(s))
    % \\
    > \frac{d_{\min}}{2} - \delta > 0
\end{align}
as long as
% $k \le k' < \big( \tau^s_k \wedge (k+M) \wedge t_{n+1} \big)$,
$k \le k' < \min\{ \tau^s_k , k+M, t_{n+1} \}$.
Equations \eqref{eq: plug noise into update of seed stage case 1} and \eqref{eq: lower bound update step in seed stage case 1}
thus indicate that, on \eqref{eq: joint seed event},
% for every $k'$ satisfying
% $k \le k' < \min\{ \tau^s_k , k+M, t_{n+1} \}$,
% the gap strictly increases at each time $k'$ in the restricted range:
\begin{align}
\label{eq: final lower bound case 1}
    d_{k'+1}
    > d_{k'} + \alpha_{k'} \left( \frac{d_{\min}}{4} - \delta \right)
    = d_{k'} + \alpha_{k'} c_s
    % > 0 .
    \qquad
    \forall \, k \le k' < \min\{ \tau^s_k , k+M, t_{n+1} \} .
\end{align}

Similarly, when $q_\pi(s,\pi(s)) - q_{k}(s,\pi(s)) < d_{\min}/2$
% event $\mathcal E^-_k(s,a)$ guarantees that
% only $(s, a)$ is updated during $M$ consecutive steps, and $v_{k'}(s,a) \le d_{\min}/4$ for every $k' \in \{k, \dots, k+M-1\}$.
we obtain
\begin{align}
% \nonumber
    d_{k'+1}
    % &= d_{k'} + \alpha_{k'}(q_{k'}(s,a) - q_\pi(s,a) - v_{k'}(s,a))
    % \\
\label{eq: plug noise into update of seed stage case 2}
    &\ge d_{k'} + \alpha_{k'} \left( q_{k'}(s,a) - q_\pi(s,a) - \frac{d_{\min}}{4} \right)
\end{align}
as long as
% $k \le k' < \big((k+M) \wedge t_{n+1} \big)$,
$k \le k' < \min\{ k+M, t_{n+1}\}$.
Since $q_{k}(s,\pi(s))$ is unchanged on $\mathcal E^-_k(s,a)$
and $d_{k'} < \delta$ while $k' < \tau^s_k$,
it follows that
\begin{align}
% \nonumber
    q_{k'}(s,a) - q_\pi(s,a)
    = d_\pi - \big( q_\pi(s,\pi(s)) - q_{k}(s,\pi(s)) \big) - d_{k'}
    % \\
\label{eq: lower bound update step in seed stage case 2}
    % > d_{\min} - \frac{d_{\min}}{2} - \delta
    % \ge \frac{d_{\min}}{2} - \delta 
    > \frac{d_{\min}}{2} - \delta 
\end{align}
as long as
% $k \le k' < \big( \tau^s_k \wedge (k+M) \wedge t_{n+1} \big)$,
$k \le k' < \min\{ \tau^s_k, k+M, t_{n+1} \}$.
Hence \eqref{eq: plug noise into update of seed stage case 2} and \eqref{eq: lower bound update step in seed stage case 2} confirm that, on \eqref{eq: joint seed event},
% the gap strictly increases at each time $k'$ in the restricted range:
\begin{align}
\label{eq: final lower bound case 2}
    d_{k'+1}
    > d_{k'} + \alpha_{k'} \left( \frac{d_{\min}}{4} - \delta \right)
    = d_{k'} + \alpha_{k'} c_s
    % > 0 .
    \qquad
    \forall \, k \le k' < \min\{ \tau^s_k , k+M, t_{n+1} \} .
\end{align}
By induction on \eqref{eq: final lower bound case 1} or \eqref{eq: final lower bound case 2},
% and shifting the index by 1 to include time step $k'+1$,
we obtain
\begin{align}
\label{eq: increase in d^b case B}
    d_{k'}
    > d_{k} + c_s A(k,k')
    \ge c_s A(k,k') > 0
    \qquad
    \forall \, k < k' \le \min\{ \tau^s_k , k+M, t_{n+1} \} .
\end{align}
% for every $k'$ satisfying
% $k < k' \le \min\{ \tau^s_k , k+M, t_{n+1} \}$,
% as long as $k < k' \le \big( \tau^s_k \wedge (k+M) \wedge t_{n+1} \big)$, 
which proves \eqref{eq: lower bound during good prefix}.

Finally, 
% we lower-bound the probability of $\mathcal E^s_k(s,a)$.
given that $v_k$ is unbiased and its variance is bounded by $c_v$ uniformly over the policies,
Cantelli's inequality implies that for every $k' \ge 0$, every $s \in \mathcal S$ and every $a \in \mathcal A(s)$,
\begin{align}
&\mathbb P \! \left( v_{k'}(s,a) \ge -\frac{d_{\min}}{4} \,\middle|\, \mathcal F_{k'}, u_{k'}(s,a)=1 \right) \ge \frac{d_{\min}^2}{d_{\min}^2 + 16 c_v},
\label{eq:noise-lower}
\\
&\mathbb P \! \left( v_{k'}(s,a) \le \frac{d_{\min}}{4} \,\middle|\, \mathcal F_{k'}, u_{k'}(s,a)=1 \right) \ge \frac{d_{\min}^2}{d_{\min}^2 + 16 c_v}.
\label{eq:noise-upper}
\end{align}
Since the case choice in \eqref{eq: def good prefix} is measurable on $\{\mathcal F_{k}, \pi_{t_n}\}$ and the minimum sampling probability of any state-action pair is $\sigma_{\min}$ by \autoref{asp: standing}.(3),
% under strict exploring starts, 
repeated conditioning over $M$ steps yields
\begin{align}
\label{eq: prob of good prefix}
\mathbb P \big( \mathcal E^s_k(s,a) \mid \mathcal F_{k} , \pi_{t_n} \big) \ge \left( \sigma_{\min} \frac{d_{\min}^2}{d_{\min}^2 + 16 c_v} \right)^M > 0 .
\end{align}
Since $n$, $(s,a)$ and $k$ were chosen arbitrarily from their respective sets, the result holds uniformly.
\end{proof}

%%%%%%%%%%%%%%%%%%%%%%%%%%%%%%%%%%%%%%%%%%%%%%%%%%%%%%%%%%%%%%%
\paragraph{Growth event pushes gap from $\bm{O(\alpha_k)}$ to $\bm{\delta}$}

% Let $N(k)$ denote the minimum number of steps required to accumulate a particular amount of learning rate mass:
Define the growth window $N(k)$ as
\begin{align}
N(k)
% :=
% \inf \left\{ i \ge 1 : 
% A(k,k+i)
% \ge \frac{4 \delta}{c_g} \right\} .
\:=
\left\lceil \frac{4 \delta}{c_g \rho_\alpha \alpha_k} \right\rceil . 
\end{align}

\begin{lemma}
\label{thm: growth window}
Under \autoref{asp: standing},
there exists a finite time $K_N$ such that for every $k \ge K_N$,
\[
N(k) \le \frac{1}{\alpha_{k}},
\qquad 
A(k,k + N(k)) \ge \frac{4 \delta}{c_g} .
\]
\end{lemma}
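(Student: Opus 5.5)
The plan is to read both claims directly off the definition $N(k) = \lceil 4\delta/(c_g\rho_\alpha\alpha_k)\rceil$, using the Robbins--Monro condition $\alpha_k \to 0$ for the first inequality and the comparability assumption (\autoref{asp: additional conditions on learning rate}) for the second. Throughout, write $\kappa := 4\delta/(c_g\rho_\alpha)$, so that $N(k) = \lceil \kappa/\alpha_k\rceil$. The key observation is that the choice of $\delta$ in \eqref{eq:delta-choice} makes $\kappa$ strictly less than $1$.

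\textbf{Step 1: bound $\kappa$.} From $\delta < d_{\min}/8$ we get $d_{\min} - 2\delta > 3d_{\min}/4$, hence $c_g > 3\sigma_{\min} d_{\min}/4$. Then
\[
\kappa < \frac{16\,\delta}{3\,\rho_\alpha\sigma_{\min} d_{\min}} < \frac{16}{3\cdot 32} = \frac{1}{6},
\]
where the last inequality uses $\delta < \rho_\alpha\sigma_{\min}d_{\min}/32$.

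\textbf{Step 2: first inequality.} Since $\sum_k \alpha_k^2<\infty$, we have $\alpha_k\to 0$. Choose $K_N \ge K_\alpha$ such that $\alpha_k \le 5/6$ for all $k\ge K_N$. Then
\[
N(k) \le \frac{\kappa}{\alpha_k} + 1 < \frac{1}{6\alpha_k} + 1 \le \frac{1}{\alpha_k},
\]
because $1 \le 5/(6\alpha_k)$ for such $k$. As $N(k)$ is an integer, this also gives $N(k) \le \lfloor 1/\alpha_k\rfloor$.

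\textbf{Step 3: second inequality.} By Step 2, every $i\in\{0,\dots,N(k)-1\}$ lies in the range $\{0,\dots,\lfloor 1/\alpha_k\rfloor\}$ covered by \eqref{eq:comparison}, and $k \ge K_\alpha$. Comparability therefore gives $\alpha_{k+i}\ge\rho_\alpha\alpha_k$ for each such $i$. Summing,
\[
A(k,k+N(k)) = \sum_{i=0}^{N(k)-1}\alpha_{k+i} \ge N(k)\,\rho_\alpha\alpha_k \ge \frac{4\delta}{c_g\rho_\alpha\alpha_k}\,\rho_\alpha\alpha_k = \frac{4\delta}{c_g}.
\]

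The only step needing care is the first inequality, which rests entirely on showing $\kappa<1$ from \eqref{eq:delta-choice}. Without that margin, the ceiling in $N(k)$ could push it past the comparability window $\lfloor 1/\alpha_k\rfloor$, and Step 3 would no longer apply. Everything else is direct substitution, and $K_N$ is deterministic because the learning rates are a fixed sequence.
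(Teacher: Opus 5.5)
Your proof is correct and follows the paper's argument step for step: you bound $4\delta/(c_g\rho_\alpha) < 1/6$ using \eqref{eq:delta-choice}, use $\alpha_k \to 0$ to get $N(k) \le 1/\alpha_k$, and then apply comparability over the window to sum to at least $4\delta/c_g$. The only difference is cosmetic: you use the threshold $\alpha_k \le 5/6$ where the paper uses $4\delta/(c_g\rho_\alpha) + \alpha_k < 1$.
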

\begin{proof}
The definitions of $\delta$ and $c_g$ in \eqref{eq:delta-choice}--\eqref{eq:c-def} ensure that
\begin{multline*}
    \frac{4 \delta}{c_g \rho_\alpha}
    < 
    \frac{4 \rho_\alpha \sigma_{\min} d_{\min}}{32 \sigma_{\min}(d_{\min} - 2 \delta) \rho_\alpha}
    =
    \frac{d_{\min}}{8 (d_{\min} - 2 \delta)}
%    \\
    <
    \frac{d_{\min}}{8(d_{\min} - d_{\min}/4)}
    =
    \frac{1}{6}
    < 1 .
\end{multline*}
The Robbins--Monro conditions in \autoref{asp: standing}.(1) imply that 
$(\alpha_k)_{k \ge 0}$ decreases to 0. So there exists a finite $K$ such that $4\delta / c_g \rho_\alpha + \alpha_k < 1$ for all $k \ge K$,
and hence
\[
N(k) < \frac{4 \delta}{c_g \rho_\alpha \alpha_k} + 1 < \frac{1}{\alpha_k} .
\]
% Let
% \[
% r_k := \left\lceil \frac{4 \delta}{c_g \rho_\alpha \alpha_k} \right\rceil .
% \]
% Since $r_k \le 1/ \alpha_k$,
The comparability property of the learning rates in \autoref{asp: additional conditions on learning rate} then guarantees that $A(k, k+N(k)) \ge N(k) \rho_\alpha \alpha_{k} \ge 4 \delta / c_g$ for every $k \ge K_n := \max\{K, K_\alpha\}$.
% \[
% A(k, k+r_{k}) \ge r_{k} \rho_\alpha \alpha_{k} \ge 4 \delta / c_g .
% \]
% As a result, there exists a finite time $K_N \ge \max\{K, K_\alpha\}$ such that $N(k) \le r_k \le 1/\alpha_{k}$ for all $k \ge K_N$.
\end{proof}

For any $k \in \{t_n, \dots, t_{n+1}-1\}$,
% any state $s \in \mathcal S$
% and any action $a \in \mathcal A(s)$,
define the accumulated stochastic perturbations between times $k$ and $k'>k$ as
\[
\Xi(k, k' \mid s,a) 
:= 
\sum_{i=k}^{k'-1} \alpha_{i} \xi_i(s,a)
=
\sum_{i=k}^{k'-1} \alpha_{i} \big( w_i(s, \pi_{t_n}(s))-w_i(s,a) \big) ,
\]
% with the convention $\Xi(k, k \mid s,a) = 0$.
the growth event
% For any index $n \in \mathbb Z_{\ge 0}$,
% any time $k \in \{t_n, \dots, t_{n+1}-1\}$,
% any state $s \in \mathcal S$
% and any action $a \in \mathcal A(s)$,
as
\[
\mathcal E^g_k(s,a)
:=
\bigcap_{k'=k+1}^{k+N(k)}
\left\{
\Xi(k,k' \mid s,a) > - \frac{c_s}{2} \rho_\alpha M \alpha_k - \frac{c_g}{2} A(k,k')
\right\} ,
\]
% for any $M \in \mathbb N$,
and the hitting time as
\[
\tau^g_k(s,a) := \inf \! \big\{ k' \in \{k+1, \dots, k+N(k) \} : q_{k'}(s,\pi_{t_n}(s)) - q_{k'}(s,a) \ge \delta \big\} .
\]
We now show that, on $\mathcal E^g_k(s,a)$, 
perturbations cannot prevent ``bad'' gaps from reaching $\delta$ in $N(k)$ steps.
% significantly offset the mean-field drift.
% the gap $d_k(s,a)$ stays strictly positive until $\min\{\tau^g_k(s,a), t_{n+1}\}$, and reaches $\delta$ within $N(k)$ steps, meaning

\begin{lemma}[Growth event]
\label{thm: from a_k margin to delta}
Consider the setting of \autoref{thm: fixed prob that bad actions dont become greedy}
and let $K_\xi$ and $K_N$ be the finite times obtained in \autoref{thm: properties noise gap} and \autoref{thm: growth window}, respectively.
For every index $n \in \mathbb Z_{\ge 0}$ such that $t_n \ge K_N$,
every pair $(s,a) \in \mathcal B_{\pi_{t_n}}$
and every time $k \in \{t_n, \dots, t_{n+1}-1\}$,
the joint event
\begin{align}
\label{eq: joint growth event}
\mathcal E^g_k(s,a) \cap \big\{ c_s \rho_\alpha M \alpha_k \leq d_k(s,a) < \delta \big\}
% \mathcal E^g_k(s,a) \cap \big\{ c_s \rho_\alpha M \alpha_k \leq q_{k}(s,\pi_{t_n}(s)) - q_{k}(s,a) < \delta \big\}
\end{align}
guarantees that 
% the gap $d_k(s,a)$ stays strictly positive until $\min\{\tau^g_k(s,a), t_{n+1}\}$, and reaches $\delta$ within $N(k)$ steps, meaning
% for every $k'$ satisfying
% $k \le k' \le \min \{ \tau^g_k(s,a), t_{n+1} \}$,
\begin{align}
\label{eq: growth event gap remains positive}
    d_{k'}(s,a) > 0
    % q_{k'}(s,\pi_{t_n}(s)) - q_{k'}(s,a) > 0,
    \qquad \forall \, k \le k' \le \min \{ \tau^g_k(s,a), t_{n+1} \} , 
\end{align}
and
\begin{align}
\label{eq: hit time event a_k to delta}
    \min \{ \tau^g_k(s,a) , t_{n+1} \} \le k + N(k) .
\end{align}
Moreover, if $t_n \ge K_\xi$,
\begin{align}
\label{eq: prob from a_k reach delta}
    \mathbb P (\mathcal E^g_k(s,a) \mid \mathcal F_k, \pi_{t_n})
    \ge 1 - \frac{8 c_\xi}{c^2 \rho_\alpha^2} \sum_{j = 0}^{\infty} \frac{2^j}{(M + 2^j)^2} .
\end{align}
\end{lemma}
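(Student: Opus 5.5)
Fix $n$ with $t_n \ge K_N$, write $\pi := \pi_{t_n}$, fix $(s,a)\in\mathcal B_\pi$ and $k\in\{t_n,\dots,t_{n+1}-1\}$, and abbreviate $d_{k'} := d_{k'}(s,a)$, $\tau := \tau^g_k(s,a)$ and $\Xi(k,k') := \Xi(k,k'\mid s,a)$. The deterministic part rests on unrolling the gap recursion \eqref{eq: def gap evolution} from $k$ to $k'$, for $k<k'\le\min\{\tau,t_{n+1}\}$:
\[
d_{k'} = d_k + \sum_{i=k}^{k'-1}\alpha_i\,\sigma_i(s)\big(d_\pi(s,a)-d_i\big) + \Xi(k,k') .
\]
For $i<\tau$ we have $d_i<\delta$. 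Since $d_\pi(s,a)\ge d_{\min}$ and $\sigma_i(s)\ge\sigma_{\min}$ (strict exploring starts together with uniformity, as used in \autoref{thm: improving policies}), each drift term is at least $\alpha_i\sigma_{\min}(d_{\min}-\delta)\ge \alpha_i c_g$. This also requires the drift factor to be nonnegative, which holds because $\delta<d_{\min}$. Hence
\[
d_{k'} \ge d_k + c_g A(k,k') + \Xi(k,k') .
\]
On the joint event \eqref{eq: joint growth event} we have $d_k\ge c_s\rho_\alpha M\alpha_k$ and $\Xi(k,k')>-\tfrac{c_s}{2}\rho_\alpha M\alpha_k-\tfrac{c_g}{2}A(k,k')$, so
\[
d_{k'} > \tfrac{c_s}{2}\rho_\alpha M\alpha_k + \tfrac{c_g}{2}A(k,k') > 0 ,
\]
which gives \eqref{eq: growth event gap remains positive}; the case $k'=k$ is immediate. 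For \eqref{eq: hit time event a_k to delta}, suppose $\min\{\tau,t_{n+1}\}>k+N(k)$. Taking $k'=k+N(k)$ in the bound above and applying \autoref{thm: growth window} gives $d_{k+N(k)}>\tfrac{c_g}{2}\cdot\tfrac{4\delta}{c_g}=2\delta\ge\delta$. This contradicts $\tau>k+N(k)$.

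For the probability bound \eqref{eq: prob from a_k reach delta}, I would treat $\Xi(k,\cdot)$ as a martingale in $k'$. Its increments $\alpha_i\xi_i$ have zero conditional mean and conditional variance at most $\alpha_i^2c_\xi$ by \autoref{thm: properties noise gap}, which applies since $t_n\ge K_\xi$. One point needs care: the increments are only controlled up to $t_{n+1}$. I would handle this by stopping at $t_{n+1}$, or by extending $\xi_i$ by zero afterwards, which keeps the martingale property.

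The main obstacle is that the barrier $-\tfrac{c_s}{2}\rho_\alpha M\alpha_k-\tfrac{c_g}{2}A(k,k')$ depends on $k'$, so a single Doob inequality is too crude. I would use a dyadic peeling argument. Split $\{k+1,\dots,k+N(k)\}$ into blocks $I_j$ on which $k'-k\in[2^j,2^{j+1})$. Comparability (\autoref{asp: additional conditions on learning rate}, valid because $N(k)\le1/\alpha_k$) gives $A(k,k')\ge\rho_\alpha\alpha_k 2^j$ on $I_j$. Thus on $I_j$ the barrier is at least of magnitude $\tfrac{c}{2}\rho_\alpha\alpha_k(M+2^j)$, using $c\le c_s,c_g$.

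Doob's $L^2$ maximal inequality on $I_j$ then bounds the failure probability by
\[
\frac{c_\xi\sum_{i<k+2^{j+1}}\alpha_i^2}{\tfrac{c^2}{4}\rho_\alpha^2\alpha_k^2(M+2^j)^2}\le\frac{4c_\xi\,2^{j+1}}{c^2\rho_\alpha^2(M+2^j)^2},
\]
where the last step uses that $\alpha_i\le\alpha_k$ since the learning rates are nonincreasing. This equals $\tfrac{8c_\xi}{c^2\rho_\alpha^2}\tfrac{2^j}{(M+2^j)^2}$. A union bound over $j\ge0$ gives the stated lower bound. The conditioning on $\pi_{t_n}$ is harmless, because $\pi_{t_n}$ is determined at time $k\ge t_n$.
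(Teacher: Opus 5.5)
Your proposal is correct and follows essentially the same route as the paper: it uses the drift lower bound $c_g$ for indices before $\tau^g_k$, then a contradiction at $k+N(k)$ via \autoref{thm: growth window}, then dyadic peeling with the comparability condition, Doob's maximal inequality on each block, and a union bound over $j$. The differences are minor. You make explicit how the martingale is handled beyond $t_{n+1}$, which the paper leaves implicit. The event $\mathcal E^g_k(s,a)$ only controls $\Xi(k,k'\mid s,a)$ for $k'\le k+N(k)$, so the positivity bound should be stated after \eqref{eq: hit time event a_k to delta} is established, as the paper does.
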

\begin{proof}
Fix an $n \in \mathbb Z_{\ge 0}$ such that $t_n \ge K_N$,
and let $\pi := \pi_{t_n}$.
Fix a pair $(s,a) \in \mathcal B_\pi$,
define the sequence $\big( d_k(s,a) \big)_{k=t_n}^{t_{n+1}}$ as in \autoref{def: gap variables}, and write $d_k := d_k(s,a)$, $d_\pi := d_\pi(s,a)$ and $\xi_k := \xi_k(s,a)$ for brevity.
Fix a time $k \in \{t_n, \dots, t_{n+1}-1\}$ and write $\tau^g_k := \tau^g_k(s,a)$.

For every $k' \in \{k, \dots, \tau^g_k\}$ we have $d_{k'} \le \delta$, and hence
\[
\sigma_{k'}(s) (d_\pi - d_{k'}) \ge \sigma_{\min}(d_{\min} - \delta)
>
\sigma_{\min}(d_{\min} - 2 \delta)
= c_g .
\]
Thus summing \eqref{eq: def gap evolution} from $k$ to any $k'$ satisfying $k < k' \le \min\{\tau^g_k, t_{n+1}\}$ yields
\[
d_{k'}
>
d_{k} + c_g A(k,k') + \Xi(k,k' \mid s,a) .
\]
% $k' \in \{k, \dots, t_{n+1}-1\}$
% the gap satisfies
% \[
% d_{k'+1}
% =
% d_{k'} + \alpha_{k'} \big( \mu_{k'}(s) (d_\pi - d_{k'}) + \xi_{k'} \big) ,
% \]
% where the definition of $\delta$ guarantees that $d_\pi \ge d_{\min} > 2 \delta$,
% and the strict exploring starts ensure that $\mu_{k'}(s) \ge \sigma_{\min}$. 
% Since $d_{k'} < \delta$ before the hitting time $\tau^g_k$,
% we can bound the update for all $k \le k' < ( \tau^g_k \wedge t_{n+1} )$ as
% \begin{align*}
% d_{k'+1}
% >
% d_{k'} + \alpha_{k'} \left( \sigma_{\min}(d_{\min} - \delta) + \xi_{k'} \right)
% % \\
% % &>
% % d_{k'} + \alpha_{k'} \left( \sigma_{\min}(d_{\min} - 2 \delta) + \xi_{k'} \right)
% % \\
% >
% d_{k'} + \alpha_{k'} \left( c_g + \xi_{k'} \right) ,
% \end{align*}
% using $c_g := \sigma_{\min} ( d_{\min} - 2 \delta)$.
% Summing from $k$ to $k'$ then yields
% \[
% d_{k'}
% >
% d_{k} + c_g A(k,k') + \Xi(k,k' \mid s,a) .
% \]
As a result, the joint event in \eqref{eq: joint growth event} implies that as long as $k < k' \le \min\{ \tau^g_k , t_{n+1}, k+N(k) \}$,
\begin{align}
d_{k'}
&>
c_s \rho_\alpha M \alpha_k + c_g A(k,k') - \frac{c_s}{2} \rho_\alpha M \alpha_k - \frac{c_g}{2} A(k,k')
\nonumber
\\
\label{eq: gap remains positive up to tau'_k}
&=
\frac{c_s}{2} \rho_\alpha M \alpha_k + \frac{c_g}{2} A(k,k') > 0 .
\end{align}
% which proves \eqref{eq: growth event gap remains positive}.
% 
Now assume that $t_{n+1} > k+N(k)$, otherwise $\min\{ \tau^g_k, t_{n+1} \} \le k+N(k)$ is immediate.
Suppose for contradiction that $\tau^g_k = \infty$ because $d_k$ does not reach $\delta$ within $N(k)$ steps. 
Since $k \ge t_n \ge K_N$, \autoref{thm: growth window} and equation \eqref{eq: gap remains positive up to tau'_k} yield
\[
d_{k+N(k)}
>
\frac{c_s}{2} \rho_\alpha M \alpha_k + \frac{c_g}{2} A(k,k+N(k)) 
>
% \frac{c_g}{2} A(k,k+N(k)) 
% \ge
\frac{c_g}{2} \frac{4 \delta}{c_g} 
> 
\delta .
\]
This contradicts $\tau^g_k = \infty$. Hence $\tau^g_k \le k+N(k)$, which proves \eqref{eq: hit time event a_k to delta}; and with \eqref{eq: gap remains positive up to tau'_k} also gives \eqref{eq: growth event gap remains positive}.

% This contradiction implies that $\tau^g_k \le k+N(k)$, which proves \eqref{eq: hit time event a_k to delta} and, together with \eqref{eq: gap remains positive up to tau'_k}, proves \eqref{eq: growth event gap remains positive}.

% Then \eqref{eq: gap remains positive up to tau'_k} yields
% \[
% d_{k+N(k)}
% >
% \frac{c_s}{2} \rho_\alpha M \alpha_k + \frac{c_g}{2} A(k,k+N(k)) 
% >
% \frac{c_g}{2} A(k,k+N(k)) .
% \]
% However,

% since the definition of $N(k)$ guarantees that $A(k,k+N(k)) \ge 4 \delta/c_g$, we obtain a contradiction: $d_{k+N(k)}
% >
% \frac{c_s}{2} \rho_\alpha M \alpha_k + 2 \delta
% > \delta .$
% % \[
% % d_{k+N(k)}
% % >
% % \frac{c_s}{2} \rho_\alpha M \alpha_k + 2 \delta
% % > \delta .
% % \]
% Thus, $\tau^g_k \le k+N(k)$, which proves \eqref{eq: hit time event a_k to delta}.

Finally,
% we lower-bound the probability of $\mathcal E^g_k(s,a)$.
the complement event $\overline{\mathcal E^g_k(s,a)}$ implies that for some $i \in \{1, \dots , N(k) \}$,
\begin{align*}
\bigl| \Xi(k,k+i \mid s,a) \bigr|
\ge \frac{c_s}{2} \rho_\alpha M \alpha_k + \frac{c_g}{2} A(k,k+i)
% \]
% Using $c := \min\{c_s,c_g\}$ and the comparability property (since $i \le N(k) \le 1/\alpha_k$ for $k \ge t_n \ge K_N$), this simplifies to
% \[
% \bigl| \Xi(k,k+i \mid s,a) \bigr|
\ge \frac{c}{2} \big( \rho_\alpha M \alpha_k + A(k,k+i) \big) .
% \ge \frac{c}{2} \rho_\alpha \alpha_k ( M + i ).
\end{align*}
Choosing $j \in \mathbb N$ such that $2^j \le i \le 2^{j+1}$ 
and using the comparability of the learning rates from \autoref{asp: additional conditions on learning rate}
yields the bound
\begin{align}
    \max_{1 \le \ell \le \min \{ 2^{j+1} , N(k) \}}
    |\Xi(k,k + \ell \mid s,a)|
    \ge \frac{c}{2} \big( \rho_\alpha M \alpha_k + A(k,k + 2^j) \big)
    \ge \frac{c}{2} \rho_\alpha \alpha_k (M + 2^j) .
\end{align}
So if $t_n \ge K_\xi$,
% By \autoref{thm: properties noise gap}, there exists an almost surely finite $K_\xi$ such that 
the sequence $\big(\Xi(k, k+\ell \mid s,a)\big)_{\ell \ge 1}$ is a square-integrable martingale adapted to the filtration $(\mathcal F_{k+\ell})_{\ell \ge 1}$. So, by Doob's martingale inequality,
\begin{align}
\label{eq: prob of not E^g_k}
    \mathbb P
    \left( 
    \overline{\mathcal E^g_k(s,a)} \mid \mathcal F_{k}, \pi_{t_n}
    \right)
    % \\
    % \nonumber
    \le
    \sum_{j = 0}^{\infty} \frac{\mathbb E \left[ \big( \Xi(k, \min\{ k+2^{j+1}, k+N(k) \} \mid s,a) \big)^2 \mid \mathcal F_{k}, \pi_{t_n} \right]}{\left( \dfrac{c}{2} \rho_\alpha \alpha_k ( M + 2^j) \right)^2} .
\end{align}
\autoref{thm: properties noise gap} and the non-increasing sequence $(\alpha_k)_{k \ge 0}$ imply that
\begin{align*}
\mathbb E \left[ \big( \Xi(k, \min\{ k+2^{j+1}, k+N(k)\} \mid s,a) \big)^2 \mid \mathcal F_{k}, \pi_{t_n} \right]
\le
c_\xi \sum_{\ell=0}^{2^{j+1}-1} \alpha^2_{k+\ell}
\le
c_\xi 2^{j+1} \alpha^2_k .
\end{align*}
Substituting this back into \eqref{eq: prob of not E^g_k} proves \eqref{eq: prob from a_k reach delta} because
\begin{align}
    \mathbb P
    \left( 
    \overline{\mathcal E^g_k(s,a)} \mid \mathcal F_{k}, \pi_{t_n}
    \right)
    \le
    \frac{8 c_\xi}{c^2 \rho_\alpha^2} \sum_{j = 0}^{\infty} \frac{2^j}{(M+ 2^j)^2}.
\end{align}
Since $n$, $(s,a)$ and $k$ were chosen arbitrarily from their respective sets, the result holds uniformly.
\end{proof}

% \autoref{thm: from a_k margin to delta} confirms that on the event $\mathcal E^g_k \cap \{c \rho_\alpha M \alpha_k \leq q_k(s,\pi_{t_n}(s))-q_k(s,a) < \delta \}$, the gap safely hits $\delta$ in at most $N(k)$ steps while remaining strictly positive throughout the ascent.

%%%%%%%%%%%%%%%%%%%%%%%%%%%%%%%%%%%%%%%%%%%%%%%%%%%%%%%%
\paragraph{Lock-in event keeps gap positive after reaching $\bm{\delta}$}

Define the lock-in event as
\begin{align}
\label{eq: definition lock in event single gap}
    \mathcal E^l_{t_n}(s,a) := \left\{ \sup_{k \ge t_n} 
    % \bigl| \Xi(t_n, k \mid s,a) \bigr| 
    \Bigg| \sum_{i=t_n}^{k-1} \alpha_{i} \big( w_i(s, \pi_{t_n}(s))-w_i(s,a) \big) \Bigg|
    < \frac{\delta}{8} \right\} .
\end{align}
We now show that, on $\mathcal E^l_{t_n}(s,a)$,
once a ``bad'' gap reaches $\delta$, 
it remains strictly positive indefinitely.
% perturbations can never drive it below zero anymore.
% For any $n \in \mathbb Z_{\ge 0}$,
% any state $s \in \mathcal S$
% and any action $a \in \mathcal A(s)$,

% \textcolor{red}{need extension of definition so that $\xi_k$ is defined to $\infty$}

\begin{lemma}[Lock-in event]
\label{thm: event noise remains small}
Consider the setting of \autoref{thm: fixed prob that bad actions dont become greedy} and let $K_\xi$ be the finite time obtained in \autoref{thm: properties noise gap}.
For every index $n \in \mathbb Z_{\ge 0}$,
every pair $(s,a) \in \mathcal B_{\pi_{t_n}}$
and every time $k \in \{t_n, \dots, t_{n+1}-1\}$,
the joint event
\[
\mathcal E^l_{t_n}(s,a) \cap 
% \big\{ q_{k}(s,\pi_{t_n}(s)) - q_{k}(s,a) \ge \delta \big\}
\big\{ d_k(s,a) \ge \delta \big\}
\]
guarantees that
% the gap remains strictly positive until the next transition time, meaning
% for every $k' \in \{k, \dots, t_{n+1}\}$,
\begin{align}
\label{eq: gap remains strictly positive}
% q_{k'}(s,\pi_{t_n}(s)) > q_{k'}(s,a) 
d_{k'}(s,a) > 0
\qquad 
\forall \, k' \in \{k, \dots, t_{n+1}\} .
\end{align}
Moreover, if $t_n \ge K_\xi$,
% there exists an almost surely finite time $K_\xi$ such that for every $t_n \ge K_\xi$,
\begin{equation}
\label{eq:probability-small-remaining-noise}
\mathbb P\!\left(\mathcal E^l_{t_n}(s,a)\mid \mathcal F_{t_n}, \pi_{t_n} \right)
\ge
1-\frac{64 c_\xi}{\delta^2}\sum_{i=t_n}^\infty \alpha_i^2 .
\end{equation}
\end{lemma}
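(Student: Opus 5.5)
The proof has two parts: a deterministic argument showing the gap stays positive, and a martingale maximal inequality for the probability bound.

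\textbf{Deterministic part.} Fix $n$, write $\pi := \pi_{t_n}$, fix $(s,a)\in\mathcal B_\pi$ and $k\in\{t_n,\dots,t_{n+1}-1\}$, and abbreviate $d_{k'} := d_{k'}(s,a)$. By \eqref{eq: def gap evolution}, the gap has drift $\sigma_{k'}(s)(d_\pi - d_{k'})$. This drift is nonnegative whenever $d_{k'} \le d_\pi$, because $\sigma_{k'}(s) > 0$. It is strictly positive once $d_{k'} \le d_{\min}$, since $d_\pi \ge d_{\min}$.

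Suppose, for contradiction, that some $k' \in \{k,\dots,t_{n+1}\}$ has $d_{k'} \le 0$, and take the first such $k'$. Let $j := \max\{ i \in \{k,\dots,k'-1\} : d_i \ge \delta\}$. This index exists because $d_k \ge \delta$. On $\{j+1,\dots,k'-1\}$ we have $0 < d_i < \delta < d_{\min}$, so every drift term there is positive. Summing \eqref{eq: def gap evolution} from $j+1$ to $k'$ gives
\[
d_{k'} \ge d_{j+1} + \Xi(j+1,k'\mid s,a) .
\]
Since $\Xi(j+1,k') = \Xi(t_n,k') - \Xi(t_n,j+1)$, on $\mathcal E^l_{t_n}(s,a)$ this increment exceeds $-\delta/4$.

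It remains to control the single step from $j$ to $j+1$. We have
\[
d_{j+1} = (1-\alpha_j\sigma_j(s))\, d_j + \alpha_j\sigma_j(s)\, d_\pi + \alpha_j\xi_j .
\]
The first two terms form a convex combination of $d_j \ge \delta$ and $d_\pi \ge d_{\min} > \delta$, so they are at least $\delta$. The noise term equals $\Xi(t_n,j+1)-\Xi(t_n,j)$, which exceeds $-\delta/4$ on the lock-in event. Hence $d_{j+1} > 3\delta/4$, and therefore $d_{k'} > 3\delta/4 - \delta/4 = \delta/2 > 0$. This contradicts $d_{k'}\le 0$ and proves \eqref{eq: gap remains strictly positive}.

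The degenerate case $k' = j+1$ is covered by the one-step bound alone. The case $d_j > d_\pi$, where the drift is negative, is harmless: we only need the convex-combination lower bound $\delta$ at step $j$, and on the intermediate steps $d_i < \delta$ anyway.

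\textbf{Probability bound.} The event $\mathcal E^l_{t_n}(s,a)$ involves all $k \ge t_n$, including times after $t_{n+1}$. We interpret $\xi_i(s,a) := w_i(s,\pi_{t_n}(s)) - w_i(s,a)$ with the policy frozen at $\pi_{t_n}$; this is exactly the sum written in \eqref{eq: definition lock in event single gap}. By \autoref{thm: properties of noise}, each $w_i$ is a martingale difference. For $t_n \ge K_\xi$ we have the bound $\|q_i\|\le c_{\mathcal P}+1$ for all $i\ge t_n$, so the argument of \autoref{thm: properties noise gap} gives $\mathbb E[\xi_i^2\mid\mathcal F_i]\le c_\xi$ for all $i \ge t_n$, not only inside the block.

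Hence $S_k := \sum_{i=t_n}^{k-1}\alpha_i\xi_i$ is a square-integrable martingale with respect to $(\mathcal F_k)_{k\ge t_n}$. Its conditional second moment satisfies
\[
\mathbb E\big[S_k^2 \mid \mathcal F_{t_n}\big] \le c_\xi\sum_{i\ge t_n}\alpha_i^2 .
\]
Doob's $L^2$ maximal inequality, applied on finite horizons and extended by monotone convergence, then gives
\[
\mathbb P\Big(\sup_{k \ge t_n}|S_k|\ge \delta/8 \,\Big|\, \mathcal F_{t_n},\pi_{t_n}\Big) \le \frac{64 c_\xi}{\delta^2}\sum_{i\ge t_n}\alpha_i^2 .
\]
This is \eqref{eq:probability-small-remaining-noise}.

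\textbf{Main obstacle.} The main difficulty is making the variance bound valid after $t_{n+1}$ and conditional on $\pi_{t_n}$. The randomness in $\pi_{t_n}$ is tie-breaking drawn at time $t_n$, so I would check that the martingale property survives conditioning on the enlarged $\sigma$-field $\mathcal F_{t_n}\vee\sigma(\pi_{t_n})$. Because $K_\xi$ is random, I would also check that the event $\{t_n\ge K_\xi\}$ can be treated as determined at time $t_n$; the clean approach is to work with a stopped or truncated version of the martingale.
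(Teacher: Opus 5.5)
Your proof is correct, and its deterministic part takes a different route from the paper's. The paper unrolls the affine recursion \eqref{eq: def gap evolution} exactly, as $d_{k'}=\Gamma(k,k')d_k+(1-\Gamma(k,k'))d_\pi+\sum_{i=k}^{k'-1}\Gamma(i+1,k')\alpha_i\xi_i$ with $\Gamma(k,k')=\prod_{i=k}^{k'-1}(1-\alpha_i\mu_i(s))$. It bounds the first two terms below by $\delta$. It then controls the weighted noise sum by summation by parts, using that $i\mapsto\Gamma(i+1,k')$ is nondecreasing with values in $(0,1]$. This gives $\bigl|\sum_{i}\Gamma(i+1,k')\alpha_i\xi_i\bigr|\le 2\sup_{k\le m\le k'}|\Xi(k,m\mid s,a)|<\delta/2$ directly for every $k'$.

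You instead use a last-exit decomposition. On the excursion below $\delta$ the drift is nonnegative and can be discarded, leaving only a raw increment of $\Xi$. The single step leaving level $\delta$ is handled by a convex combination. Your version needs no contraction factors and no Abel summation. It only uses that the drift is nonnegative below $\delta$ and that a deterministic step from above $\delta$ cannot land below it, so it is more elementary and less tied to the exact form of the drift. The paper's version exploits the affine structure to get a closed-form bound at every $k'$ without case analysis.

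Your two noise pieces actually telescope: $d_{k'}\ge\delta+\alpha_j\xi_j+\Xi(j+1,k'\mid s,a)=\delta+\Xi(j,k'\mid s,a)$. So you get $d_{k'}>3\delta/4$ rather than $\delta/2$.

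For the probability bound you follow the paper: Doob's maximal inequality on finite horizons, monotone convergence, and the conditional variance bound $c_\xi\sum_{i\ge t_n}\alpha_i^2$. Your caveats there are well founded, and the paper passes over them.

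\begin{itemize}
\item The paper applies \autoref{thm: properties noise gap} to indices $i\ge t_{n+1}$, although that lemma is only stated inside the block. Its proof does cover every $i\ge K_\xi$, whatever the policy.
\item The paper treats the condition $t_n\ge K_\xi$ as if it were known at time $t_n$, although $K_\xi$ is not a stopping time. Working with the truncated increments $\xi_i\mathbf 1_{\{\|q_i\|\le c_{\mathcal P}+1\}}$ is the right way to make that step rigorous. These are still martingale differences with conditional variance at most $c_\xi$, and they coincide with $\xi_i$ for $i\ge K_\xi$.
\item Conditioning on $\pi_{t_n}$ is harmless. The appendix proof of \autoref{thm: properties of noise} shows $\mathbb E[w_k\mid\mathcal F_k']=0$ with $\pi_k$ already included in $\mathcal F_k'$.
\end{itemize}
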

\begin{proof}
Fix an $n \in \mathbb Z_{\ge 0}$
and let $\pi := \pi_{t_n}$.
Fix a pair $(s,a) \in \mathcal B_\pi$,
define the sequence $\big( d_k(s,a) \big)_{k=t_n}^{t_{n+1}}$ as in \autoref{def: gap variables}, and write $d_k := d_k(s,a)$, $d_\pi := d_\pi(s,a)$ and $\xi_k := \xi_k(s,a)$ for brevity.
Fix a time $k \in \{t_n, \dots, t_{n+1}-1\}$.

% For every time $k \in \{t_n, \dots, t_{n+1}-1\}$, the gap is updated as
% \begin{align*}
% d_{k+1}
% =
% \big( 1 - \alpha_{k} \mu_{k}(s) \big) d_{k} + \alpha_{k} \mu_{k}(s) d_\pi + \alpha_{k} \xi_{k} .
% \end{align*}
% To unroll this recursion, define the contraction factor between steps $k$ and $k'>k$ as
% \[
% \Gamma(k,k') := \prod_{i=k}^{k'-1} \big( 1 - \alpha_{i} \mu_{i}(s) \big) \in(0,1].
% \]
% with the convention that $\Gamma(k,k) = 1$.

Define the contraction factor between times $k$ and $k'>k$ as
\[
\Gamma(k,k') := \prod_{i=k}^{k'-1} \big( 1 - \alpha_{i} \mu_{i}(s) \big) \in(0,1] 
\]
with the convention that $\Gamma(k,k) = 1$.
Unrolling \eqref{eq: def gap evolution} from $k$ to any $k' \in \{k,\dots,t_{n+1}\}$ yields
\begin{align}
\nonumber
d_{k'}
&=
\Gamma(k,k') d_{k}
+
\bigl(1-\Gamma(k,k') \bigr) d_\pi
+ \sum_{i=k}^{k'-1} \Gamma(i+1,k') \alpha_i \xi_i
\\
\label{eq: induction for abel proof}  
&\ge
\delta + \sum_{i=k}^{k'-1} \Gamma(i+1,k') \alpha_i \xi_i
\end{align}
because $\Gamma(k,k') \in(0,1]$,
$d_\pi \ge d_{\min} > \delta$
% $\Gamma(k,k') \in (0,1]$,
and, by assumption, $d_{k} \ge \delta$.
% we have
% \begin{align}
% \label{eq: lower obund for abel proof}
% \Gamma(k,k') d_{k} + \big(1-\Gamma(k,k')\big) d_\pi \ge \delta.
% \end{align}
% To bound the stochastic part, we note that for a fixed terminal index $k'$, 
Since the map $i \mapsto \Gamma(i+1,k')$ is non-decreasing with values in $(0,1]$, 
using summation by parts, the triangle inequality and equation \eqref{eq: definition lock in event single gap} yield
\begin{align}
\nonumber
\left|
\sum_{i=k}^{k'-1} \Gamma(i+1,k') \alpha_i \xi_i
\right|
&\le
\big| \Gamma(k',k') \big| \big| \Xi(k,k') \big| + \sum_{i=k}^{k'-2} \big| \Gamma(i+2,k') - \Gamma(i+1,k') \big| \big| \Xi(k, i+1) \big|
\\
\nonumber
&\le
\left( \Gamma(k',k') + \sum_{i=k}^{k'-2} \big( \Gamma(i+2,k') - \Gamma(i+1,k') \big) \! \right)
\sup_{k \le m \le k'} \bigl| \Xi(k,m\mid s,a) \bigr|
% \\
% \nonumber
% &=
% \big( \Gamma(k',k') + \Gamma(k',k') - \Gamma(k+1,k') \big)
% \sup_{k \le m \le k'} \bigl| \Xi(k,m\mid s,a) \bigr|
\\
\nonumber
&\le
2 \sup_{k \le m \le k'} \bigl| \Xi(k,m\mid s,a) \bigr|
\\
\label{eq:abel equation on noise}
&\le
2 \left( \bigl|\Xi(t_n,k \mid s,a)\bigr|
+
\sup_{k \le m \le k'} \bigl|\Xi(t_n, m \mid s,a)\bigr| \right)
<
\frac{\delta}{2}.
\end{align}
% On $\mathcal E^l_{t_n}(s,a)$, the triangle inequality also implies that for every $m > k$,
% \begin{align}
% \label{thm: triangle on Xi}
% \bigl|\Xi(k,m\mid s,a)\bigr|
% % &=
% % \bigl|
% % \Xi(t_n , m \mid s,a) - \Xi(t_n, k \mid s,a)
% % \bigr|
% \le
% \bigl|\Xi(t_n,k\mid s,a)\bigr|
% +
% \bigl|\Xi(t_n,m\mid s,a)\bigr|
% <
% \frac{\delta}{4}.
% \end{align}
Hence, \eqref{eq: induction for abel proof} and
\eqref{eq:abel equation on noise}
show that $d_{k'} \ge \delta - \delta/2 > 0$ for every $k' \in \{k,\dots,t_{n+1}\}$,
proving \eqref{eq: gap remains strictly positive}.
% This strictly limits the noise term in \eqref{eq: induction for abel proof} by
% \begin{equation}
% \label{eq:shifted-noise-bound}
% \left|
% \sum_{i=k}^{k'-1} \Gamma(i+1,k') \alpha_i \xi_{i}
% \right|
% < \frac{\delta}{2}.
% \end{equation}
% Plugging \eqref{eq: lower obund for abel proof} and \eqref{eq:shifted-noise-bound} into \eqref{eq: induction for abel proof}  
% yields
% \[
% d_{k'}
% >
% \delta - \frac{\delta}{2}
% >
% 0
% \qquad
% \forall \, k' \in \{k,\dots,t_{n+1}\} .
% \]
% for every $k' \in \{k,\dots,t_{n+1}\}$, which proves \eqref{eq: gap remains strictly positive}.

Finally, 
% we lower-bound the probability of $\mathcal E^l_{t_n}(s,a)$.
% We observe that 
the complement event $\overline{\mathcal E^l_{t_n}(s,a)}$
% implies that for some time $k \ge t_{n}$,
% \[
% \bigl| \Xi(t_n, k \mid s,a)\bigr|
% \ge \frac{\delta}{8} ,
% \]
% or equivalently
satisfies
\[
% \left\{\sup_{k\ge t_n}|\Xi(t_n,k\mid s,a)|\ge \frac{\delta}{8}\right\}
\overline{\mathcal E^l_{t_n}(s,a)}
\equiv
\bigcup_{t = t_n}^\infty
\left\{
\max_{t_n\le k\le t}|\Xi(t_n,k\mid s,a)|
\ge \frac{\delta}{8}
\right\} \! .
\]
If $t_n \ge K_\xi$,
% \autoref{thm: properties noise gap} indicates that there exists an almost surely finite time $K_\xi$ such that
the sequence $\big( \Xi(t_n, k \mid s,a)\big)_{t_n\le k \le t}$ is a square-integrable martingale adapted to the filtration $(\mathcal F_{k})_{t_n \le k\le t}$.
Hence, Doob's martingale inequality and the monotone convergence of conditional probabilities yield
\begin{align}
\nonumber
\mathbb P\!\left(
\overline{\mathcal E^l_{t_n}(s,a)}
\,\middle|\,
\mathcal F_{t_n},\pi_{t_n}
\right)
&=
\lim_{t \to \infty}
\mathbb P\!\left(
\max_{t_n \le k \le t}| \Xi(t_n, k\mid s,a) |
\ge \frac{\delta}{8}
\,\middle|\,
\mathcal F_{t_n},\pi_{t_n}
\right) \! .
% \\
% \label{eq:hold-doob-trunc}
% &\le
% \frac{64}{\delta^2}
% \limsup_{m\to\infty}
% \mathbb E\!\left[M_m^2\mid \mathcal F_{t_n},\pi_{t_n}\right].
\\
\label{eq: prob of lock in in proof}
&\le
\lim_{t \to \infty}
\frac{64}{\delta^2}
\mathbb E\!\left[\Xi(t_n,t\mid s,a)^2 \mid \mathcal F_{t_n}, \pi_{t_n} \right] \! .
\end{align}
% So if $t_n \ge K_\xi$,
% % \autoref{thm: properties noise gap} indicates that there exists an almost surely finite time $K_\xi$ such that
% the sequence $\big\{\Xi(t_n, k \mid s,a)\big\}_{t_n\le k \le t}$ is a square-integrable martingale adapted to the filtration $\{\mathcal F_{k}\}_{t_n \le k\le t}$.
% Hence, Doob's martingale inequality yields
% \begin{align}
% \label{eq: prob of lock in in proof}
% \mathbb P\!\left(
% \max_{t_n \le k \le t} \bigl| \Xi(t_n, k \mid s,a)\bigr|
% \ge \frac{\delta}{8}
% \,\middle|\,
% \mathcal F_{t_n}, \pi_{t_n}
% \right)
% \le
% \frac{64}{\delta^2}
% \mathbb E\!\left[\Xi(t_n,t\mid s,a)^2\mid \mathcal F_{t_n}, \pi_{t_n} \right] \! .
% \end{align}
% Using the orthogonality of martingale increments and the variance bound from \eqref{eq:xi-var}, we obtain
Using the tower property and \autoref{thm: properties noise gap}, we obtain
\begin{align*}
\mathbb E\!\left[\Xi(t_n, t \mid s,a)^2\mid \mathcal F_{t_n}, \pi_{t_n} \right]
&=
\sum_{i=t_n}^{t-1} \alpha_{i}^2\,
\mathbb E\!\left[\xi_{i}^2\mid \mathcal F_{t_n}, \pi_{t_n} \right]
\\
&=
\sum_{i=t_n}^{t-1} \alpha_{i}^2\,
\mathbb E\!\left[
\mathbb E\!\left[\xi_{i}^2\mid \mathcal F_{i}, \pi_{t_n}\right]
\middle|\mathcal F_{t_n} , \pi_{t_n}
\right]
\\
&\le
c_\xi \sum_{i=t_n}^{t-1} \alpha_{i}^2
% \\
% &\le
% c_\xi \sum_{i=t_n}^{\infty} \alpha_{i}^2 .
\end{align*}
Substituting this back in \eqref{eq: prob of lock in in proof} proves \eqref{eq:probability-small-remaining-noise} because
\[
\mathbb P\!\left(
\overline{\mathcal E^l_{t_n}(s,a)}
\,\middle|\,
\mathcal F_{t_n}, \pi_{t_n}
\right)
\le
\frac{64 c_\xi}{\delta^2}\sum_{i=t_n}^{\infty} \alpha_{i}^2 .
\]
Since $n$, $(s,a)$, and $k$ were chosen arbitrarily from their respective sets, the result holds uniformly.
\end{proof}

% In other words, for sufficiently large $k$, whenever a gap $d^b_k$ reaches a fixed $\delta > 0$, it remains positive with large probability. 

%%%%%%%%%%%%%%%%%%%%%%%%%%%%%%%%%%%%%%%%%%%%%%%%%%%%%%%%%%%%%%%
\paragraph{Global event for all gaps to remain strictly positive}
% so that none of them become greedy up to and including the transition time $t_{n+1}$.
Fix an $n \in \mathbb Z_{\ge 0}$,
let $L := |\mathcal B_{\pi_{t_n}}|$,
and assume that $L>0$, otherwise \autoref{thm: fixed prob that bad actions dont become greedy} is immediate.
% We now apply the three sequential events to every pair in $\mathcal B_{\pi_{t_n}}$.
Conditioned on $\{\mathcal F_{t_n}, \pi_{t_n}\}$, enumerate
\[
\mathcal B_{\pi_{t_n}} = \big\{ (s_1,a_1), \dots, (s_L,a_L) \big\} .
\]
We first apply the $M$-step seed block of \autoref{thm: event to reach a_k margin} to $(s_\ell,a_\ell)$ at time $b_\ell := t_n + (\ell-1)M$ for each $\ell \in \{1,\dots,L\}$.
At the end of this seed phase, at time $T_n := t_n + LM$, we apply the growth event of \autoref{thm: from a_k margin to delta} simultaneously to all pairs. The lock-in event of \autoref{thm: event noise remains small} is enforced throughout.
This yields the global events
\begin{align}
\label{eq: def global events}
\mathcal E^s_{t_n} &:= \bigcap_{\ell=1}^{L} \mathcal E^s_{b_\ell}(s_\ell, a_\ell),
\qquad
\mathcal E^g_{t_n} := \bigcap_{\ell=1}^{L} \mathcal E^g_{T_n}(s_\ell, a_\ell),
\qquad
\mathcal E^l_{t_n} := \bigcap_{\ell=1}^{L} \mathcal E^l_{t_n}(s_\ell, a_\ell).
\end{align}

% We first apply the $M$-step seed block from \autoref{thm: event to reach a_k margin} to pair $(s_\ell,a_\ell)$
% at time $b_\ell := t_n + (\ell-1) M$ for $\ell \in \{1, \dots, L\}$.

% At the end of the seed phase, i.e. at time $T_n := t_n + L M$,
% we apply the $N(T_n)$-step growth block of \autoref{thm: from a_k margin to delta} to each pair.

% Finally, the global lock-in event of \autoref{thm: event noise remains small} ensures

% % to keep them strictly positive until time $t_{n+1}$.

% Altogether, we obtain the global seed, growth and lock-in events
% \begin{align}
% \label{eq: def global events}
% % \label{eq: def global seed event}
% \mathcal E^s_{t_n} &:= \bigcap_{\ell=1}^{L} \mathcal E^s_{b_\ell}(s_\ell, a_\ell) ,
% % \\
% \qquad
% % \label{eq: def global grow event}
% \mathcal E^g_{t_n} &:= \bigcap_{\ell=1}^{L} \mathcal E^g_{T_n}(s_\ell, a_\ell) ,
% % \\
% \qquad
% % \label{eq: def global hold event}
% \mathcal E^l_{t_n} &:= \bigcap_{\ell=1}^{L} \mathcal E^l_{t_n}(s_\ell, a_\ell) .
% \end{align}

The gap of a pair $(s_\ell, a_\ell)$ may remain zero while $k \le b_\ell$.
During this period, the policy $\pi_k$ may be resampled among greedy actions, and the tie can be broken in favour of $(s_\ell, a_\ell)$.
To exclude this behaviour, we introduce a global tie-breaking event that prevents any ``bad'' action from being selected while its gap is zero:
\begin{align}
\label{eq: def global tie event}
\mathcal E^t_{t_n} :=
\bigcap_{k=t_n}^{\min \{ T_n-1 , t_{n+1} \}}
\left\{
\forall \, s \in \mathcal S : 
(s, \pi_k(s)) \notin \mathcal B_{\pi_{t_n}}
\right\} .
\end{align}

% We now show that, on $\mathcal E^t_{t_n} \cap \mathcal E^s_{t_n} \cap \mathcal E^g_{t_n} \cap \mathcal E^l_{t_n}$
% no pair in $\mathcal B_{\pi_{t_n}}$ is selected up to and including time $t_{n+1}$.
% the greedy policy at time $t_{n+1}$ does not correspond to any pair in $\mathcal B_{\pi_{t_n}}$.

\begin{lemma}
\label{thm: event that yields positive gaps}
Consider the setting of \autoref{thm: fixed prob that bad actions dont become greedy} and let $K_N$ be the finite time obtained in \autoref{thm: growth window}.
For every index $n \in \mathbb Z_{\ge 0}$ such that $t_n \ge K_N$,
the joint event $\mathcal E^t_{t_n} \cap \mathcal E^s_{t_n} \cap \mathcal E^g_{t_n} \cap \mathcal E^l_{t_n}$ guarantees that
% the greedy policy $\pi_k$ never coincides with a bad action, 
% meaning
% for every time $k \in \{t_n, \dots, t_{n+1}\}$
% and every state $s \in \mathcal S$,
\begin{align}
\label{eq: proof combined event}
(s,\pi_{k}(s)) \notin \mathcal B_{\pi_{t_n}}
\qquad
\forall \, s \in \sset, \
\forall \, k \in \{t_n, \dots, t_{n+1}\} .
\end{align}
\end{lemma}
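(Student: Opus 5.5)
We split the interval $\{t_n,\dots,t_{n+1}\}$ into the seed phase $\{t_n,\dots,T_n-1\}$ and the remaining times $\{T_n,\dots,t_{n+1}\}$. If $t_{n+1} < T_n$, only the first phase is relevant. On the seed phase, $\mathcal E^t_{t_n}$ gives \eqref{eq: proof combined event} directly, but only up to $\min\{T_n-1,t_{n+1}\}$. So the work is to show that, for each $\ell$, the gap $d_k(s_\ell,a_\ell)$ is strictly positive for every $k\in\{\min\{T_n,t_{n+1}\},\dots,t_{n+1}\}$. A strictly positive gap means $q_k(s_\ell,\pi_{t_n}(s_\ell))>q_k(s_\ell,a_\ell)$, so $a_\ell$ is not greedy and $\pi_k(s_\ell)\neq a_\ell$. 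Since every pair of $\mathcal B_{\pi_{t_n}}$ is some $(s_\ell,a_\ell)$, this yields \eqref{eq: proof combined event}.

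First I fix $\ell$ and track $d_k:=d_k(s_\ell,a_\ell)$ from $t_n$. Before $b_\ell$, nothing guarantees positivity. However, since $\pi_{t_n}$ is greedy at $t_n$, we have $d_{t_n}\ge 0$. I expect $d_k\ge 0$ to persist up to $b_\ell$, or else $d_k$ has already reached $\delta$. To organise this, I would use a case analysis on the first time $k\ge t_n$ with $d_k\ge\delta$.

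\textbf{Case A:} such a time exists before $t_{n+1}$. Then \autoref{thm: event noise remains small}, on $\mathcal E^l_{t_n}(s_\ell,a_\ell)$, keeps $d_{k'}>0$ thereafter up to $t_{n+1}$.

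\textbf{Case B:} $d_k<\delta$ throughout. If $d_{b_\ell}\ge 0$ (with $b_\ell<t_{n+1}$), \autoref{thm: event to reach a_k margin} on $\mathcal E^s_{b_\ell}$ gives $d_{k'}>c_sA(b_\ell,k')>0$ for $k'\le b_\ell+M$. In particular, $d_{b_\ell+M}> c_s M\rho_\alpha\alpha_{b_\ell}$ by comparability, since $M\le 1/\alpha$ for large indices.

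Next, between $b_\ell+M$ and $T_n$ the pair is not forced, and this is the main obstacle. I need the gap to stay above $c_s\rho_\alpha M\alpha_{T_n}$ while the other seeds run, and to stay nonnegative before $b_\ell$. My plan is to reuse the abel/summation-by-parts estimate of \autoref{thm: event noise remains small}. Unrolling \eqref{eq: def gap evolution} from any reference time $k_0$ gives
\[
d_{k'}\ge \Gamma(k_0,k')d_{k_0}+(1-\Gamma(k_0,k'))d_\pi - 2\sup_m|\Xi(k_0,m)| .
\]
The drift term pulls toward $d_\pi\ge d_{\min}$. I would try to bound the noise with the lock-in event, but $\delta/4$ is too coarse compared with $O(\alpha_k)$. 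So I would instead argue directly within the seed phase: on the other pairs' seed blocks, $(s_\ell,\cdot)$ is either not updated (initial-visit with a different start) or is updated in a direction covered by $\mathcal E^\pm$. During other states' seeds, initial-visit updates touch only the seeded pair, so $d_k$ is literally frozen. For the same state $s_\ell$, I would check that forcing $(s,\pi(s))$ upward or $(s,a_{\ell'})$ only, with $a_{\ell'}\ne a_\ell$, leaves $d_k(s_\ell,a_\ell)$ nondecreasing, or again unchanged. This freezing observation is the crux and replaces a probabilistic bound. It gives $d_{T_n}\ge d_{b_\ell+M}\ge c_s\rho_\alpha M\alpha_{b_\ell}\ge c_s\rho_\alpha M\alpha_{T_n}$ by monotone $\alpha$. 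It also gives $d_k\ge d_{t_n}\ge0$ for $k\le b_\ell$.

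Finally, at $T_n$ I apply \autoref{thm: from a_k margin to delta} on $\mathcal E^g_{T_n}(s_\ell,a_\ell)$ (valid since $T_n\ge t_n\ge K_N$). This gives positivity until the gap hits $\delta$ within $N(T_n)$ steps, or until $t_{n+1}$. After that, Case A and lock-in take over. Combining the three regimes gives $d_k>0$ on $\{b_\ell+1,\dots,t_{n+1}\}$. Together with $\mathcal E^t_{t_n}$ on the earlier times, this concludes the proof for every $\ell$ simultaneously.
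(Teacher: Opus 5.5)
Your proposal is correct and follows essentially the same route as the paper: $\mathcal E^t_{t_n}$ covers the seed phase, and the initial-visit ``freezing'' observation shows that other seed blocks cannot push $d_k(s_\ell,a_\ell)$ below its value at the start of that block. Positivity up to $t_{n+1}$ then follows from the seed lemma at $b_\ell$, the growth lemma at $T_n$, and lock-in once $\delta$ is reached. One small fix: your bound $d_{b_\ell+M}>c_s\rho_\alpha M\alpha_{b_\ell}$ via comparability needs $M\le 1/\alpha_{b_\ell}$, which $t_n\ge K_N$ does not ensure, but monotonicity alone gives $A(b_\ell,b_\ell+M)\ge M\alpha_{T_n}\ge\rho_\alpha M\alpha_{T_n}$, as the paper uses.
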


\begin{proof}
Fix an $n \in \mathbb Z_{\ge 0}$ such that $t_n \ge K_N$, and let $\pi := \pi_{t_n}$.
For each pair $(s,a) \in \mathcal B_\pi$, define the sequence $\big( d_k(s,a) \big)_{k=t_n}^{t_{n+1}}$ as in \autoref{def: gap variables}.
% We track the evolution of a specific gap $d_k(s_\ell, a_\ell)$ across three phases.
We show that, on the joint event, no pair in $\mathcal B_{\pi}$ is selected up to and including time $t_{n+1}$.

Fix a pair $(s_\ell,a_\ell) \in \mathcal B_\pi$.
At time $t_n$, all gaps are non-negative as $\pi_{t_n}$ is greedy
with respect to $q_{t_n}$.
Before the seed block of $(s_\ell,a_\ell)$ starts, earlier seed blocks
cannot decrease its gap.
Indeed, if the seed block of a pair $(s_j,a_j)$ with $j<\ell$ updates
the pair $(s_j,a)$, then the case distinction in
\eqref{eq: def good prefix} gives
$(s_j,a) \neq (s_\ell,a_\ell)$.
Thus, if $s_j \neq s_\ell$, the gap $d_k(s_\ell,a_\ell)$ is unchanged;
if $s_j = s_\ell$ and $a \neq \pi(s_\ell)$, it is also unchanged; 
and if $s_j=s_\ell$ and $a=\pi(s_\ell)$, it increases, as in
\eqref{eq: final lower bound case 1} and \eqref{eq: increase in d^b case B}.
Hence, by time $b_\ell$, either the gap has already reached $\delta$,
or
\[
0 \le d_{b_\ell}(s_\ell,a_\ell) < \delta .
\]

If the gap has already reached $\delta$, then
$\mathcal E^l_{t_n}(s_\ell,a_\ell)$ keeps it strictly positive up to
time $t_{n+1}$ by \autoref{thm: event noise remains small}.
Otherwise, applying \autoref{thm: event to reach a_k margin} at time
$b_\ell$ on $\mathcal E^s_{b_\ell}(s_\ell,a_\ell)$ yields
\[
d_k(s_\ell,a_\ell) > c_s A(b_\ell,k) > 0
\qquad
\forall \, b_\ell < k \le \min\{ \tau^s_{b_\ell}(s_\ell,a_\ell), b_\ell+M, t_{n+1} \}.
\]
Thus during its seed block, the gap either reaches $\delta$, or, at the end of the block,
\[
d_{b_\ell+M}(s_\ell,a_\ell)
> c_s A(b_\ell,b_\ell+M)
\ge c_s M \alpha_{b_\ell+M}
\ge c_s M \alpha_{T_n}
\ge c_s \rho_\alpha M \alpha_{T_n}.
\]
Here we used that $(\alpha_k)_{k\ge0}$ is non-increasing and that
$\rho_\alpha \in (0,1]$.

Later seed blocks cannot decrease this gap by the same argument as above. Therefore, at time $T_n$, either the gap has already reached $\delta$, in which case the lock-in event keeps it strictly positive up to $t_{n+1}$, or else
\[
d_{T_n}(s_\ell,a_\ell)
> c_s \rho_\alpha M \alpha_{T_n}.
\]
In the latter case. Since $T_n \ge t_n \ge K_N$,
applying \autoref{thm: from a_k margin to delta} on $\mathcal E^g_{T_n}(s_\ell,a_\ell)$
ensures that the gap remains strictly positive until it reaches $\delta$,
or until $t_{n+1}$ if the transition occurs first.
If the gap reaches $\delta$ before $t_{n+1}$, the lock-in event again keeps it strictly positive up to $t_{n+1}$.

Consequently, for every $(s,a) \in \mathcal B_\pi$ and every
$k \in \{t_n,\dots,t_{n+1}\}$, either the gap $d_k(s,a)$ is zero during the seed phase, in which case $\mathcal E^t_{t_n}$ prevents the policy from selecting $(s,a)$, 
or the gap is strictly positive,
in which case $(s,a)$ cannot be greedy because
\[
q_k(s,\pi_k(s))
\ge q_k(s,\pi(s))
> q_k(s,a) .
\]
% If $t_{n+1}<T_n$, this conclusion follows from
% $\mathcal E^t_{t_n} \cap \mathcal E^s_{t_n} \cap \mathcal E^l_{t_n}$.
% If $t_{n+1}\ge T_n$, it follows from $\mathcal E^g_{t_n} \cap \mathcal E^l_{t_n}$.
% Therefore, combining the events yields
% \[
% (s,\pi_k(s)) \notin \mathcal B_{\pi_{t_n}}
% \qquad
% \forall\, s \in \mathcal S , \
% \forall\, k \in \{t_n,\dots,t_{n+1}\} .
% \]
% This proves the claim.
Either way, the joint event $\mathcal E^t_{t_n} \cap \mathcal E^s_{t_n} \cap \mathcal E^g_{t_n} \cap \mathcal E^l_{t_n}$ guarantees that
\[
(s,\pi_k(s)) \notin \mathcal B_{\pi_{t_n}}
\qquad
\forall\, s \in \mathcal S , \
\forall\, k \in \{t_n,\dots,t_{n+1}\} .
\]
which proves \eqref{eq: proof combined event}.
\end{proof}

%%%%%%%%%%%%%%%%%%%%%%%%%%%%%%%%%%%%%%%%%%%%%%%%%%%%%%%%
\paragraph{Proof of \autoref{thm: fixed prob that bad actions dont become greedy}}

% \begin{proof}[\autoref{thm: fixed prob that bad actions dont become greedy}]
Define the constants
\begin{align}
p_t &:= \beta^{|\mathcal S| } \in (0,1) ,
\\
p_s &:= \sigma_{\min} \frac{d_{\min}^2}{d_{\min}^2 + 16 c_v} \in (0,1) ,
\\
p_g
&:=
1-|\mathcal S\times\mathcal A|\,
\frac{8 c_\xi}{c^2\rho_\alpha^2}
\sum_{j=0}^{\infty}\frac{2^j}{(M+2^j)^2} ,
\\
p_{\overline l} &:= 
|\mathcal S\times\mathcal A|
\frac{64 c_\xi}{\delta^2}\sum_{i=t_n}^\infty \alpha_i^2 .
\end{align}
Since $\lim_{M \to \infty} \sum_{j \ge 0} 2^j/(M+2^j)^2 = 0$, we can choose $M \in \mathbb N$ sufficiently large such that $p_g \ge 1/2$.
The Robbins--Monro conditions in \autoref{asp: standing}.(1) also ensure that there exists a \(K_l<\infty\) such that $p_{\overline l} \le \frac14 ( p_t p_s ) ^{|\mathcal S\times\mathcal A| M}$ for all \(t_n \ge K_l\).

Fix an index $n \in \mathbb Z_{\ge 0}$ such that $t_n \ge \max\{K_\xi, K_N, K_l\}$
where $K_\xi$ and $K_N$ be the finite times obtained in \autoref{thm: properties noise gap} and \autoref{thm: growth window}, respectively.
Let $L := |\mathcal B_{\pi_{t_n}}|$ and assume that $L>0$, otherwise \autoref{thm: fixed prob that bad actions dont become greedy} is immediate.

From \autoref{thm: event that yields positive gaps} and $\mathcal E^t_{t_n} \cap \mathcal E^s_{t_n} \in \mathcal F_{T_n}$, it follows that
\begin{align}
\nonumber
&\mathbb P \big(
\forall \, s \in \sset, \
\forall \, k \in \{t_n, \dots, t_{n+1}\}
:
(s,\pi_{k}(s)) \notin \mathcal B_{\pi_{t_n}}
\mid \mathcal F_{t_n}, \pi_{t_n}
\big)
\\
\nonumber
&\qquad\ge
\mathbb P ( \mathcal E^t_{t_n} \cap \mathcal E^s_{t_n} \cap \mathcal E^g_{t_n} \cap \mathcal E^l_{t_n} \mid \mathcal F_{t_n}, \pi_{t_n} )
\\
\nonumber
&\qquad\ge
\mathbb P \bigl(\mathcal E^t_{t_n} \cap \mathcal E^s_{t_n} \cap \mathcal E^g_{t_n} \mid \mathcal F_{t_n} , \pi_{t_n} \bigr) 
- \mathbb  P\bigl(\overline{\mathcal E^l_{t_n}} \mid \mathcal F_{t_n} , \pi_{t_n} \bigr)
\\
\label{eq: partial decomposition of the joint probabilities}
&\qquad= \mathbb E \left[ \mathbf 1_{\mathcal E^t_{t_n} \cap \mathcal E^s_{t_n}} \mathbb P(\mathcal E^g_{t_n} \mid \mathcal F_{T_n}) \mid \mathcal F_{t_n}, \pi_{t_n} \right] - \mathbb P\bigl(\overline{\mathcal E^l_{t_n}} \mid \mathcal F_{t_n} , \pi_{t_n} \bigr)
\end{align}
Given that $T_n > t_n \ge \max \{K_\xi, K_N\}$, we obtain from \autoref{thm: from a_k margin to delta} and our specific choice of $M$ that
\begin{align*}
\mathbb P(\mathcal E^g_{t_n} \mid \mathcal F_{T_n}) 
&\ge 1 - \sum_{(s,a) \in \mathcal B_{\pi_{t_n}}} \mathbb P \bigl( \overline{\mathcal E^g_{T_n}(s,a)} \mid \mathcal F_{T_n} \bigr)
\\
&\ge 1 - |\sset \times \aset| \frac{8 c_\xi}{c^2 \rho_\alpha^2} \sum_{j = 0}^{\infty} \frac{2^j}{(M+ 2^j)^2}
= p_g \ge \frac12 .
\end{align*}
Further, given that $t_n \ge \max \{K_\xi, K_l\}$, we obtain from \autoref{thm: event noise remains small} that
\begin{align*}
\mathbb P \bigl( \overline{\mathcal E^l_{t_n}} \mid \mathcal F_{t_n} , \pi_{t_n} \bigr)
&\le
\sum_{(s,a) \in \mathcal B_{\pi_{t_n}}} \mathbb P \bigl(\overline{\mathcal E^l_{t_n}(s,a)} \mid \mathcal F_{t_n}, \pi_{t_n} \bigr)
\\
&\le
|\mathcal S\times\mathcal A|
\frac{64 c_\xi}{\delta^2}\sum_{i={t_n}}^\infty \alpha_i^2
=
p_{\overline l}
\le
\frac14 ( p_t p_s)^{|\mathcal S \times \mathcal A| M} .
\end{align*}
Thus we can further bound \eqref{eq: partial decomposition of the joint probabilities} by
\begin{multline}
\label{eq: probability lower bound intermediary}
\mathbb P \big(
\forall \, s \in \sset, \
\forall \, k \in \{t_n, \dots, t_{n+1}\}
:
(s,\pi_{k}(s)) \notin \mathcal B_{\pi_{t_n}}
\mid \mathcal F_{t_n}, \pi_{t_n}
\big)
\\
\ge
p_g \, \mathbb P(\mathcal E^t_{t_n} \cap \mathcal E^s_{t_n} \mid \mathcal F_{t_n}, \pi_{t_n}) - p_{\overline l} .
\end{multline}
We now lower-bound $\mathbb P(\mathcal E^t_{t_n} \cap \mathcal E^s_{t_n} \mid \mathcal F_{t_n}, \pi_{t_n})$ by sequentially conditioning over $\{t_n, \dots, T_n-1\}$.
For this, define
\[
\ell(k) := 1 + \left\lfloor \frac{k-t_n}{M} \right\rfloor \in \{1,\dots,L\} ,
\]
so that \(k\) lies in the \(M\)-step seed block of pair \((s_{\ell(k)},a_{\ell(k)})\).
We decompose events \(\mathcal E^s_{t_n}\) and \(\mathcal E^t_{t_n}\) into one-step events as follows. For $k \in \{t_n,\dots,T_n-1\}$, let
\[
\mathcal E^s_{t_n}(k)
:=
\left\{
u_{k}(s_{\ell(k)},\pi_{t_n}(s_{\ell(k)}))=1,\ 
v_{k}(s_{\ell(k)},\pi_{t_n}(s_{\ell(k)})) \ge -\dfrac{d_{\min}}{4}
\right\}
\]
if $q_{\pi_{t_n}} \big(s_{\ell(k)},\pi_{t_n}(s_{\ell(k)}) \big) - q_{b_{\ell(k)}} \big(s_{\ell(k)},\pi_{t_n}(s_{\ell(k)}) \big) \ge d_{\min}/2$, 
and
\[
\mathcal E^s_{t_n}(k)
:=
\left\{
u_{k}(s_{\ell(k)},a_{\ell(k)})=1,\ 
v_{k}(s_{\ell(k)},a_{\ell(k)}) \le \dfrac{d_{\min}}{4}
\right\}
\]
otherwise.
Similarly, define
\[
\mathcal E^t_{t_n}(k)
:=
\left\{
\forall \, s \in \mathcal S : 
(s,\pi_k(s)) \notin \mathcal B_{\pi_{t_n}}
\right\}
\]
if $k \le t_{n+1}$, and
\[
\mathcal E^t_{t_n}(k)
:= \Omega
\]
otherwise,
where $\Omega$ is the certain event.
These one-step events yield the decompositions
\begin{align*}  
\mathcal E^s_{t_n} = \bigcap_{k=t_n}^{T_n-1}\mathcal E^s_{t_n}(k),
\qquad
\mathcal E^t_{t_n} = \bigcap_{k=t_n}^{T_n-1}\mathcal E^t_{t_n}(k).
\end{align*}

Fix a time $k \in \{t_n, \dots, T_n\}$ and consider the joint event
\begin{align}
\label{eq: joint intermediate event}
\bigcap_{i=t_n}^{k-1} \bigl( \mathcal E^t_{t_n}(i)\cap \mathcal E^s_{t_n}(i) \bigr).
\end{align}
We first bound the tie-breaking event. If $k \le t_{n+1}$, then all gaps remain non-negative up to time $k$. A pair $(s,a) \in \mathcal B_{\pi_{t_n}}$ can thus be greedy only if its gap is zero. In that case, $\pi_{t_n}(s)$ is also greedy. So, on \eqref{eq: joint intermediate event}, $\mathcal E^t_{t_n}(k)$ holds whenever the policy $\pi_k$ selects $\pi_{t_n}(s)$ in each such state. By condition \eqref{eq:greedy-pol-tie-breaking} and independence across states,
\[
\mathbb P\!\left(\mathcal E^t_{t_n}(k)\mid \mathcal F_k,\pi_{t_n}\right)
\ge \beta^{|\mathcal S|} .
\]
If $k>t_{n+1}$, then $\mathcal E^t_{t_n}(k)=\Omega$, so the same bound holds. Thus,
\[
\mathbb P\!\left(\mathcal E^t_{t_n}(k)\mid \mathcal F_k,\pi_{t_n}\right) \ge p_t.
\]
For the seed event, the strict exploring starts condition in \autoref{asp: standing}.(3), together with
\eqref{eq:noise-lower}--\eqref{eq:noise-upper}, yields
\[
\mathbb P\!\left(\mathcal E^s_{t_n}(k)\mid \mathcal F_k, \pi_{t_n}\right)
\ge \sigma_{\min}\frac{d_{\min}^2}{d_{\min}^2+16c_v}
= p_s.
\]
Combining both bounds and using the tower property, we obtain
\begin{align*}
\mathbb P\!\left(\mathcal E^t_{t_n}(k) \cap \mathcal E^s_{t_n}(k)\mid \mathcal F_k, \pi_{t_n}\right)
\ge
p_s\,\mathbb P\!\left(\mathcal E^t_{t_n}(k) \mid \mathcal F_k,\pi_{t_n}\right)
\ge p_t p_s.
\end{align*}
Applying this recursively yields
\begin{align*}
&\mathbb P \! \left(
\bigcap_{i=t_n}^{k} \bigl( \mathcal E^t_{t_n}(i) \cap \mathcal E^s_{t_n}(i) \bigr)
\ \middle| \ \mathcal F_{t_n}, \pi_{t_n}
\right)
% \\
% &\qquad\ge
\ge
p_t p_s\,
\mathbb P\!\left(
\bigcap_{i=t_n}^{k-1} \bigl( \mathcal E^t_{t_n}(i)\cap \mathcal E^s_{t_n}(i) \bigr)
\ \middle|\ \mathcal F_{t_n}, \pi_{t_n}
\right) \!.
\end{align*}
Iterating from $t_n$ to $T_n-1=t_n+LM-1$ gives
\[
\mathbb P(\mathcal E^t_{t_n} \cap \mathcal E^s_{t_n} \mid \mathcal F_{t_n}, \pi_{t_n})
\ge (p_t p_s)^{LM}
\ge (p_t p_s)^{|\mathcal S\times\mathcal A|\,M}.
\]

Altogether, define
\begin{align*}
% \label{eq: def lower bound probability} 
p
:=
\frac14
(p_t \, p_s)^{|\mathcal S \times \mathcal A| \, M}
% \\
% \nonumber
=
\frac14
\left(
\beta^{|\mathcal S|}
\sigma_{\min}\frac{d_{\min}^2}{d_{\min}^2+16 c_v}
\right)^{|\mathcal S\times\mathcal A| \, M}
>0  .
\end{align*}
Then for every MC-O-PI solution there exists an almost surely finite $K \ge \max\{K_\xi, K_N, K_l\}$ such that for every transition time $t_n$ satisfying $K \le t_n < \infty$,
\begin{align*}
\mathbb P \big(
\forall \, s \in \sset, \
\forall \, k \in \{t_n, \dots, t_{n+1}\}
:
(s,\pi_{k}(s)) \notin \mathcal B_{\pi_{t_n}}
\mid \mathcal F_{t_n}, \pi_{t_n}
\big)
\ge
p_g (p_t p_s)^{|\mathcal S\times\mathcal A|\,M} - p_{\overline l}
\ge
p .
\end{align*}
Since this bound is uniform in $\pi_{t_n}$, \eqref{eq: porb better action value} follows.
% \end{proof}
\hfill\BlackBox

Altogether, the proof of \autoref{thm: fixed prob that bad actions dont become greedy} relies on three successive events.
These events ensure that stochastic perturbations cannot consistently
undermine the policy improvement driven by the mean-field dynamics
by forcing ``bad'' actions to become greedy.

% use advantage instead of gap?

The lock-in phenomenon is a direct application of Theorem~2.1 in \citep{Borkar2002OnApproximation}:
if the gap variable for a ``bad'' pair reaches a fixed positive margin infinitely often, 
it must eventually remain strictly positive.
However, since decreasing learning rates cause these gaps to be increasingly smaller at initialization,
we introduced a growth event to ensure they still reach this fixed margin.
This event extends the lock-in phenomenon to hold under a milder condition: gaps only need to reach a decreasing margin of size $O(\alpha_k)$ infinitely often.
Still, it requires an additional comparability condition on the learning rates so that, with a fixed probability, once a gap hits the $O(\alpha_k)$ margin, the remaining learning rates are sufficiently large to bring it up to the original fixed margin.
Finally, to guarantee gaps reach the decreasing $O(\alpha_k)$ margin within a fixed horizon, we used a seed event.
The length of this horizon determines the probability of the growth event.
By fixing a sufficiently long horizon, we can drive the probability of the growth event arbitrarily close to one, regardless of the unknown parameters in \eqref{eq: prob from a_k reach delta}.
The only trade-off for this extended horizon is a reduced probability of the seed event.
Yet, since that probability is fixed once the horizon is fixed, the overall probability of improvement remains fixed.

% \textcolor{red}{this extends the lock-in strategy of mcopi}

It is crucial for the seed event that updates use the initial-visit method to guarantee that all gaps initially increase.
% only one action be updated in any state at every step so 
% and that potential correlation in the episode noise does not break \eqref{eq: increase in d^b case B}.
% The structure of the stochastic perturbations $w_k$ is crucial for the seed event.
% Since the stochastic perturbations caused by episode simulations can be adversarially correlated, meaning they could 
% (which is not really an issue because uniform updates require initial-visit anyway in practice)
% to exactly cancel out the update in all but one state-action pair at every step (initial-visit).
However, uniform update distributions are irrelevant for this part of the proof because the strict exploring starts guarantee that we can always construct the seed event.
On the contrary, the growth and lock-in events essentially enforce that the MC-O-PI solution behaves well because it remains close to the mean-field solution.
Thus they rely strongly on the uniformity of the update distributions that ensures the mean-field solution behaves well in the first place.
The update function is, however, irrelevant.
All that is necessary is that the stochastic perturbation $w_k$ is unbiased and that its variance is eventually bounded.

% Ultimately, the proof thus goes through the same
% under 
% initial-visit updates (allowing us to define the seed event),
% Robbins--Monro learning rates $(\alpha_k)$ that satisfy the comparability property,
% lower- and upper-bounded functions $(\mu_k)$ that are uniform over the actions in each state but do not necessarily integrate to one,
% and unbiased perturbations $(w_k)$ with eventually bounded variance.

%%%%%%%%%%%%%%%%%%%%%%%%%%%%%%%%%%%%%%%%%
%%%%%%%%%%%%%%%%%%%%%%%%%%%%%%%%%%%%%%%%%
\subsubsection{Global convergence of MC-O-PI}

Now that we have established that stochastic perturbations cannot consistently cancel the policy improvement of the mean-field, global convergence of MC-O-PI to $\qopt$ is immediate.

\begin{proposition}
\label{thm: convergence mcopi uniform}
    Solutions of initial-visit MC-O-PI, as defined in \eqref{eq: mcopi sa},
    converge almost surely to the optimal action values $\qopt$ under \autoref{asp: standing},
    if
    the learning rates satisfy the comparability assumption (\autoref{asp: additional conditions on learning rate})
    % the update functions are initial-visit,
    and the sampling distributions of initial state-action pairs are uniform over all actions in each state (\autoref{asp: unbiased over policies}).
\end{proposition}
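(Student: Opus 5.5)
We will derive Proposition~\ref{thm: convergence mcopi uniform} from Proposition~\ref{thm: fixed prob that bad actions dont become greedy} through a conditional Borel--Cantelli argument over the transition times. Since $\polset$ is finite, the number of action-value functions $\{q_\pi : \pi \in \polset\}$ is finite. Any chain $\pi^{(0)} \prec \pi^{(1)} \prec \cdots$ of strictly improving policies therefore has length at most $|\polset|$.

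\textbf{Step 1: classify each transition.} For each $n$ with $K \le t_n < \infty$, let $G_n$ be the event in \eqref{eq: porb better action value}. On $G_n$ there are two cases.
\begin{itemize}
\item If $t_{n+1} < \infty$, the greedy policy at $t_{n+1}$ satisfies \eqref{eq: goal of proof stoch improvement}, and $q_{\pi_{t_{n+1}}} \ne q_{\pi_{t_n}}$ by definition of the transition times. Lemma~\ref{thm: strict PIT} then gives $\pi_{t_{n+1}} \succ \pi_{t_n}$.
\item If $t_{n+1} = \infty$, Lemma~\ref{thm: suboptimal blocks finite} shows that $\pi_{t_n}$ is optimal almost surely. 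Then $q_k \to q_{\pi_{t_n}} = \qopt$ by the fixed-target convergence result of \citet{Bertsekas1996Neuro-DynamicProgramming}.
\end{itemize}
So on $G_n$ the process either improves strictly at the next transition or has already converged to $\qopt$.

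\textbf{Step 2: chain the events into runs.} Define the event
\[
H_n := G_n \cap G_{n+1} \cap \dots \cap G_{n+|\polset|}.
\]
If $H_n$ occurs, at most $|\polset|$ strict improvements can happen, so some $t_{n+j}$ with $j \le |\polset|$ must equal $\infty$. By Step~1, $q_k \to \qopt$ on $H_n$.

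To bound the probability of $H_n$, we iterate Proposition~\ref{thm: fixed prob that bad actions dont become greedy} at the successive stopping times $t_{n+j}$. Each conditional probability is at least $p$ given $\mathcal F_{t_{n+j}}$, and $G_{n+j-1} \in \mathcal F_{t_{n+j}}$. For indices where $t_{n+j} = \infty$, the event is taken as trivially satisfied. This yields
\[
\mathbb P(H_n \mid \mathcal F_{t_n}) \ge p^{|\polset|+1} =: p'
\]
whenever $K \le t_n < \infty$. Technically, $K$ is random, so we will replace it by a deterministic threshold $K_0$ and work on the event $\{K \le K_0\}$. We then let $K_0 \to \infty$, using $\mathbb P(K<\infty)=1$. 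This requires checking that the proof of Proposition~\ref{thm: fixed prob that bad actions dont become greedy} really delivers the bound on $\{t_n \ge K_0\} \cap \{K \le K_0\}$ in a conditional form.

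\textbf{Step 3: apply Lévy's extension of Borel--Cantelli.} Consider the outcomes on which $t_n < \infty$ for all $n$, with $t_n \ge K_0$ eventually. Take the disjoint subsequence of blocks $n_m := m(|\polset|+1)$. Since $\sum_m \mathbb P(H_{n_m} \mid \mathcal F_{t_{n_m}}) = \infty$, Lévy's conditional Borel--Cantelli lemma shows that some $H_{n_m}$ occurs almost surely. But $H_{n_m}$ forces a transition time to equal $\infty$, contradicting $t_n < \infty$ for all $n$. Hence almost surely some $t_n = \infty$.

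Once some $t_n = \infty$, Lemma~\ref{thm: suboptimal blocks finite} says the last finite block has an optimal policy. Lemma~\ref{thm: bounded limit sets} with $\polset_\infty$ consisting of optimal policies, all sharing the value $\qopt$, then gives $q_k \to \qopt$ almost surely.

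\textbf{Main obstacle.} The delicate part is Step~2: chaining the conditional lower bound across successive transition times. The constant $K$ is random and not a stopping time, and the conditioning in Proposition~\ref{thm: fixed prob that bad actions dont become greedy} is on $\{\mathcal F_{t_n}, \pi_{t_n}\}$. I would handle this by intersecting with $\{K \le K_0\}$ as described and verifying that every ingredient holds pathwise once $t_n \ge K_0$ on that event. The ingredients to check are the bound $\|q_k\| \le c_{\polset}+1$ behind $K_\xi$, the deterministic thresholds $K_N$ and $K_l$, and the fact that $\mathcal F_{t_{n+1}}$ contains $G_n$.
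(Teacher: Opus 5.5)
Your proposal is correct and is essentially the paper's proof. \autoref{thm: fixed prob that bad actions dont become greedy} gives probability at least $p$ that each transition either strictly improves the policy or is already optimal and final; this is chained over at most $|\mathcal P|$ transitions to force some $t_n=\infty$ almost surely. The only difference is that you conclude with L\'evy's conditional Borel--Cantelli lemma on disjoint blocks, whereas the paper iterates the bound $\mathbb P(t_{n+i|\mathcal P|}<\infty\mid\mathcal F_{t_n})\le(1-p^{|\mathcal P|})^i$.

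The measurability issues you flag are glossed over in the paper as well. First, $G_n$ is not $\mathcal F_{t_{n+1}}$-measurable, since it involves $\pi_{t_{n+1}}$. It does lie in $\sigma(\mathcal F_{t_{n+1}},\pi_{t_{n+1}})$, and that suffices because the proposition's proof gives the bound conditionally on $(\mathcal F_{t_n},\pi_{t_n})$. Second, the random $K_\xi$ is better handled by stopping the noise sums at the first time after a deterministic $K_0$ at which $\|q_k\|>c_{\mathcal P}+1$, which is a genuine stopping time. Conditioning on the non-adapted event $\{K\le K_0\}$ does not work directly. That event is exactly where the stopped and unstopped sums coincide, and its probability tends to one as $K_0\to\infty$.
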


\begin{proof}
Let $(q_k, \pi_k)_{k \ge 0}$ be an MC-O-PI solution with transition times $(t_n)_{n \ge 0}$.
By \autoref{thm: fixed prob that bad actions dont become greedy}, there exists an almost surely finite $K$ such that, with a fixed probability $p$,
\begin{align}
\label{eq: event no bad gaps become greedy}
(s,\pi_k(s)) \notin \mathcal B_{\pi_{t_n}}
\qquad
\forall \, K \le t_n < \infty , \
\forall \, t_n \le k \le t_{n+1} , \
\forall \, s \in \mathcal S .
\end{align}
On this event, if $t_{n+1}<\infty$, then
\begin{align}
\label{eq: goal of proof stoch improvement proof}
q_{\pi_{t_n}} ( s, \pi_{t_{n+1}}(s) )
\ge
q_{\pi_{t_n}} ( s,\pi_{t_n}(s) )
\qquad \forall\, s \in \mathcal S ,
\end{align}
so $\pi_{t_{n+1}}$ improves upon $\pi_{t_n}$ by \autoref{thm: strict PIT}.
If instead $t_{n+1}=\infty$, then $\pi_{t_n}=\pi_*$ by \autoref{thm: suboptimal blocks finite}.
% and $q_k \to q_{\pi_*}$ by \cite[Proposition~4.1, Example~4.3]{Bertsekas1996Neuro-DynamicProgramming}.
Hence, from any $t_n \ge K$, the policy either improves or is already optimal and remains so with probability at least $p$.

Since a sequence of deterministic policies can improve at most $|\mathcal P|-1$ times before reaching optimality, 
the probability of consecutive improvements until the policy becomes optimal and then remains so is at least $p^{|\mathcal P|}$.
Equivalently, for every $n$ with $K\le t_n<\infty$,
\begin{align}
\label{eq:prob-lock-within-N-transitions}
\mathbb P\!\left(
t_{n+|\mathcal P|} < \infty
\mid
\mathcal F_{t_n}
\right)
\le 1 - p^{|\mathcal P|}.
\end{align}

Fix $n \in \mathbb Z_{\ge 0}$ such that $K\le t_n<\infty$. 
Iterating \eqref{eq:prob-lock-within-N-transitions} over blocks of length $|\mathcal P|$ gives, 
for every $i \ge 1$,
\begin{align*}
\mathbb P\big(
t_{n+i|\mathcal P|} < \infty
\mid
\mathcal F_{t_n}
\big)
&=
\mathbb E\!\left[
\mathbf 1_{\{t_{n+(i-1)|\mathcal P|}<\infty\}}
\,
\mathbb P\big(
t_{n+i |\mathcal P|} < \infty
\mid
\mathcal F_{t_{n+(i-1)|\mathcal P|}}
\big)
\ \middle| \
\mathcal F_{t_{n}}
\right]
\\
&\le
(1-p^{|\mathcal P|})\,
\mathbb P (
t_{n+(i-1) |\mathcal P|} < \infty
\mid
\mathcal F_{t_{n}}
)
\\
&\le
\big(1-p^{|\mathcal P|}\big)^i .
\end{align*}
Since $p^{|\mathcal P|} > 0$, the right-hand side converges to zero as $i$ grows to infinity.
Therefore,
\[
\mathbb{P}\!\left(
\bigcap_{i=1}^\infty
\big\{ t_{n+i|\mathcal P|}<\infty \big\}
\;\middle|\;
\mathcal F_{t_n}
\right)
=0.
\]
Thus, almost surely, there exists a finite $i_*$ such that $t_{n+i_*|\mathcal P|} = \infty$.
As a result, there is a transition index $n_* < n + i_* |\mathcal P|$ such that $t_{n_*} < \infty$ and $t_{n_* +1} = \infty$.
It follows that $\pi_{t_{n_*}} = \pi_*$ by \autoref{thm: suboptimal blocks finite} and $q_k \to q_{\pi_*}$ by Proposition~4.1 in \citep{Bertsekas1996Neuro-DynamicProgramming}.
\end{proof}

The key insight of this result is to analyse the sequence of greedy policies rather than the value estimates. This exposed a simple and practical condition for monotone policy improvement of the mean-field dynamics, making the proof a straightforward application of the Policy Improvement Theorem.
The main technical work was to show that the stochastic perturbations cannot persistently obstruct this improvement. 
This policy-level argument avoids some of the delicate value-based arguments required in earlier analyse, such as the extensions from $\gamma<1$ to $\gamma=1$.
% \newpage
\section{Discussion}
\label{sec: discussion}

\autoref{thm: convergence mcopi uniform} showed that initial-visit MC-O-PI converges almost surely to $\qopt$ 
% for a contractive MDP 
when updates are uniform over the actions in each state.
% or, equivalently, when they are uniform over the deterministic policies.
% 
% provided
% that the learning rates satisfy the Robbins--Monro conditions and the comparability property,
% % the update functions are initial-visit,
% that the sampling distributions satisfy strict exploring starts,
% and that the resulting update distributions are uniform over all actions in each state.
% 
This section explores the implications of this result.
We first explain that the uniformity condition effectively requires using initial-visit updates, which does not necessarily slow down convergence compared to first-visit updates.
We then describe how to generalise this result to optimistic policy-iteration algorithms with temporal-difference updates, or when the estimate $q_k$ is stored approximately at every step.
Finally, we argue that some conditions can likely be relaxed because they are not essential for the underlying improvement process.

% as the condition for uniform updates forces the use of initial-visit method
% Even if initial-visits is less data-efficient than first-visits, this does not mean that solutions converge slower to the optimal policy.

%%%%%%%%%%%%%%%%%%%%%%%%%%%%%%%%%%%%%%%%%%%%%%%%%%%%%%%%%%%%%%%%%%%
%%%%%%%%%%%%%%%%%%%%%%%%%%%%%%%%%%%%%%%%%%%%%%%%%%%%%%%%%%%%%%%%%%%
\subsection{Uniform updates effectively force initial-visits}

When the underlying MDP is unknown, using first- or every-visit updates causes the update distribution $\mu_k$ to depend on the unknown transition dynamics induced by policy $\pi_k$.
Therefore, requiring uniform update distributions effectively forces the use of initial-visit updates,
% because we cannot determine how the initial sampling distribution $\sigma_k$ will propagate through the MDP under policy $\pi_k$. 
because by updating only the initial state-action pair of each episode, we ensure $\mu_k$ equals $\sigma_k$ and thus retain control.

This control comes at the cost of data efficiency: the initial-visit method discards within-episode transitions that the first-visit method would reuse. However, more updates per episode do not automatically translate into faster policy convergence.
The extra updates induce a bias in the update distribution that can generate sliding modes in the action-value estimates $(q_k)_{k \ge 0}$ and thereby slow down the convergence of the policy sequence $(\pi_k)_{k \ge 0}$. Ultimately, it is the convergence speed of this policy sequence that truly matters for identifying the optimal behaviour of the MDP.

% describe MDP
To illustrate this, consider the MDP in \autoref{fig: experiment smallest MDP for iv vs fv} with a discount factor $\discount = 0.5$. Action \textit{left} yields a reward of one and \textit{right} yields a reward of zero.
% The best policy $\best{\pol}$ follows the \textit{left} action, whereas the worst policy $\worst{\pol}$ follows the \textit{right} action.
% describe learning algorithm
Under a uniform sampling distribution $\sigma_k$, the intial-visit method updates exactly one state-action pair per episode, whereas the first-visit method updates on average $1.5$ pairs.
With a learning rate $\alpha_k = 1/(k+100)$, the first-visit method also benefits from larger effective steps because it updates state-action pairs more frequently.
% Despite this numerical advantage, experiments show that IV can convergence to $\pi^*$ sooner.

Despite this apparent advantage, the initial-visit method is more stable than first-visit when MC-O-PI is initialised above $\qopt$.
Namely, initial-visit solutions follow straight lines that always point towards better policies.
In contrast, first-visit solutions exhibit a sliding mode that causes high-frequency oscillations in the policy sequence and traps $q_k$ near the boundary for longer.
As a result, first-visit can take longer to settle on the optimal policy.
For instance, when averaging over 1000 solutions starting in $q'$, first-visit ones require on average 43 updates to settle on $\pi_*$, whereas initial-visit ones always remain in $\region(\best{\pi})$.
When starting in $q$, the solutions take 68 and 70 updates to settle on $\pi_*$.

Overall, this experiment illustrates that the initial-visit method can be practical in real-world applications. 
By avoiding sliding modes, it sometimes converges to the optimal policy $\best{\pi}$ as fast as, or faster than, the first-visit method despite fewer updates and smaller effective learning rates.
Besides, the initial-visit method is more reliable for early stopping. Namely, if training stops early, the last greedy policy of an initial-visit solution is typically the best policy encountered so far, whereas the first-visit solution may be oscillating across a decision boundary, making it unclear which policy is best.

% Namely, since the policy sequence transitions almost smoothly instead of oscillating, terminating the algorithm prematurely yields a policy that reliably reflects the best policy found so far. In contrast, stopping a first-visit trajectory early risks catching it across a decision boundary, leaving it unclear which policy is actually best.

% since the initial-visit method experiences no sliding modes, 
% it can converge as fast as, or faster than, the first-visit method to $\best{\pi}$, even with fewer updates and smaller effective learning rates. Furthermore, the absence of sliding mode makes
% , the current policy is a more reliable “best-so-far” policy. So if training stops early, the IV iterate typically reflects the best policy reached so far, while FV may still oscillate across the decision boundary, making it unclear which policy is best.

% this problem is not conthrived
% - happens when overestimate the action values (a priori dont know them)
% - dont know how strong the greedy bias with FV is, this is the region with potential sliding modes

% if we use learning rate $1/n(s,a)$
% actually helps FV because learning rate decreases the greedy bias

\begin{figure}[t!]
    \centering
    \begin{subfigure}[b]{\textwidth}
        \centering
        % \begin{tikzpicture}[
        %     scale=1, auto, node distance=2cm, 
        %     every edge/.style={draw, <-, font=\small},
        %     state/.style={circle, fill=mylightgrey, draw=black, text=black}
        % ]
        %     % Nodes
        %     \node[state] (s1) at (0,0) {$s$};
        %     % Edges
        %     \path[->]
        %         (s1) edge[out=160, in=200, loop, looseness=8, draw=black] node[pos=0.5, left, text=black] {\textcolor{mygrey}{$\best{\pi}(s) :=$} 1} (s1)
        %         (s1) edge[out=20, in=340, loop, looseness=8, draw=black] node[pos=0.5, right, text=black] {0 \textcolor{mygrey}{$=: \worst{\pi}(s)$}} (s1) ;
        %         % (s1) edge[out=145, in=215, loop, looseness=7] 
        %         %     node[pos=0.5, right] {$\best{a}$} 
        %         %     node[pos=0.5, left] {1} 
        %         % (s1)
        %         % (s1) edge[out=35, in=325, loop, looseness=7] 
        %         %     node[pos=0.5, left] {$\worst{a}$} 
        %         %     node[pos=0.5, right] {0} 
        %         % (s1);
        % \end{tikzpicture}
        \includegraphics[width=0.35\linewidth]{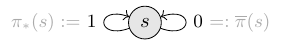}
        \caption{MDP}
        \label{fig: experiment smallest MDP for iv vs fv}
    \end{subfigure}%
    \\
    \vspace{0.5cm}
    % --- Subfigure 1: Streamlines ---
    \begin{subfigure}[b]{0.48\textwidth}
        \includegraphics[width=0.8\linewidth]{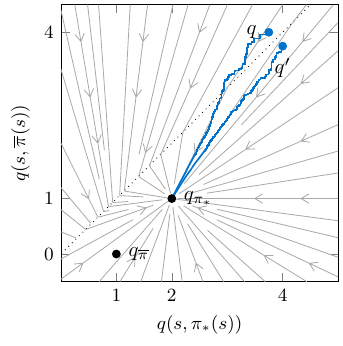}
        \caption{Initial-visit solutions}
        \label{fig: iv versus fv simulations IV}
    \end{subfigure}
    \hfill
    % --- Subfigure 2: Placeholder or Second Simulation ---
    \begin{subfigure}[b]{0.48\textwidth}
        \includegraphics[width=0.8\linewidth]{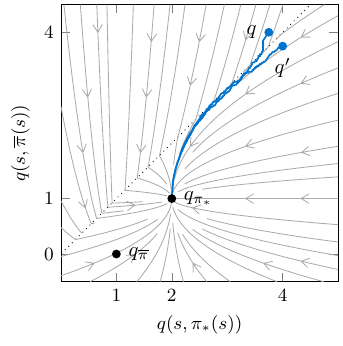}
        \caption{First-visit solutions}
        \label{fig: iv versus fv simulations FV}
    \end{subfigure}
    \caption{By avoiding sliding modes, initial-visit solutions sometimes converge to the optimal policy $\best{\pi}$ faster than first-visit solutions despite fewer updates and smaller effective learning rates. The blue lines are MC-O-PI solutions starting in $q$ or $q'$, whereas the grey lines indicate the mean-field flow.}
    \label{fig: iv versus fv simulations}
\end{figure}

\subsection{Mean-field analysis with temporal difference updates}
\label{sec: discussion temporal difference}

The growth and lock-in events only require unbiased noise with eventually bounded variance, and the seed event is a simple finite-horizon process to guarantee that solutions can overcome the decreasing margin.
These events should therefore also make it possible to derive global convergence of optimistic policy-iteration algorithms with temporal-difference updates from their mean-field dynamics.
The additional difficulty would be to understand how the position of the estimate $q$ and the depth $n$ of the episode simulations affect the target $T_{\pi}^n q$, where $\pi \in P(q)$ and $T_\pi$ is the Bellman operator associated with policy $\pi$.

%%%%%%%%%%%%%%%%%%%%%%%%%%%%%%%%%%%%%%%%%%%%%%%%%%%%%%%%%%%%%%%%%%%
%%%%%%%%%%%%%%%%%%%%%%%%%%%%%%%%%%%%%%%%%%%%%%%%%%%%%%%%%%%%%%%%%%%
\subsection{MC-O-PI with function approximation}
\label{sec: discussion function approximation}

\autoref{thm: convergence mcopi uniform} applies specifically to \textit{tabular} MC-O-PI, where the estimate $q_k$ can be stored exactly.
In practical applications, however, the estimate is approximated, for instance using a neural network.
In this setting, the error between the observed return and the current estimate is then used to update the parameters of the approximation.
% for instance using gradient descent.

With a sufficiently rich approximation method,
the approximation error should only contribute a finite perturbation to the MC-O-PI difference inclusion.
This suggests that \autoref{thm: convergence mcopi uniform}
can be generalised with function approximation using the results in \citep[Section~4.3.3]{Bertsekas1996Neuro-DynamicProgramming} or \citep[Section~5.2]{Kushner2003StochasticApplications}.
%Furthermore, integrating a normalisation trick as in \autoref{sec: normalisation trick} into Monte Carlo Tree Search might accelerate AlphaZero's or MuZero's learning process by mitigating slow sliding modes.

% Finally, \citet{Wagner2013OptimisticResult} shows that policy gradient methods
% % which underlie recent breakthroughs in natural language processing and robotics \citep{Schulman2017ProximalAlgorithms}, 
% can be interpreted as optimistic policy iteration algorithms with function approximation. Extending our results to this setting might
% thus improve the efficiency and performance of these essential methods.

Finally, recent breakthroughs in natural language processing and robotics~\citep{Schulman2017ProximalAlgorithms} rely on policy gradient methods.
\citet{Wagner2013OptimisticResult} shows that some of these methods can be interpreted as optimistic policy iteration algorithms with function approximation.
Extending our results to this setting might thus improve the efficiency and performance of these essential methods.

\subsection{Relaxing conditions}
\label{sec: relaxing conditions}

% - assumption that limit set reduces to a point => this is assumption on solution, not on algorithm

% relax
% - policy samping => update only policy in state that changed, then dont even need event $\mathcal E^t_k$
% - essentially just need comparison prop of leanring rates to ensure reach fixed distance into a region, if can show other way, then just need Borkar lock in, i.e. \autoref{thm: event noise remains small}

% Technically, 
The proof of \autoref{thm: convergence mcopi uniform}
requires a uniform lower-bound on the distribution of the greedy policy at each step, strict exploring starts and non-increasing, comparable learning rates.
However, these conditions are artefacts of the framework of the proof rather than necessary conditions for the underlying policy improvement process.

For instance, the global seed event updates only one state-action pair per step. 
% Yet the greedy policy can be updated for all states at every step.
Yet since the greedy policy is updated for all states at every step, it can change in any state with ties.
Sampling the greedy policy, as described in \eqref{eq:greedy-pol-tie-breaking}, 
therefore prevents
adversarial tie-breaking from always worsening the policy by choosing $\pi_{k+1}(s)=a$
whenever
% if after the $k^{\text{th}}$ update,
$q_{k+1}(s,\pi_k(s)) = q_{k+1}(s,a)$ but $q_{\pi_k}(s,\pi_k(s)) > q_{\pi_k}(s,a)$.
% the gap of a pair in $\mathcal B_\pi$ is zero after an update,
% , and thereby invalidate \autoref{thm: fixed prob that bad actions dont become greedy}.
% (see the discussion from equation \eqref{eq: joint intermediate event} onwards)
Such ties are rare in practice;
% except if the algorithm is initialised in one.
moreover, updating the greedy policy for all states at every step is inefficient.
A more practical method under initial-visit updates is to only recompute the policy in the state that was updated at step $k$.
%by comparing $q_{k+1}(s, \pi_{k}(s))$ with $q_{k+1}(s, a)$ where $u_{k}(s,a) = 1$.
This method simplifies the proof of \autoref{thm: convergence mcopi uniform} as the global tie-breaking event \eqref{eq: def global tie event} is no longer necessary,
given that ties are broken once, right after the transition, 
and remain unchanged until the corresponding seed event makes the policy-worsening action non-greedy.
% until the pair is updated by the corresponding seed event,
% which can only increase the gap and hence break the tie in favour of the currently greedy action.

Further, the strict exploring starts, and the non-increasing and comparability conditions on the learning rates ensure that the stochastic perturbations cannot persistently offset the mean-field dynamics.
Specifically, they guarantee through the seed and growth events
% \eqref{eq: def global seed event}--\eqref{eq: def global grow event}
that
for every policy $\pi \in \mathcal P$,
the stochastic perturbations cannot force the sequence
$(q_k(s,\pi(s)) - q_k(s,a))_{k \ge 0}$ to converge to zero 
on the subsequence of times $\{k \ge 0 : \pi_k = \pi\}$
if $q_{\pi}(s,\pi(s)) > q_{\pi}(s,a)$.
However, given the Robbins--Monro learning rates and the martingale difference noise sequence,
the accumulated stochastic perturbations form a converging martingale sequence, and hence $\lim_{K \to \infty} \sum_{k \ge K} \alpha_k w_k = 0$.
As a result, we expect that there exists another way to establish that the perturbations do not contain enough energy to consistently counteract the mean-field dynamics.
% , likely using an equivalent of the Robbins-Siegmund Theorem.

% If can prove another way, then can use the lock-in phenomenon and the combined stability-ODE method described in Section \ref{sec: limitations of existing methods}.

% %\textcolor{red}{is all this necessary? Could just skip to stating the minimal requirements}
Since the monotone improvement that drives convergence when updates are uniform does not hinge on these conditions, we expect that they can be relaxed.
Nevertheless, some constraints on the update distributions will always remain necessary as they effectively scale the learning rates in the MC-O-PI difference inclusion \eqref{eq: mcopi not yet sri}.
% leading to different learning rates for different state-action pairs.
Namely, convergence to a fixed point requires Robbins–Monro–like conditions on the products $(\lrate_k \update_k)$, not on $(\lrate_k)$ alone.

% To make sure that the effective learning rates for each pair are large enough to avoid premature convergence due to Zeno-like behaviour while gradually decreasing to zero so that noise cannot prevent convergence, it is that the Robbins--Monro conditions will have to be satisfied on the product $(\lrate_k \update_k)$ rather than on the learning rates alone.

To see why the Robbins--Monro conditions must be adjusted, fix a policy $\pi$ and consider the system
\begin{align}
\label{eq: single system}
    q_{k+1}
    &= q_k + \lrate_k \update_k (q_\pi - q_k) + \lrate_k w_k .
\end{align}
Unrolling over $n$ steps yields
\begin{align*}
    q_{k+n}
    = \left( \prod_{i=k}^{k+n-1} (\mathsf{I} -\lrate_i \update_i) \right) q_k 
    + \left( \mathsf{I} - \prod_{i=k}^{k+n-1} (\mathsf{I} -\lrate_i \update_i) \right) q_\pi
    % \\
    + \sum_{j=k}^{k+n-1} \prod_{i=j+1}^{k+n-1} (\mathsf{I} -\lrate_i \update_i) \lrate_j w_j .
\end{align*}
For $q_{k+n}$ to converge to $q_\pi$ as $n$ grows to $\infty$, three conditions must be satisfied for every state $s \in \sset$ and action $a \in \aset(s)$.
First,
\begin{align}
\label{eq: condition infinite visits}
    \sum_{k=0}^{\infty} \update_k(s,a) = \infty ,
    % \qquad \forall \, s \in \sset, \forall \, a \in \aset(s) .
\end{align}
so that the state-action pair is updated infinitely often.
% Therefore, for every state-action pair $(s,a)$, the update distributions $(\update_k)$ must satisfy
Second, 
\begin{align}
    \label{eq: condition infinite effective learning rate}
    \sum_{k=0}^{\infty} \lrate_k \update_k(s,a) = \infty ,
    % \qquad \forall \, s \in \sset, \forall \, a \in \aset(s) .
\end{align}
so that the product $\prod_{i=k}^{k+n-1} (1 - \lrate_i \update_i(s,a))$ decays to zero for increasing $n$.
Finally,
\begin{align}
    \label{eq: summable learning rates outside def}
    \sum_{k=0}^\infty \lrate_k^2 < \infty ,
\end{align}
to ensure the accumulated stochastic perturbations vanish.

% If we imposed $\sum_k \lrate_k^2 \update_k^2 < \infty$ instead, we could choose $\lrate_k = \update_k(s,a) = 1/\sqrt{k}$ for each state-action pair, still satisfy conditions \eqref{eq: condition infinite visits} and \eqref{eq: condition infinite effective learning rate}. However, the noise term in \eqref{eq: single system unrolled} would become
% \begin{align}
%     \sum_{j=k}^{k+n-1} \prod_{i=j+1}^{k+n-1} (1-\frac{1}{i}) \frac{1}{\sqrt{j}} \xi_j
%     &= \sum_{j=k}^{k+n-1} \frac{j}{k+n+1} \frac{1}{\sqrt{j}} \xi_j
%     \\
%     &= \frac{1}{k+n+1} \sum_{j=k}^{k+n-1} \sqrt{j} \xi_j
%     \\
%     &\eqqcolon N_{k,n}
% \end{align}
% We see that
% \begin{align}
%     \E[N_{k,n}^2] = \E \Bigg[ \Big( \frac{1}{k+n+1} \sum_{j=k}^{k+n-1} \sqrt{j} \xi_j \Big)^2 \Bigg]
% \end{align}
% Since the cross terms vanish in expectation, 
% and as a result, $q_k$ would not converge to $q_\pi$
% 
% since $\sum_k \lrate_k^2 = \infty$, the noise would not be almost surely bounded. In other words, the noise alone would be able to pull solution arbitrarily far from $q_\pi$ infinitely often, thereby preventing convergence.
% 
Note that \eqref{eq: condition infinite visits} does not automatically imply \eqref{eq: condition infinite effective learning rate}.
For instance, under \eqref{eq: condition infinite visits} and the Robbins--Monro conditions on the learning rates $(\lrate_k)$, we could choose $\lrate_k = \update_k(s,a) = 1/k$. System \eqref{eq: single system} would then converge prematurely due to Zeno-like behaviour because the products $\prod_{i=k}^{k+n-1} (1-\lrate_i \update_i(s,a))$ would not decay to 0.

Altogether, the minimal requirements on the learning rates and the update distributions are
% for every state $s \in \sset$ and action $a \in \aset(s)$
\begin{align}
    \label{eq: xtn robbins-monro first}
    &\sum_{k =0}^\infty \lrate_k \update_k(s,a) = \infty 
    \qquad \forall \, s \in \sset, \forall \, a \in \aset(s) ,
    \\
    \label{eq: xtn robbins-monro second}
    &\sum_{k =0}^\infty \lrate_k^2 < \infty ,
\end{align}
because \eqref{eq: xtn robbins-monro second} implies that, eventually, $\alpha_k \mu_k < \mu_k$, and \eqref{eq: xtn robbins-monro first} then lower bounds \eqref{eq: condition infinite visits}.
We expect the results from Section~\ref{sec: results} to extend under these milder conditions.

\section{Conclusion}

%\textcolor{red}{mention that our result make more sense / uniform over all state-action pairs automatically forces more frequent updates to states with more actions, which does not really make sense}

 % 1. Finding = what did we find
We established that the mean-field dynamics of Monte Carlo optimistic policy iteration drive trajectories toward monotonically improving policies when updates are uniform over all actions in each state. 
We then proved that the stochastic perturbations cannot systematically counteract this improvement,
and consequently that MC-O-PI converges to the optimal action values when updates are uniform.

% Consequence of the result
This result extends existing convergence guarantees to more practical settings. Unlike previous proofs that required uniform updates across all state-action pairs, our analysis allows different states to be updated at different frequencies.
This relaxation makes the algorithm applicable to large-scale problems where uniform sampling over the joint state-action space is not feasible.
We now know that global convergence remains guaranteed as long as it is possible to sample the action space of each state uniformly.
% Instead, learning can prioritise different states at different times by sampling the initial states of episodes accordingly.
% As long as every state is sampled infinitely often and all actions of each state are updated equally frequently, MC-O-PI eventually converges to the optimal policy and action values. 

% consequence of the method
We extended the lock-in strategy of the combined stability-ODE method to accommodate the discontinuous dynamics of MC-O-PI.
This mean-field approach provided a more intuitive explanation for the convergence mechanism than previous proofs that directly tackled the stochastic process.
More importantly, it opens up new possibilities, beyond contraction-based analysis, to study the convergence of optimistic policy iteration algorithms.

% perspectives
We see three principal directions for future work:
(1) remove the conditions of \autoref{thm: convergence mcopi uniform} that are not necessary for the mean-field improvement property,
% (2) find whether the optimal action values remain globally stable for mean-field solutions of MC-O-PI when updates are non-uniform,
(2) apply this new method to study the convergence of optimistic policy iteration algorithms that rely on temporal difference evaluation,
and (3) generalise our result when value estimates are stored approximately.

\clearpage

% Acknowledgements and Disclosure of Funding should go at the end, before appendices and references

\acks{This research was supported by the MathWorks Studentship.}

% Manual newpage inserted to improve layout of sample file - not
% needed in general before appendices/bibliography.

% \newpage

\appendix

% \newpage
\section*{Appendix}
\label{sec: Appendix}

\paragraph{Proof of \autoref{thm: properties of noise}}

% \begin{proof}
Fix a time $k\ge 0$, a state $s \in \mathcal S$ and an action $a \in \mathcal A(s)$.
Let $\mathcal F_k'$ represent the available information right after selecting the policy $\pi_k$ but before sampling the initial state-action pair, so $\mathcal F_k \subseteq \mathcal F_k'$.

By definition,
\begin{align}
\label{eq: noise def in appendix proof}
w_k(s,a) = u_k(s,a) v_k(s,a) + \big( u_k(s,a)-\mu_k(s,a) \big) \big( q_{\pi_k}(s,a) - q_k(s,a) \big).
\end{align}
Since episode returns are unbiased under initial- or first-visit updates, 
and $u_k(s,a) v_k(s,a) = 0$ on $\{u_k(s,a)=0\}$,
it follows that
\begin{align}
\label{eq: unbiased noise p1}
\mathbb E[ u_k(s,a) v_k(s,a) \mid \mathcal F_k']
=
\mathbb E \big[ 
    \mathbf 1_{\{u_k(s,a)=1\}} \mathbb E[ v_{k}(s,a) \mid \mathcal F_k', u_k(s,a)=1] 
    % +
    % \mathbf 1_{\{u_k(s,a)=0\}} \mathbb E[ v_{k}(s,a) \mid \mathcal F_k', u_k(s,a)=o] 
    \mid \mathcal F_k' \big]
= 0 .
\end{align}
Also, $\mathbb E[u_k(s,a) \mid \mathcal F_k'] = \mu_k(s,a)$, and $\mu_k$ and $q_{\pi_k}-q_k$ are $\mathcal F_k'$-measurable. Hence,
\begin{multline}
\label{eq: unbiased noise p2}
\mathbb E \!\left[
\bigl( u_k(s,a)-\mu_k(s,a) \bigr) \bigl( q_{\pi_k}(s,a)-q_k(s,a) \bigr)
\mid \mathcal F_k'
\right]
\\
=
\big( q_{\pi_k}(s,a)-q_k(s,a) \big) 
\big( \mathbb E\!\left[ u_k(s,a) \mid \mathcal F_k'
\right]
-\mu_k(s,a) \big)
=
0 .
\end{multline}
Summing \eqref{eq: unbiased noise p1} and \eqref{eq: unbiased noise p2} yields $\mathbb E[w_k(s,a) \mid \mathcal F_k'] = 0$.
Applying the tower property then proves \eqref{eq: muw_mds}:
\[
\mathbb E [ w_{k}(s,a) \mid \mathcal F_k ]
= \mathbb E \!\left[ \mathbb E [ w_{k}(s,a) \mid \mathcal F_k'] \mid \mathcal F_k \right]
= 0 .
\]

Finally, since $u_k^2(s,a) \le 1$ and $(u_k(s,a) - \mu_k(s,a))^2 \le 1$,
using $(x+y)^2 \le 2 x^2 + 2 y^2$ on \eqref{eq: noise def in appendix proof} yields
\begin{align*}
w_k^2(s,a) 
&\le 2 v_k^2(s,a) + 2 (q_{\pi_k}(s,a)-q_k(s,a) )^2 
\\
&\le 2 v_k^2(s,a) + 2 ( 2 q_{\pi_k}^2(s,a) + 2 q_k^2(s,a) ) .
\end{align*}
Since all action-values are bounded,
there exists a constant $c_\polset < \infty$ such that $\|q_\pi\|^2 \leq c_\polset$ for every policy $\pi \in \polset$. As a result,
\begin{align*}
\mathbb E \!\big[ w_{k}^2(s,a) \mid \mathcal F_k \big]
&\le
2 c_v + 2 \, \mathbb E \!\left[ ( q_{\pi_k}(s,a)-q_k(s,a) )^2 \mid \mathcal F_k \right]
\\
&\le
2 c_v + 4 c_\polset + 4 \|q_k\|^2
\end{align*}
So \eqref{eq: muw_second_moment} holds for example with $c_w := \max \{ 2 c_v + 4 c_\polset, 4\}$.
% \end{proof}
\hfill \BlackBox

%%%%%%%%%%%%%%%%%%%%%%%%%%%%%%%%%%%%%%%%%%%%%%%%%%%%%%%%%%%%%%
\paragraph{Proof of \autoref{thm: bounded limit sets}}

% \begin{proof}
Let $(q_k)_{k \ge 0}$ and $(\pol_k)_{k \ge 0}$ be solutions of the difference inclusion in \eqref{eq: mcopi sa} with initial condition $q_0$, learning rates $(\lrate_k)_{k \ge 0}$, update distributions $(\update_k)_{k \ge 0}$ and stochastic perturbations $(w_k)_{k \ge 0}$.
    
    By assumption, there exists a finite time $K$ such that
    the sequence $(\pol_k)_{k \ge K}$ only contains policies in $\polset_\infty$.
    % the estimate $q_k$ is updated towards action values $q_{\pi_k}$ where $\pi_k \in \polset_\infty$.
    Thus for all $k \ge K$, each state-action pair of the estimate $q_k$ is updated towards a value $q_{\pol_k}(s,a)$ that satisfies
    \begin{align}
        \min_{\pi \in \polset_\infty} q_\pi(s,a) \leq q_{\pi_k}(s,a) \leq \max_{\pi \in \polset_\infty} q_\pi(s,a) .
    \end{align}
    Therefore, define the sequences $(x_k)_{k \ge K}$ and $(y_k)_{k \ge K}$ as
    \begin{align}
        x_\kpo &= x_k + \lrate_k \big( \update_k(s,a) 
        ( \min_{\pi \in \polset_\infty} q_\pi(s,a) - x_k ) + w_k(s,a) \big) ,
        \\
        y_\kpo &= y_k + \lrate_k \big( \update_k(s,a) ( \max_{\pi \in \polset_\infty} q_\pi(s,a) - y_k ) + w_k(s,a) \big) ,
    \end{align}
    and initialised at $x_K = y_K = q_K(s,a)$. From time $K$ onwards we have
    \begin{align*}
        x_k \leq q_k(s,a) \leq y_k .
    \end{align*}
    Proposition~4.1 and Example~4.3 in \citep{Bertsekas1996Neuro-DynamicProgramming} show that $x_k$ and $y_k$ converge almost surely to $\min_{\pi \in \polset_\infty} q_\pi(s,a)$ and $\max_{\pi \in \polset_\infty} q_\pi(s,a)$, respectively.
    % because the noise satisfies conditions \eqref{eq: muw_mds} and \eqref{eq: muw_second_moment}.
    As a result, we conclude that, almost surely,
    \begin{align*}
        &\liminf_{k \to \infty} q_k(s,a)
        \ge \min_{\pi \in \polset_\infty} q_\pi(s,a)
        % \leq q_k(s,a) 
        \\
        &\limsup_{k \to \infty} q_k(s,a)
        \le \max_{\pi \in \polset_\infty} q_\pi(s,a)
    \end{align*}
    for every state-action pair,
    which proves the first part of the lemma.

    Further,
    given that there are a finite number of deterministic policies, $q_k$ converges almost surely to a bounded set.
    As a result, 
    % that sequence is almost surely bounded, meaning that 
    its norm over all $k \geq 0$ is almost surely finite. So there exists an almost surely finite constant $c_Q$ such that $\sup_{k \geq 0} \|q_k\|_\infty^2 < c_Q$.
% \end{proof}
\hfill \BlackBox

\paragraph{Proof of \autoref{thm: convergence deterministic mcopi}}

% \begin{proof}
Let $(\pi_k)_{k \ge 0}$ be the sequence of policies associated with a solution of \eqref{eq: mcopi sa}
and let $(t_n)_{n \ge 0}$ be the corresponding transition times.
By assumption, the update distributions are uniform over all actions in each state and satisfy exploring starts.
\autoref{thm: improving policies} thus ensures that the sequence of action values $(q_{\pi_k})_{k \ge 0}$ is componentwise non-decreasing.

Since the policy set is finite, the subsequence $(q_{\pi_{t_n}})_{n \ge 0}$ must settle after finitely many transitions.
In other words, there exists a finite index \(N \ge 0\) such that
$t_N < \infty$ and $t_{N+1} = \infty$.
Moreover, policy $\pi_{t_N}$ must be optimal. Otherwise the modified Robbins-Monro and exploring starts conditions would force $q_k$ to converge to $q_{\pi_{t_N}}$, which in turn would cause the policy to improve and thus $t_{N+1}<\infty$ (as in \autoref{thm: suboptimal blocks finite}).

As a result, for every $k \ge t_N$, the solution of \eqref{eq: deterministic mcopi} satisfies
\[
q_{k+1} = q_k + \alpha_k \mu(\sigma_k, \best{\pi}, f_k)( \qopt - q_k) .
\]
The Robbins-Monro and strict exploring starts conditions then guarantee that $q_k$ converges to $\qopt$ \citep[Proposition~4.1]{Bertsekas1996Neuro-DynamicProgramming}.
% \end{proof}
\hfill \BlackBox

% \section{}
% \label{app:theorem}

% % Note: in this sample, the section number is hard-coded in. Following
% % proper LaTeX conventions, it should properly be coded as a reference:

% %In this appendix we prove the following theorem from
% %Section~\ref{sec:textree-generalization}:

% In this appendix we prove the following theorem from
% Section~6.2:

% \noindent
% {\bf Theorem} {\it Let $u,v,w$ be discrete variables such that $v, w$ do
% not co-occur with $u$ (i.e., $u\neq0\;\Rightarrow \;v=w=0$ in a given
% dataset $\dataset$). Let $N_{v0},N_{w0}$ be the number of data points for
% which $v=0, w=0$ respectively, and let $I_{uv},I_{uw}$ be the
% respective empirical mutual information values based on the sample
% $\dataset$. Then
% \[
% 	N_{v0} \;>\; N_{w0}\;\;\Rightarrow\;\;I_{uv} \;\leq\;I_{uw}
% \]
% with equality only if $u$ is identically 0.} 
% \hfill\BlackBox

% \noindent
% {\bf Proof}. We use the notation:
% \[
% P_v(i) \;=\;\frac{N_v^i}{N},\;\;\;i \neq 0;\;\;\;
% P_{v0}\;\equiv\;P_v(0)\; = \;1 - \sum_{i\neq 0}P_v(i).
% \]
% These values represent the (empirical) probabilities of $v$
% taking value $i\neq 0$ and 0 respectively.  Entropies will be denoted
% by $H$. We aim to show that $\fracpartial{I_{uv}}{P_{v0}} < 0$....\\

% {\noindent \em Remainder omitted in this sample. See http://www.jmlr.org/papers/ for full paper.}

\vskip 0.2in
\bibliography{references}

\end{document}